\title{Convergence of Policy Mirror Descent \\
Beyond Compatible Function Approximation}
\author{
Uri Sherman%
\thanks{Blavatnik School of Computer Science, Tel Aviv University; \texttt{urisherman@mail.tau.ac.il}.
}
\and
Tomer Koren%
\thanks{Blavatnik School of Computer Science, Tel Aviv University, and Google Research; \texttt{tkoren@tauex.tau.ac.il}.}
\and
Yishay Mansour%
\thanks{Blavatnik School of Computer Science, Tel Aviv University, and Google Research; \texttt{mansour.yishay@gmail.com}. 
}
}
\begin{document}

\maketitle


\begin{abstract}
    Modern policy optimization methods roughly follow the policy mirror descent (PMD) algorithmic template, for which there are by now numerous theoretical convergence results.
    However, most of these either target tabular environments, or can be applied effectively only when the class of policies being optimized over satisfies strong closure conditions, which is typically not the case when working with parametric policy classes in large-scale environments. 
    In this work, we develop a theoretical framework for PMD for general policy classes where we replace the closure conditions with a strictly weaker variational gradient dominance assumption, and obtain upper bounds on the rate of convergence to the best-in-class policy. Our main result leverages a novel notion of smoothness with respect to a local norm induced by the occupancy measure of the current policy, and casts PMD as a particular instance of smooth non-convex optimization in non-Euclidean space.
\end{abstract}
\section{Introduction}
Modern policy optimization algorithms 
\citep{peters2006policy,peters2008reinforcement,lillicrap2015continuous, schulman2015trust, schulman2017proximal} operate by solving a sequence of stochastic optimization problems, each of which being roughly equivalent to:
\begin{align}\label{def:pmd_step}
    \pi^{k+1}
    \gets 
    \argmin_{\pi\in \Pi}\E_{s \sim \mu^{k}}\sbr{
        \abr[b]{\widehat Q^{k}_s, \pi_s} 
        + \frac1\eta B(\pi_s, \pi_s^k)
    },
\end{align}
where $\mu^k$ is a state probability measure (typically related, or equal to, the occupancy measure of the current policy $\pi^k$) from which sampling is granted through interaction with the environment; $\widehat Q^k$ is an estimate of the action-value function of $\pi^k$, and $B$ is a distance-like function employed to regularize the update so as to not stray too far from $\pi^k$.
The solution to \cref{def:pmd_step} is usually produced by optimizing a parametric neural network model $\pi_\theta$ (known as the actor, or policy network) via multiple steps of stochastic gradient descent, and consequently, the policy class $\Pi$ is the set of policies representable by the model; $\Pi = \cbr{\pi_\theta \mid \theta \in \R^p}$, where $p$ denotes the number of parameters in the network.

Contemporary theoretical analyses of this algorithm \citep{shani2020adaptive,agarwal2021theory,xiao2022convergence,ju2022policy,zhan2023policy, yuan2023loglinear,alfano2023novel} all have their roots in the online Markov decision process (MDP) framework, and roughly build on decomposing \cref{def:pmd_step} state-wise and casting the problem as a collection of independent online mirror descent 
steps~\citep{even2009online}. The disadvantage of such an approach lies in the requirement that the update step be exact (or almost exact) in each state independently, effectively limiting the applicability of such analyses to policy classes that are \emph{complete}, (i.e., $\Pi = \Delta(\cA)^\cS$), or otherwise satisfy strong closure conditions.

Largely, papers that develop convergence upper bounds for algorithms following \cref{def:pmd_step}, commonly known as Policy Mirror Descent (PMD; \citealp{tomar2020mirror,xiao2022convergence,lan2023policy}), fall into two main categories. The first category includes studies that target the tabular setup (e.g., \citealp{geist2019theory,shani2020adaptive,agarwal2021theory,xiao2022convergence,johnson2023optimal,lan2023policy,zhan2023policy}), where no sampling distribution $\mu^k$ is involved (or it has no effect) and updates are performed in a per-state manner. The second category consists of papers that consider parametric policy classes (e.g., \citealp{agarwal2021theory,alfano2022linear,ju2022policy,yuan2023loglinear,alfano2023novel,xiong2024dual}) often building---either directly or indirectly---on the compatible function approximation framework~\citep{sutton1999policy}. 
As such, these works essentially assume that the update in \cref{def:pmd_step} remains ``close'' to the one that would have been performed over the complete policy class (see  \cref{sec:intro_discussion} for further discussion).
This state of affairs is (at least partially) due to the fact that policy gradient methods in the general policy class setting are prone to local optima~\citep{bhandari2024global}, and as a result, structural assumptions are necessary to establish global optimality guarantees.

The present paper aims to establish \emph{best-in-class} convergence of PMD (\cref{def:pmd_step}) for general policy classes, relaxing the stringent closure conditions and assuming instead a \emph{variational gradient dominance} (VGD) condition \citep{bhandari2024global,agarwal2021theory,xiao2022convergence}.
It can be shown that a general form of closure conditions implies VGD and that the converse does not hold, hence it is a strict relaxation of the setup assumptions (see detailed discussion in \cref{sec:intro_discussion,sec:vgd_discussion}).
Our main result features a novel analysis technique that casts \cref{def:pmd_step} as a particular instance of smooth non-convex optimization in a non-Euclidean space, where the smoothness of the objective is w.r.t.~a \emph{local} norm induced by the current policy occupancy measure. Importantly, this approach leads to rates independent of the cardinality of the state space.
In contrast, previous results that establish convergence of gradient based methods (though not of PMD; e.g., \citealp{agarwal2021theory,bhandari2024global,xiao2022convergence}) that are applicable in our setting, lead to bounds that depend on the size of the state-space, thus rendering them useful only in tabular setups.

\subsection{Main results}
We consider the problem of finding an (approximately) optimal policy in a
discounted MDP $\cM = (\cS, \cA, \P, r, \gamma, \rho_0)$ within a general policy class $\Pi\subset \Delta(\cA)^{\cS}$.
We assume the action set is finite 
$A\eqq |\cA|$, and denote the effective horizon by $H\eqq \frac{1}{1-\gamma}$.
Our goal is to minimize the value $V(\pi)$, defined as the long term discounted cost (we interpret $r\colon\cS \times \cA \to [0,1]$ as measuring regret, or cost).
Our central structural assumption, that replaces and relaxes specific closure conditions, is the following.
\begin{definition}[Variational Gradient Dominance]\label{def:vgd_mdp}
    We say that $\Pi$ satisfies a $(C_\star, \epsvgd)$-variational gradient dominance (VGD) condition w.r.t.~$\cM$, 
    if there exist constants $C_\star, \epsvgd>0$, such that for any policy $\pi\in \Pi$:
    \begin{align}\label{eq:M_vgd}
        V(\pi) - V^\star(\Pi) 
        \leq 
        C_\star \max_{\tilde \pi \in \Pi}\abr{\nabla V(\pi), \pi - \tilde \pi} + \epsvgd.
    \end{align}
\end{definition}
We note that any policy class satisfies the above conditions with some $C_\star \geq 1, \epsvgd \leq H$, and that the complete policy class is $\br{H\norm{\fraci{\mu^\star}{\rho_0}}_\infty, 0}$-VGD w.r.t.~any MDP (see \citealp{bhandari2024global,agarwal2021theory}, and \cref{lem:vgd_complete} for completeness).
Our main result is the following.
\begin{theorem*}[informal]
    Let $\Pi\subset \Delta(\cA)^\cS$ be convex and assume it satisfies $(C_\star, \epsvgd)$-VGD w.r.t.~$\cM$. 
    Suppose further that the actor and critic are approximately optimal up to some error $\epsstatM > 0$.
    Then, with well tuned $\varepsilon$-greedy exploration and learning rate $\eta$, we have that
    the PMD method (\cref{def:pmd_step}) converges as follows.
    With Euclidean regularization,
    \begin{align*}
            V(\pi^k) - \min_{\pi^\star\in \Pi}V(\pi^\star)=
            \bigO\br{
            \frac{C_\star^2 H^3 A^{3/2} }{k^{2/3}}
            + \br{C_\star H + A H^2 k^{1/6}}\sqrt{\epsstatM}
            + \epsvgd},
        \end{align*}
    and with negative Entropy regularization, we have that 
        \begin{align*}
        V(\pi^k) - \min_{\pi^\star\in \Pi}V(\pi^\star)=
        \bigO\br{
                \frac{C_\star^2 H^3A^{3/2}  }{k^{2/7}}
                + \br{C_\star H + A^2 H^3 k^{4/7}}\sqrt{\epsstatM} 
                + \epsvgd},
        \end{align*}
    where the big-O only suppresses constant numerical factors.
\end{theorem*}

To obtain our main result, our analysis casts
PMD as a proximal point algorithm 
in a non-Euclidean setting (see \citealp{teboulle2018simplified} for a review), where the proximal operator uses a regularizer that adapts to \emph{local} smoothness of the objective. As we demonstrate in \cref{lem:value_local_smoothness}, the approximation error of the linearization of the objective $V(\cdot)$ at $\pi^k$ can be bounded w.r.t.~the local norm $\norm{\cdot}_{L^2(\mu^k)}$; crucially, a norm according to which the decision set $\Pi$ has diameter independent of the cardinality of the state-space. 
This significantly deviates from the commonly used smoothness of the value function w.r.t.~the Euclidean norm \citep{agarwal2021theory}, which assigns a diameter of $|\cS|$ to $\Pi$, and therefore leads to rates that have merit only in tabular environments.

\subsection{Discussion: VGD vs.~Closure}
\label{sec:intro_discussion}
Our work establishes best-in-class convergence subject to the VGD condition presented in the previous section. This is a substantially different starting point than that of the prevalent closure conditions based on the compatible function approximation approach \citep{sutton1999policy} assumed in recent works on parametric policy classes \citep{agarwal2021theory,yuan2023loglinear,alfano2023novel,xiong2024dual}. 
The assumptions employed in these works fall into two main categories; The first and more general one is that of a bounded \emph{approximation} error \citep[e.g.,][]{alfano2023novel,yuan2023loglinear}, which essentially requires that the update step in \cref{def:pmd_step} be close (up to a small error) to the update that would have been performed over the complete policy class $\Pi_{\rm all} \eqq \Delta(\cA)^\cS$.
The second is that of bounded \emph{transfer} error \citep[e.g.,][]{agarwal2021theory,yuan2023loglinear}, which roughly requires that the update be accurate (up to a small error) when accuracy is measured over the optimal policy occupancy measure. This assumption is commonly employed in the specific log-linear policy class setup; to the best of our knowledge, there do not exist results that employ these conditions in a fully general policy class setting (\citealp{agarwal2021theory} consider a non-PMD method in a bounded transfer error setup where the policy class satisfies additional smoothness assumptions). 

\begin{table*}[ht]
\centering
\small
\renewcommand{\arraystretch}{2.2}
\begin{threeparttable}
\caption{
Comparison of assumptions and bounds of representative prior works for PMD with fixed step size. Columns refer to assumptions required either implicitly or explicitly by different works.
The \textbf{Realizability} column refers to approximate realizability, which is implied by closure conditions.
The \textbf{Rate} column suppresses all factors other than $K$, and ignores error floors.
\textbf{{\scriptsize\textbullet} Closure (perfect):} The policy class is closed to a PMD update up to $\ell_\infty$-norm error. 
\textbf{{\scriptsize\textbullet} Closure (approx):} The policy class is closed to a PMD update up to error that depends on the sampling distribution.
\textbf{{\scriptsize\textbullet} General dual with EMaP parametrization:} EMaP stands for Exact Mirror and Project; in these works the policy class is induced by a general dual variable parametrization, combined with an operator that performs the mirror and project steps accurately.
}{\small
\begin{tabular}{
    @{} >{\centering\arraybackslash}p{3.1cm} 
        >{\centering\arraybackslash}p{1.2cm} 
        >{\centering\arraybackslash}p{1.9cm}
        >{\centering\arraybackslash}p{1.8cm}
        >{\centering\arraybackslash}p{2.3cm}     
        >{\centering\arraybackslash}p{2.3cm} 
        >{\centering\arraybackslash}p{1.2cm} 
    }
\toprule 
\textbf{Paper} &
\textbf{\makecell{VGD\tnote{*}}} &
\textbf{\makecell{$\boldsymbol{\Pi}$ \\ Convexity}} &
\textbf{\makecell{Realiz-\\ability}} &
\textbf{\makecell{Closure \\ Assumptions}} &
\textbf{\makecell{Parametric \\ Assumptions}} &
\textbf{Rate}
\\
\midrule
\makecell{\citet{xiao2022convergence}; \\ \citet{lan2023policy}} 
& Yes & No\tnote{a} & Yes & Yes (perfect) & Tabular & $1/K$ \\
\makecell{\citet{yuan2023loglinear}} 
& Yes\tnote{b} & No & Yes & Yes (approx) & Log-linear & $1/K$ \\
\makecell{\citet{ju2022policy}\tnote{c}} 
& Yes & No & Yes & Yes (perfect) 
& \makecell{General dual\\w/ EMaP} & $1/\sqrt K$ \\
\makecell{\citet{alfano2023novel}} 
& Yes & No & Yes & Yes (approx) 
& \makecell{General dual\\w/ EMaP}  & $1/K$ \\
\midrule
\makecell{\textbf{This Work}\tnote{d}}
& Yes & Yes\tnote{e} & No & No & No & $1/K^{2/3}$ \\
\bottomrule
\end{tabular}}
\label{tab:po_algs_comparison}
\vspace{1ex}

\begin{tablenotes}
\footnotesize
\item[*] Prior works do not assume VGD directly.
VGD is implied by closure conditions subject to a slight variation of the concentrability coefficient assumption; see \cref{sec:closure_implies_vgd} for further details.
\item[a] Prior works on the tabular setting typically assume the policy class is complete $\Pi=\Delta(\cA)^\cS$, and thus convex. However their arguments extend to the case that $\Pi$ satisfies perfect closure, which eliminates the need for $\Pi$ being convex.
\item[b] We refer to the bounds obtained by \citet{yuan2023loglinear} subject to bounded approximation error. \citet{yuan2023loglinear} also obtain convergence subject to bounded transfer error ---
it is unclear to what extent (if at all) bounded transfer error implies VGD.
\item[c] \citet{ju2022policy} also obtain an $O(1/K)$ rate for regularized PMD.
\item[d] We report our rate for Euclidean PMD. More generally, our bounds depend on the smoothness of the action regularizer, and dependence on $K$ degrades for non-Euclidean regularizers such as negative entropy.
\item[e] Assuming only VGD without closure, our analysis requires convexity of $\Pi$. However, in the presence of closure assumptions such as those of \citet{alfano2023novel}, our analysis does not require convexity of $\Pi$ (see \cref{sec:closure_non_convex} for further details).
\end{tablenotes}
\end{threeparttable}
\vspace{-2ex}
\end{table*}

The relation between closure and VGD is subtle, primarily because closure conditions are algorithm dependent. Typically, they relate to one or more of the following three elements; step-size range, action regularizer, and the particular algorithmic approach employed to solve \cref{def:pmd_step}.
At the same time, the VGD condition is algorithm independent, as it relates only to the policy class-MDP combination.
Nonetheless, as we show in \cref{sec:closure_implies_vgd}, a reasonable extension of PMD closure conditions implies variational gradient dominance, effectively establishing PMD closure $\Rightarrow$ VGD. At a high level, this builds on a similar claim
from \citet{bhandari2024global}, that closure to policy improvement implies VGD.
We further demonstrate in \cref{sec:vgd_not_imply_closure} that the converse does not hold; that there exist simple examples where the VGD condition holds whereas closure does not take place. We refer to \cref{tab:po_algs_comparison} for a high level comparison between our work and prior art, and conclude this section with the following additional remarks. 
\begin{itemize}
	\item \textbf{Realizability.} Closure conditions generally imply (approximate) realizability, thus under this assumption convergence w.r.t.~the true optimal policy $\pi^\star=\argmin_{\pi\in \Delta(\cA)^\cS}V(\pi)$ is possible. We do not assume realizability and therefore prove convergence to the optimal \emph{in-class} policy. Specifically, all prior works prove bounds that only hold in (approximately) realizable settings, while our bounds do not require realizability.	
    
    \item \textbf{Geometric rates.} \cref{tab:po_algs_comparison} reports rates for fixed step size PMD. Many prior works that study PMD in the tabular setup or subject to closure conditions establish linear convergence for geometrically increasing step size sequences \citep{xiao2022convergence,johnson2023optimal,yuan2023loglinear,alfano2023novel}. We do not expect such rates are possible assuming only VGD. Roughly speaking, the reason these rates are attainable subject to closure is that the algorithm dynamics mimic those of policy iteration in the tabular setting, where convergence is indeed at a linear rate. Assuming only VGD, policy iteration no longer converges, as the policy class loses the favorable structure allowing for convergence of such an aggressive algorithm. This should highlight the value in studying the function approximation setup without closure assumptions.

	\item \textbf{Convexity of the policy class.} Unlike prior works, we consider VGD instead of closure but additionally require convexity of the policy class $\Pi$. However, subject to perfect closure, it can be shown that the iterates of PMD satisfy optimality conditions w.r.t.~a convex policy class that contains $\Pi$ (concretely, it will be the complete policy class $\Delta(\cA)^\cS$), which is the key element required in our analysis.
    Thus, our analysis accommodates non convex policy classes as long as perfect closure holds. 
    We refer the reader to \cref{sec:closure_non_convex} for a more formal discussion.
\end{itemize}

\subsection{Additional related work}
\label{sec:related_work}
\paragraph{PMD with non-tabular policy classes.} Most closely related to our work are papers that study convergence of PMD in setups where the policy class is given by function approximators \citep{vaswani22ageneral,ju2022policy,grudzien2022mirror,alfano2023novel,xiong2024dual}. 
The motivation of \citet{alfano2023novel,xiong2024dual} is somewhat related to ours but they address a different aspect of the problem in question. These works focus on the approximation errors in the update step (thus essentially assuming closure) and propose algorithmic mechanisms to ensure it is small, but obtain meaningful upper bounds only when it is indeed small w.r.t.~the exact steps over the complete policy class (as discussed in the previous section). 
There is a long line of works on parametric policy classes and specific instantiations of PMD such as the Natural Policy Gradient (NPG; \citealp{kakade2001natural}); which is the focus of, e.g., \citet{alfano2022linear,yuan2023loglinear,cayci2024convergence} as well as \citet{agarwal2021theory}. Many works also study convergence dynamics induced by particular policy classes, e.g., \citet{liu2019neural,wang2020neural,liu2020improved}; we refer the reader to \citet{alfano2023novel} for an excellent and more detailed account of these works.

Several prior works have made the observation that PMD is a mirror descent step on the linearization of the value function with a dynamically weighted regularization term \citep{shani2020adaptive,tomar2020mirror,vaswani22ageneral,xiao2022convergence}, which is the starting point of our work. In particular, this perspective is the focus of \citet{vaswani22ageneral}; however this work did not establish any convergence guarantees.

\paragraph{PMD in the tabular setting.}
The modern analysis approach for PMD in the generic (agnostic to the regularizer) tabular setup is due to \citet{xiao2022convergence}. Additional works that study the tabular setup include \citet{geist2019theory,lan2023policy,johnson2023optimal,zhan2023policy}.
As in the function approximation case, many works study convergence of the prototypical PMD instantiation; the NPG or its derivatives TRPO \citep{schulman2015trust} and PPO \citep{schulman2017proximal} in tabular or softmax-tabular settings, e.g., \citet{agarwal2021theory,shani2020adaptive,cen2022fast,bhandari2021linear, khodadadian2021linear, khodadadian2022linear}.

\paragraph{Policy Gradients in parameter space.} 
There is a rich line of work into policy gradient algorithms that take gradient steps \emph{in parameter space}, both in the tabular and non-tabular setups \citep{zhang2020global,mei2020global, mei2021leveraging,yuan2022general,mu2024second}. Most of the results in the case of non-tabular, generic parameterizations characterize convergence in terms of conditions on the parametric representation. We refer the reader to \citet{yuan2022general} for further review.

One particular work of interest into policy gradients that shares some conceptual elements with ours is that of \citet{bhandari2024global}, which characterizes conditions that allow policy gradients to converge. Roughly speaking, these conditions include (i) VGD holds in parameter space w.r.t.~the per iteration advantage objective, and (ii) that the policy class is closed to policy improvement (in the policy iteration sense).
Our work, on the other hand, establishes VGD in policy space (i.e., w.r.t.~the direct parametrization) allows PMD to converge, and furthermore with rates independent of $S$.

\paragraph{Bregman proximal point methods.}
As mentioned, our analysis builds on realizing PMD as an instance of a Bregman proximal point algorithm --- roughly, this is a proximal point algorithm \cite{rockafellar1976monotone} in a non-Euclidean setting (see \citet{teboulle2018simplified} for a review).
There are numerous studies that investigate non-Euclidean proximal point methods for both convex and non-convex objective functions
(e.g., \citealp{tseng2010approximation,ghadimi2016mini,bauschke2017descent,lu2018relatively,zhang2018convergence,fatkhullin2024taming}; see also \citealp{beck2017first})
, although none of them accommodate the particular setup that PMD fits into (see \cref{sec:opt_analysis} for details).
Our analysis for the proximal point method presented in \cref{sec:opt_main} is mostly inspired by the work of \citet{xiao2022convergence}; specifically, their upper bounds for projected gradient descent, where they apply a proximal point analysis in the euclidean setting.

\section{Preliminaries}
\paragraph{Discounted MDPs.}
A discounted MDP $\cM$ is defined by a tuple
	$\cM = (\cS, \cA, \P, r, \gamma, \rho_0)$,
where $\cS$ denotes the state-space, $\cA$ the action set, $\P\colon \cS \times \cA \to \Delta(\cS)$ the transition dynamics, $r \colon \cS \times \cA \to [0, 1]$ the reward function, $0<\gamma< 1$ the discount factor, $H\eqq \frac{1}{1-\gamma}$ the effective horizon, and $\rho_0\in \Delta(\cS)$ the initial state distribution.
For notational convenience, for $s,a\in \cS\times\cA$ we let $\P_{s, a} \eqq \P(\cdot \mid s, a) \in \Delta(\cS)$ denote the next state probability measure.

We assume the action set is finite with $A\eqq |\cA|$, and identify $\R^A$ with $\R^\cA$. We additionally assume, for clarity of exposition and in favor of simplified technical arguments, that the state space is finite with $S\eqq |\cS|$, and identify $\R^\cS$ with $\R^S$.
We emphasize that all our arguments may be extended to the infinite state-space setting with additional technical work.
An agent interacting with the MDP is modeled by a policy $\pi \colon \cS \to \Delta(\cA)$, for which we let $\pi_s \in \Delta(\cA)\subset\R^A$ denote the action probability vector at $s$ and $\pi_{s, a}\in [0,1]$ denote the probability of taking action $a$ at $s$. We denote the \emph{value} of $\pi$ when starting from a state $s\in \cS$ by $V_s(\pi)$:
\begin{align*}
    V_s(\pi) \eqq \E\sbr{\sum_{t=0}^\infty \gamma^t r(s_t, a_t) \mid s_0 = s, \pi},
\end{align*}
and more generally for any $\rho\in \Delta(\cS)$,
$V_\rho(\pi) \eqq \E_{s \sim \rho}V_{s}(\pi)$. When the subscript is omitted, $V(\pi)$ denotes value of $\pi$ when starting from the initial state distribution $\rho_0$:
\begin{align*}
    V(\pi) \eqq V_{\rho_0}(\pi) =  \E\sbr{\sum_{t=0}^\infty \gamma^t r(s_t, a_t) \mid s_0 \sim \rho_0, \pi}.
\end{align*}
For any state action pair $s,a\in \cS\times \cA$, the action-value function of $\pi$, or $Q$-function, measures the value of $\pi$ when starting from $s$, taking action $a$, and then following $\pi$ for the reset of the interaction:
\begin{align*}
    Q^\pi_{s, a} \eqq \E\sbr{\sum_{t=0}^\infty \gamma^t r(s_t, a_t) \mid s_0 = s, a_0=a, \pi}
\end{align*}
We further denote the discounted state-occupancy measure of $\pi$ induced by any start state distribution $\rho\in \Delta(\cS)$ by $\mu^\pi_\rho$:
\begin{align*}
    \mu^\pi_{\rho}(s) \eqq 
    \br{1-\gamma}\sum_{t=0}^\infty \gamma^t \Pr(s_t = s \mid s_0 \sim \rho, \pi).
\end{align*}
It is easily verified that $\mu^\pi\in \Delta(\cS)$ is indeed a state probability measure.
In the sake of brevity, we take the MDP true start state distribution $\rho_0$ as the default in case one is not specified:
\begin{align}
    \mu^\pi \eqq \mu^\pi_{\rho_0}.
\end{align}

\paragraph{Learning objective.}
In the conventional formulation of MDPs, the objective is to maximize the discounted total reward, i.e., $\max_\pi V(\pi)$. In this paper, we follow \citet{xiao2022convergence} and adopt a minimization formulation in order to better align with
conventions in the optimization literature. To this end, we regard each $r(s, a) \in [0, 1]$ as a value measuring regret, or cost, rather than reward. Given any reward function $r$, we may reset $r(s, a) \gets 1 - r(s, a)$ for all $s,a\in \cS \times \cA$ to transform it into a regret function.
With this in mind, we consider the problem of finding an approximately optimal policy within a given policy class $\Pi\subset \Delta(\cA)^\cS$:
\begin{align}\label{def:rl_opt_problem}
    \argmin_{\pi \in \Pi} V(\pi).
\end{align}
To avoid ambiguity, we denote the optimal value attainable by an in-class policy (a solution to \cref{def:rl_opt_problem}) by $\VstarPi$, and the optimal value attainable by any policy by $V^\star$:
\begin{align}
    \VstarPi \eqq \argmin_{\pi^\star \in \Pi}V(\pi^\star);
    \quad
    V^\star \eqq \argmin_{\pi^\star \in \Delta(\cA)^\cS}V(\pi^\star).
\end{align}
We note that we do not make any explicit structural assumptions about $\cM$. We will however make some assumptions about the policy class $\Pi$, which will be made clear in the statements of our theorems.

\subsection{Problem setup}
In this work, we focus on the PMD method \cref{alg:pmd} for solving \cref{def:rl_opt_problem} in the case that the policy class is non-complete, $\Pi\neq \Delta(\cA)^\cS$.
\begin{algorithm}[tb]
   \caption{Policy Mirror Descent (on-policy)}
   \label{alg:pmd}
\begin{algorithmic}
   \STATE {\bfseries Input:} learning rate $\eta > 0$, regularizer $R\colon\R^\cA \to \R$
   \STATE Initialize $\pi^1 \in \Pi$
   \FOR{$k=1$ {\bfseries to} $K$}
   \STATE Set $\mu^k \eqq \mu^{\pi^k}$; $\widehat Q^k \eqq \widehat Q^{\pi^k}$.
   \STATE 
   \begin{aligni*}
        \pi^{k+1}
        \gets 
        \argmin\limits_{\pi\in \Pi}\E_{s\sim \mu^k}\sbr{
            H \abr{\widehat Q^{k}_s, \pi_s} 
            + \frac1\eta B_R(\pi_s, \pi_s^k)
        }
   \end{aligni*}
   \ENDFOR
\end{algorithmic}
\end{algorithm}
In each iteration, PMD solves a stochastic optimization sub-problem formed by an estimate of the current policy $Q$-function and a Bregman divergence term which is defined below.
\begin{definition}[Bregman divergence]\label{def:bregman_divergenvce}
    Given a convex differentiable regularizer $R \colon \R^\cA \to \R$, the Bregman divergence w.r.t.~$R$ is:
    \begin{align*}
        B_R(u, v) \eqq R(u) - R(v) - \abr{\nabla R(v), u -v}.
    \end{align*}
\end{definition}
Throughout, we make the following assumptions regarding the solutions to the sub-problems and the $Q$-function estimates \cref{alg:pmd}. 
\begin{assumption}[Sub-problem optimization oracle]
\label{assm:opt_oracle}
    We assume that for all $k$, $\pi^{k+1}$ is approximately optimal, in the sense that constrained optimality conditions hold up to error $\epsactM$:
    \begin{align*}
        \forall \pi \in \Pi, 
        \abr{\nabla \phi_k(\pi^{k+1}), \pi - \pi^{k+1}} \geq -\epsactM,
    \end{align*}
    where
    \begin{aligni*}
        \phi_k(\pi) \eqq \E_{s\sim \mu^k}\sbr{
            H \abr{\widehat Q^{k}_s, \pi_s} 
            + \frac1\eta B_R(\pi_s, \pi_s^k)}.
    \end{aligni*}
\end{assumption}
\begin{assumption}[Q-function oracle]
\label{assm:Q_oracle}
    We assume that for all $\pi$, 
    \begin{align*}
        \E_{s \sim \mu^\pi}\sbr{\norm[b]{\widehat Q_{s}^\pi - Q_{s}^\pi}_2^2} \leq \epscritM.
    \end{align*}
\end{assumption}
    We remark that our results can be easily adapted to 
    somewhat weaker conditions on the critic error; we defer the discussion to \cref{sec:discuss_critic_error}.
    
\paragraph{Additional notation.}
Given a state probability measure $\mu\in \Delta(\cS)$ and an action space norm $\norm{\cdot}_\circ\colon \R^A \to \R$, we define the induced state-action weighted $L^p$ norm $\norm{\cdot}_{L^p(\mu), \circ}\colon \R^{SA}\to \R$ as follows:
\begin{align*}
	\norm{u}_{L^p(\mu), \circ}
	&\eqq \br{\E_{s \sim \mu}\norm{u_s}_\circ^p}^{1/p}.
\end{align*}
For any norm $\norm{\cdot}$, we let $\norm{\cdot}^*$ denote its dual. When discussing a generic norm and there is no risk of confusion, we may use $\norm{\cdot}_*$ to refer to its dual.
We repeat the following notation that is used throughout the paper for convenience:
\begin{align*}
    \mu^\pi \eqq \mu^\pi_{\rho_0}, \quad 
    S \eqq |\cS|, \quad
    A \eqq |\cA|, \quad
    H \eqq \frac{1}{1-\gamma}.
\end{align*}

\subsection{Optimization preliminaries}
We proceed with several basic definitions before concluding the setup.
\begin{definition}[Lipschitz Gradient]\label{def:Lipschitz_gradient}
We say a function $h\colon\Omega \to \R$, $\Omega\subseteq \R^d$ has an $L$-Lipschitz gradient or is $L$-smooth w.r.t.~a norm $\norm{\cdot}$ if for all $x,y\in \Omega$:
\begin{align*}
    \norm{\nabla h(x) - \nabla h(y)}_* \leq L\norm{x - y}.
\end{align*}
\end{definition}

\begin{definition}[Gradient Dominance]\label{def:gradient_dominance}
    We say $f\colon \cX \to \R$ satisfies the variational gradient dominance condition with parameters $(C_\star, \delta)$, or that $f$ is $(C_\star, \delta)$-VGD, if
    here exist constants $C_\star, \delta>0$, such that for any $x \in \cX$, it holds that:
    \begin{align*}
        f(x) - \argmin_{x^\star\in \cX}f(x^\star)
        \leq 
        C_\star \max_{\tilde x \in \cX}\abr{\nabla f(x), x - \tilde x} + \delta.
    \end{align*}
\end{definition}

\begin{definition}[Local Norm]\label{def:local_norm}
    We define a \emph{local} norm over a set $\cX\subseteq \R^d$ by a mapping $x \mapsto \norm{\cdot}_x$ such that $\norm{\cdot}_x$ is a norm for all $x\in \cX$.
    We may denote a local norm by $\norm{\cdot}_{(\cdot)}$ or by $x \mapsto \norm{\cdot}_x$.
\end{definition}

\begin{definition}[Local Smoothness]\label{def:local_smoothness}
    We say $f\colon \cX \to \R$ is
    $\beta$-\emph{locally} smooth w.r.t.~a local norm $x \mapsto \norm{\cdot}_x$ if for all $x, y\in \cX$:
    \begin{align*}
    \av{f(y) - f(x) - \abr{\nabla f(x), y - x}} \leq 
    \frac\beta2\norm{y-x}_x^2 .
    \end{align*}
\end{definition}

\section{Best-in-class convergence of Policy Mirror Descent}
In this section, we present our main results which establish convergence rates for the PMD method in the non-complete class setting we consider. 
Our main theorem, given below, provides convergence rates for two classic instantiations of PMD; with Euclidean regularization and negative entropy regularization.
Our results require that $\epsilon$-greedy exploration be incorporated into the policy class.
To that end, let $\Pi^\epsilon$ denote the policy class obtained by adding $\epsilon$-greedy exploration to $\Pi$:
\begin{align*}
    \Pi^\epsilon \eqq \cbr{(1-\epsilon)\pi + \epsilon u \mid \pi \in \Pi},
    \text{ where }u_{s, a} \equiv 1/A.
\end{align*}
We have the following.
\begin{theorem}\label{thm:pmd_main}
    Let $\Pi\subset \Delta(\cA)^\cS$ be convex and assume it is $(C_\star, \epsvgd)$-VGD w.r.t.~$\cM$. 
    Consider the on-policy PMD method \cref{alg:pmd}  when run over $\Pi^\epsexpl$.
    Then, assuming $\epsactM + \epscritM \leq \epsstatM$, and with proper tuning of $\eta, \epsexpl$, it holds that:
    
    \begin{enumerate}[label=\roman*., leftmargin=0.4cm] 
        \item If $R(p) = \frac12\norm{p}_2^2$ is the Euclidean action-regularizer, we have  
        \begin{align*}
            V(\pi^K) - \VstarPi=
            \bigO\br{
            \frac{C_\star^2 A^{3/2} H^3}{K^{2/3}}
            + \br{C_\star H + A H^2 K^{1/6}}\sqrt{\epsstatM}
            + \epsvgd}
        \end{align*}

        \item If $R(p) = \sum_i p_i\log p_i$ is the negative entropy action-regularizer, we have  
        \begin{align*}
            V(\pi^K) - \VstarPi=
            \bigO\br{
                \frac{C_\star^2 A^{3/2} H^3 }{K^{2/7}}
                + \br{C_\star H + A^2 H^3 K^{4/7}}\sqrt{\epsstatM} 
                + \epsvgd}.
        \end{align*}
        
    \end{enumerate}
    In both cases, 
    big-O notation suppresses only constant factors.
\end{theorem}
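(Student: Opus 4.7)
The plan is to realize PMD as an inexact non-Euclidean proximal point method applied to the linearization of $V$, and to exploit the local smoothness of $V$ (as in \cref{lem:value_local_smoothness}) with respect to the local norm $\pi \mapsto \|\cdot\|_{L^2(\mu^\pi)}$ induced by the occupancy measure. The upshot is that the ``natural'' quadratic model matches the one implicitly minimized by the PMD step; modulo the regularizer $R$ being compatible with the local $L^2$ geometry (which will require a separate treatment of the Euclidean and the entropy cases to relate $B_R$ to the squared local norm), we obtain a meaningful descent guarantee where the step size $\eta$ balances the linear and quadratic parts.

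The per-iteration argument would proceed as follows. First, I would write a ``three-point'' style inequality out of \cref{assm:opt_oracle}: for any $\pi\in \Pi^{\epsexpl}$,
\begin{align*}
    H\E_{s\sim\mu^k}\!\abr{\widehat Q^k_s, \pi^{k+1}_s - \pi_s}
    \le \tfrac{1}{\eta}\E_{s\sim\mu^k}\!\sbr{B_R(\pi_s,\pi_s^k) - B_R(\pi_s,\pi_s^{k+1}) - B_R(\pi_s^{k+1},\pi_s^k)} + \epsactM.
\end{align*}
Plugging $\pi = \pi^k$ and combining with the local-smoothness upper bound
$V(\pi^{k+1}) \le V(\pi^k) + \langle \nabla V(\pi^k), \pi^{k+1}-\pi^k\rangle + \tfrac{\beta}{2}\|\pi^{k+1}-\pi^k\|_{L^2(\mu^k)}^2$
yields a descent of the form $V(\pi^{k+1}) \le V(\pi^k) - c_\eta \cdot G_k^2 + (\text{critic/actor error})$, where $G_k$ is a local ``gradient mapping'' measuring the size of the PMD step. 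Converting the gradient of the surrogate to $\nabla V(\pi^k) = H\E_{\mu^k} Q^k$ and accounting for $\widehat Q^k - Q^k$ via \cref{assm:Q_oracle} and Cauchy--Schwarz (with respect to the local norm) contributes a $\sqrt{\epscritM}$-type cross term.

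The second ingredient is to convert $G_k$ into a bound on $V(\pi^k)-\VstarPi$ via VGD. The VGD bound produces $\max_{\tilde\pi\in\Pi}\langle \nabla V(\pi^k), \pi^k - \tilde\pi\rangle$ as the relevant quantity, which, using the approximate optimality of $\pi^{k+1}$ together with convexity of $\Pi$, can be upper bounded (up to $\epsactM$ and a critic-error cross term) by a duality pairing between $\pi^{k+1}-\pi^k$ and $\nabla V(\pi^k)$. The nontrivial step is to control this pairing by a quantity tied to the local norm and $B_R$: here the $\epsilon$-greedy exploration is essential, since it lower-bounds the occupancy measure on $\pi^k$'s support (more precisely, it bounds the ratio between the $L^2(\mu^k)$ geometry and the $L^2(\rho_0)$ or $L^\infty$ geometry implicit in the VGD inner product), turning the $\pi-\tilde\pi$ term into one compatible with $G_k$. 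The cost of this conversion is an $\epsilon^{-1}$ (Euclidean) or $\log(1/\epsilon)$-type (entropy) factor, together with an $\epsexpl \cdot H$ bias term coming from the gap between $\VstarPi$ and $\min_{\Pi^{\epsexpl}}V$.

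Finally, telescoping the descent inequality over $k=1,\dots,K$, dividing by $K$, and invoking VGD on the average (or min) suboptimality yields a bound of the schematic form
\begin{align*}
    V(\pi^K) - \VstarPi \;\lesssim\; \tfrac{C_\star^2 H^2 A}{\eta K} + C_\star H \beta \eta \cdot \text{(local geometry)} + \text{exploration bias} + (\text{error terms in }\sqrt{\epsstatM}) + \epsvgd,
\end{align*}
after which tuning $\eta$ and $\epsexpl$ produces the $K^{-2/3}$ and $K^{-2/7}$ rates; the difference between the two regularizers comes from how tightly $B_R$ dominates the squared local norm (the entropy case is weaker on the boundary, forcing a smaller $\epsexpl$ and thus a worse exponent). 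I expect the main obstacle to be the second step: aligning the global VGD inner product with the local-norm gradient-mapping quantity, since the local norm ignores states with small $\mu^k$ while VGD weighs all of $\Pi$, and making the $\epsexpl$-greedy ``change of measure'' quantitatively tight for both the Euclidean and entropy regularizers without losing a factor of $S$ is where the delicate work lies.
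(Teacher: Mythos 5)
Your high-level architecture matches the paper's (cast PMD as an inexact Bregman proximal point method, use the local smoothness of $V$ from \cref{lem:value_local_smoothness} to get a per-step descent inequality, use VGD plus the approximate optimality conditions to relate the step size of the update to the suboptimality, then tune $\eta$ and $\epsexpl$). However, your final aggregation step is a genuine gap. Telescoping the descent inequality $V(\pi^{k+1}) \le V(\pi^k) - c_\eta G_k^2 + (\text{err})$ over $k$, dividing by $K$, and invoking VGD only controls $\min_k G_k$ by $O\br{\sqrt{1/(\eta K)}}$, and hence $V(\pi^{k^\star}) - \VstarPi \lesssim C_\star \sqrt{\beta/K}$ for some iterate $k^\star$; after balancing against the exploration bias this yields an exponent strictly worse than $2/3$ (roughly $K^{-2/5}$ in the Euclidean case), and it is a min-iterate rather than last-iterate guarantee. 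The paper instead converts VGD into a \emph{gradient-mapping domination} inequality $G_k \gtrsim (C_\star(D+\eta M))^{-1}\br{V(\pi^k) - \VstarPi - \delta}$ and feeds it back into the descent inequality to obtain the quadratic recursion $E_{k+1} \le E_k - c\br{E_k - \delta}^2$ with $E_k = V(\pi^k) - \VstarPi$; solving this recursion (via the standard $1/E_k$ telescoping trick, using monotonicity of $E_k$) gives the last-iterate bound $E_{K+1} = O\br{C_\star^2 L^2 (D+\eta M)^2/(\eta K)}$, i.e.\ a $1/K$ optimization rate, which is what balances with $\epsexpl C_\star H^2 A$ at $\epsexpl = K^{-2/3}$. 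Your schematic final bound has the correct $1/(\eta K)$ form, but the route you sketch does not produce it; the recursion is the missing idea.

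A secondary issue is that you misattribute the role of $\epsexpl$-greedy exploration. No change of measure between the $L^2(\mu^k)$ geometry and the VGD inner product is needed: the paper bounds $\max_{\tilde\pi}\abr{\nabla V(\pi^k), \pi^k - \tilde\pi}$ by Cauchy--Schwarz entirely in the local norm, using that the diameter of $\Pi$ under $\norm{\cdot}_{L^2(\mu^k),\circ}$ is at most the diameter of $\Delta(\cA)$ (an $O(1)$ quantity, with no occupancy lower bound) and that $\norm{\nabla V(\pi)}^*_{L^2(\mu^\pi),\circ} \le HM$ since the factor $\mu^\pi$ inside $\nabla V$ cancels against the weighting of the dual norm. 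Exploration enters only through the local smoothness constant $\beta = O(A^{3/2}H^3/\sqrt{\epsexpl})$ of \cref{lem:value_local_smoothness} and, in the entropy case, through the Lipschitz constant $A/\epsexpl$ of $\nabla R$ on the truncated simplex (it is the smoothness of the regularizer, not the tightness of the strong-convexity lower bound $B_R \ge \frac12\norm{\cdot}^2$, that degrades the entropy rate). Also, the exploration bias in the VGD parameter is $O(\epsexpl C_\star H^2 A)$ (\cref{lem:epsgreedy_vgd}), not merely $\epsexpl H$; getting this right matters for the $\epsexpl$ tuning that produces the stated exponents.
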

To our knowledge, \cref{thm:pmd_main} is the first result to establish best-in-class convergence (at any rate) of PMD without closure conditions. Two additional comments are in order:
(1) Our current analysis technique requires the action regularizer to be smooth. This is also the source of the degraded rate in the negative entropy case.
(2) The greedy exploration stems from the smoothness parameter we establish for the value function, and leads to worse rates in the Euclidean case (for negative entropy, it actually implies smoothness of the regularizer, though this is not the primary reason for which it is introduced). We discuss this point further in \cref{sec:value_smooth_main}.

\paragraph{Analysis overview.}
The analysis leading up to \cref{thm:pmd_main} builds on casting PMD as an instance of a Bregman proximal point (or equivalently, a mirror descent) algorithm. This follows by demonstrating PMD proceeds by optimizing subproblems formed by linear approximations of the value function and a proximity term that adapts to \emph{local} smoothness of the objective, as measured by the norm induced by the current policy occupancy measure.

In fact, it has already been previously observed \citep[e.g.,][]{shani2020adaptive, xiao2022convergence} that the on-policy PMD update step is completely equivalent to a mirror descent step w.r.t.~the value function gradient equipped with a dynamically weighted proximity term. For any two policies $\pi$ and $\pi^k$, by the policy gradient theorem (\citealp{sutton1999policy}, see also \cref{lem:value_pg} in \cref{sec:aux_lemmas}):
\begin{align}\label{eq:pmd_omd_equiv}
    \E_{s \sim \mu^{k}}\Big[
        & H \abr{Q^{k}_s, \pi_s} 
        + \frac1\eta B_R(\pi_s, \pi_s^k)
    \Big]
    =
        \abr{\nabla V(\pi^k), \pi} 
        + \frac1\eta B_{\pi^k}(\pi, \pi^k),
\end{align}
where we denote $\mu^k \eqq \mu^{\pi^k}, Q^k \eqq Q^{\pi^k}$, and $B_{\pi^k}(u, v) \eqq \E_{s \sim \mu^k}B_R(u_s, v_s)$. 
However, these prior observations did not yield new convergence results, as the algorithm in question significantly deviates from a standard instantiation of mirror descent; a priori, it is unclear how the regularizer associated with $B_{\pi^k}$ relates to the objective in optimization terms.

The high level components of our analysis are outlined next. In \cref{sec:value_smooth_main} we establish local smoothness of the value function (\cref{lem:value_local_smoothness}), which is the key element in establishing convergence of PMD through a proximal point algorithm perspective.
Then, in \cref{sec:opt_main} we introduce the optimization setup that accommodates proximal point methods that adapt to local smoothness of the objective, and present the  convergence guarantees for this class of algorithms. Finally, we return to prove \cref{thm:pmd_main} in \cref{sec:proof:thm:pmd_main}, where we apply both \cref{lem:value_local_smoothness} and the result of \cref{sec:opt_main} to establish convergence of PMD.

\subsection{Local smoothness of the value function}
\label{sec:value_smooth_main}
The principal element of our approach builds on smoothness of the value function w.r.t.~the local norm induced by the occupancy measure of the policy at which we take the linear approximation, given by the below lemma. We defer the proof to \cref{sec:proof:lem:value_local_smoothness}.
\begin{lemma}\label{lem:value_local_smoothness}
    Let $\pi\colon \cS \to \Delta(\cA)$ be any policy such that $\epsilon\eqq \min_{s, a}\cbr{\pi_{sa}} > 0$.
    Then, for any $\tilde \pi \in \cS \to \Delta(\cA)$, we have:
    \begin{align*}
        &\av{V(\tilde \pi) - V(\pi) - \abr{\nabla V(\pi), \tilde \pi - \pi}}
        \leq 
        \min\cbr{
        \frac{ H^3}{\sqrt \epsilon}
            \norm{\tilde \pi - \pi}_{L^2(\mu^\pi), 1}^2,
        \frac{ A H^3}{\sqrt \epsilon}
            \norm{\tilde \pi - \pi}_{L^2(\mu^\pi), 2}^2
        }.
    \end{align*}
\end{lemma}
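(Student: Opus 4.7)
The plan is to cast the Taylor remainder as a bilinear form in $\Delta\pi \eqq \tilde\pi - \pi$ via two applications of the performance difference lemma (PDL), and then bound it by two successive Cauchy--Schwarz steps --- one on actions (which absorbs a $1/\sqrt{\epsilon}$ factor via the exploration assumption) and one on states (which produces the local $L^2(\mu^\pi)$ norm).

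Combining the $\mu^\pi$-expanded PDL $V(\tilde\pi)-V(\pi) = H\,\E_{s\sim\mu^\pi}\abr{Q^{\tilde\pi}_s,\Delta\pi_s}$ with the policy-gradient identity $\abr{\nabla V(\pi),\Delta\pi} = H\,\E_{s\sim\mu^\pi}\abr{Q^\pi_s,\Delta\pi_s}$, the Taylor remainder can be written as $H\,\E_{s\sim\mu^\pi}\abr{Q^{\tilde\pi}_s - Q^\pi_s,\, \Delta\pi_s}$. Using $Q^{\tilde\pi}_{sa}-Q^\pi_{sa} = \gamma\,\E_{s_1\sim\P_{sa}}(V^{\tilde\pi}(s_1)-V^\pi(s_1))$ together with a second PDL expansion $V^{\tilde\pi}(s_1) - V^\pi(s_1) = H\,\E_{s_2\sim\mu^\pi_{s_1}}\abr{Q^{\tilde\pi}_{s_2}, \Delta\pi_{s_2}}$ (the variant with $\pi$-occupancy from $s_1$), the error becomes
\[
\gamma H^2 \sum_{s}\mu^\pi(s)\sum_a\Delta\pi_{sa}\,\E_{s_2\sim\nu^{sa}}\abr{Q^{\tilde\pi}_{s_2},\,\Delta\pi_{s_2}},
\]
where $\nu^{sa}(s_2) \eqq \E_{s_1\sim\P_{sa}}\mu^\pi_{s_1}(s_2)$.

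Next, I apply Cauchy--Schwarz to the inner action sum after the importance-weighted split $\Delta\pi_{sa}=\pi_{sa}\cdot(\Delta\pi_{sa}/\pi_{sa})$; one factor becomes $\sum_a \Delta\pi_{sa}^2/\pi_{sa} \leq (1/\epsilon)\norm{\Delta\pi_s}_2^2$ by the exploration bound, producing the targeted $1/\sqrt{\epsilon}$. A second Cauchy--Schwarz on the outer sum over $s\sim\mu^\pi$, combined with $|\abr{Q^{\tilde\pi}_{s_2},\Delta\pi_{s_2}}| \leq H\norm{\Delta\pi_{s_2}}_1$ and Jensen's inequality, reduces everything to a factor $\sqrt{\sum_{s_2} p(s_2)\norm{\Delta\pi_{s_2}}_1^2}$, where $p(s_2) \eqq \sum_{s,a}\mu^\pi(s)\pi_{sa}\nu^{sa}(s_2)$ is a compound $\pi$-induced state distribution. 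Assembling the pieces, the intermediate bound is of the form $\frac{H^3}{\sqrt\epsilon}\norm{\Delta\pi}_{L^2(\mu^\pi),2}\cdot\sqrt{\sum_{s_2} p(s_2)\norm{\Delta\pi_{s_2}}_1^2}$; the two displayed inequalities in the lemma then follow from the pointwise norm relations $\norm{\Delta\pi_s}_2\leq\norm{\Delta\pi_s}_1\leq\sqrt{A}\,\norm{\Delta\pi_s}_2$, provided $p$ can be compared to $\mu^\pi$ up to constants.

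The main obstacle I anticipate is precisely this change-of-measure step. Expanding the recursion $\mu^\pi_{s_1} = (1-\gamma)\delta_{s_1} + \gamma M^{\pi,\top}\mu^\pi_{s_1}$ one identifies $p = (\mu^\pi_{\mu^\pi} - (1-\gamma)\mu^\pi)/\gamma$, where $\mu^\pi_{\mu^\pi} \eqq \E_{s'\sim\mu^\pi}\mu^\pi_{s'}$ is the $\pi$-occupancy obtained by starting from $\mu^\pi$ rather than $\rho_0$. The required pointwise inequality $\mu^\pi_{\mu^\pi} \leq c\,\mu^\pi$ with $c = O(1)$ does not hold in full generality --- states reached only late in the horizon can yield arbitrarily large ratios --- so the argument must either exploit the discounted structure (e.g.\ through $\mu^\pi_+\leq \mu^\pi/\gamma$ combined with the averaged identity $\E_{s\sim\mu^\pi}[\mu^\pi_{\mu^\pi}(s)/\mu^\pi(s)]=1$), or employ a more refined expansion of $V^{\tilde\pi}-V^\pi$ that keeps the compound distribution tied to $\mu^\pi$ without losing a power of $H$.
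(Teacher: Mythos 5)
Your derivation tracks the paper's proof essentially step for step: the same combination of the value-difference and policy-gradient identities to write the remainder as $H\,\E_{s\sim\mu^\pi}\abr{Q^{\tilde\pi}_s-Q^\pi_s,\tilde\pi_s-\pi_s}$, the same second expansion of $Q^{\tilde\pi}-Q^\pi$ through $\P_{sa}$ and the occupancy $\mu^\pi_{\P_{sa}}$ (the paper's \cref{lem:Qvalue_diff}), the same Cauchy--Schwarz/Jensen combination with the importance weight $1/\pi_{sa}\le 1/\epsilon$ producing the $1/\sqrt\epsilon$, and the same final passage between $\norm{\cdot}_1$ and $\norm{\cdot}_2$ on the action simplex. (The paper performs the Cauchy--Schwarz jointly over $(s,a)$ with weights $\sqrt{\mu^\pi(s)}$ rather than in two stages, but this is cosmetic.) The one step you leave open --- comparing the compound measure $p(s')=\sum_{s,a}\mu^\pi(s)\pi_{sa}\mu^\pi_{\P_{sa}}(s')$ to $\mu^\pi$ --- is exactly where the paper closes the argument: it asserts, by ``the law of total probability applied on the discounted measure $\mu^\pi$,'' that $p(s')=\mu^\pi(s')$ identically, after which both displayed bounds follow immediately. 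So as a submission your proof is incomplete: it stops one identity short of the conclusion.

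That said, your suspicion about this step is well founded, and you should not simply adopt the paper's identity without scrutiny. Your own computation $p=\mu^\pi_{P^\pi_\sharp\mu^\pi}=\br{\mu^\pi_{\mu^\pi}-(1-\gamma)\mu^\pi}/\gamma$ is correct, and writing $d_t$ for the time-$t$ state distribution it gives $p=(1-\gamma)^2\sum_{n\ge1}n\gamma^{n-1}d_n$ versus $\mu^\pi=(1-\gamma)\sum_{n\ge0}\gamma^n d_n$; the coefficient ratio $(1-\gamma)n/\gamma$ exceeds any constant for states reachable only at times $n\gg H$. A two-state chain $s_0\to s_1$ with $s_1$ absorbing and $\rho_0=\delta_{s_0}$ already gives $p=\delta_{s_1}\neq\mu^\pi=((1-\gamma),\gamma)$, so the claimed exact identity fails, and pointwise domination $p\le c\,\mu^\pi$ with $c=O(1)$ fails as well. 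The gap you flag is therefore genuine, and it is present in the paper's own proof, not only in your attempt; repairing it requires either the kind of discounted-structure argument you sketch (which, as you note, costs at least a factor related to how far $\mu^\pi_{\mu^\pi}$ can deviate from $\mu^\pi$ on the relevant test function) or a reworked expansion of $V^{\tilde\pi}-V^\pi$ that keeps the second-stage measure anchored at $\rho_0$. Until one of these is carried out, neither your argument nor the paper's derivation of \cref{lem:value_local_smoothness} is complete as written.
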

It is instructive to consider
\cref{lem:value_local_smoothness} in the context of the more standard non-weighted $L^2$ smoothness property established in \citet{agarwal2021theory}.
\begin{itemize}[topsep=0pt, leftmargin=0.3cm] 
    \item \textbf{Dependence on $\boldsymbol S$:} The standard $L^2$ smoothness leads to rates that scale with $\norm{\pi^1 - \pi^\star}_2$, which scales with $S$ in general.  Indeed, prior works that exploit smoothness of the value function (e.g., \citealp{agarwal2021theory,xiao2022convergence}) derive bounds for PGD (i.e., mirror descent with non-local, euclidean regularization) that do in fact hold in the  setting we consider here, but inevitably lead to convergence rates that scale with the cardinality of the state-space. This is while the diameter assigned to the decision set $\Pi$ by $\norm{\cdot}_{L^2(\mu^\pi), \circ}$, for any $\pi$, depends only on the diameter assigned to $\Delta(\cA)$ by $\norm{\cdot}_\circ$, and thus is independent of $S$.
    \item \textbf{Relation to PMD:} The standard $L^2$ smoothness does not naturally integrate with the PMD framework, and leads to algorithms (such as vanilla projected gradient descent) where the update step cannot be framed as a solution to a stochastic optimization problem induced by some policy occupancy measure. As such, these do not admit a formulation that is easily implemented in practical applications.
    \item \textbf{Smoothness parameter:} The smoothness parameter in \cref{lem:value_local_smoothness} depends on the minimum action probability assigned by the policy at which we linearize the value function (and as we discuss in \cref{sec:discuss_value_local_smoothness}, this is not an artifact of our analysis). A simple resolution for this is given by adding $\epsilon$-greedy exploration. Notably, the relatively large $\bigO(1/\sqrt\epsilon)$ smoothness constant ultimately leads to a rate that is worse than the $\bigO(1/K)$ achievable with the standard $L^2$ smoothness (but that crucially, does not scale with $S$).
\end{itemize}

\subsection{Digression: Constrained non-convex optimization for locally smooth objectives}
\label{sec:opt_main}
In this section, we consider the constrained optimization problem:
\begin{align}\label{opt:objective}
	\min_{x\in \cX} f(x),
\end{align}
where the decision set $\cX\subseteq \R^d$ is convex and endowed with a local norm $x \mapsto \norm{\cdot}_x$ (see \cref{def:local_norm}), and $f$ is differentiable over an open domain that contains $\cX$. We assume access to the objective is granted through an approximate first order oracle, as defined next.
\begin{assumption}\label{assm:opt_grad_oracle}
We have first order access to $f$ through an $\epsgrad$-approximate gradient oracle; For all $x\in \cX$, we have \begin{align*}
\norm{\hgrad{f}{x} - \nabla f(x)}_x^* \leq \epsgrad \leq 1.
\end{align*}
\end{assumption}
\cref{thm:opt_main} given below establishes convergence rates for the algorithm we describe next.
Given an initialization $x_1\in \cX$, learning rate $\eta > 0$, and local regularizer $R_x\colon \R^d \to \R$ for all $x\in \cX$, iterate for $k=1, \ldots, K:$
\begin{align}\label{eq:alg_opt_omd}
    &x_{k+1} = \argmin_{y\in \cX} 
    		\cbr{\abr{\hgrad{f}{x}, y} + \frac1\eta B_{R_x}(y, x)} .
\end{align}
The above algorithm can be viewed as either a mirror descent algorithm \citep{nemirovskij1983problem,beck2003mirror}  or a proximal point algorithm \citep{rockafellar1976monotone} in a non-Euclidean setup (see \citealp{teboulle2018simplified} for a review), where the non-smooth term is the decision set indicator function.
Our analysis (detailed in \cref{sec:opt_analysis}) hinges on a descent property of the algorithm, thus naturally takes the proximal point perspective. We prove the following.
\begin{theorem}\label{thm:opt_main}
    Suppose that $f$ is $(C_\star, \epsvgd)$-VGD as per \cref{def:gradient_dominance}, and that $f^\star \eqq \min_{x\in \cX}f(x) > -\infty$. 
    Assume further that:
    \begin{enumerate}[label=(\roman*)]
        \item The local regularizer $R_x$ is $1$-strongly convex and has an $L$-Lipschitz gradient w.r.t.~$\norm{\cdot}_x$ for all $x\in \cX$.
        
        \item For all $x\in \cX$, 
        $\max_{u,v\in \cX}\norm{u - v}_x\leq D$, and
        $\norm{\nabla f(x)}^*_x \leq M$.
        
        \item $f$ is $\beta$-locally smooth w.r.t.~$x \mapsto \norm{\cdot}_{x}$. 
    \end{enumerate}
    Then, assuming $x^{k+1}$ are $\epsopt$-approximately optimal (in the same sense of \cref{assm:opt_oracle}),
    the proximal point algorithm \cref{eq:alg_opt_omd}
	has the following guarantee when $\eta\leq 1/(2\beta)$:
    \begin{align*}
		f(x_{K+1}) - f^\star 
		&= O\br{\frac{C_\star^2 L^2 c_1^2}{\eta K}
            + \cE_{\rm err}
            + \epsvgd
            }
        \end{align*}
	where $c_1 \eqq D + \eta M$ and
    \begin{align*}
        \cE_{\rm err} \eqq \br{C_\star D + c_1 L^2}\epsgrad
        + C_\star \epsopt 
        + c_1 L \sqrt{\epsopt/\eta}.
    \end{align*}
	where $c_1 \eqq D + \eta M$.

\end{theorem}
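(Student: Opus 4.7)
The plan is to recognize \cref{eq:alg_opt_omd} as a Bregman proximal point method and combine a classical descent inequality with the VGD condition, in the spirit of the Polyak--\L ojasiewicz schema but adapted to the local-norm setting. Throughout, let $\delta_k := f(x_k) - f^\star$ and $\phi_k := \norm{x_{k+1}-x_k}_{x_k}$. Descent will bound the per-step decrease of $f$ by $\phi_k^2/\eta$, while VGD will control $\delta_k$ linearly in $\phi_k$; chaining the two yields a recursion of the form $\delta_{k+1} \leq \delta_k - \alpha(\delta_k - \mathrm{err})^2 + \xi$, from which the $1/K$ rate follows by a standard $1/k$-lemma.

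\textbf{Descent step.} I would start from $\beta$-local smoothness of $f$ at $x_k$, writing
\[
    \delta_{k+1} \leq \delta_k + \abr{\nabla f(x_k), x_{k+1}-x_k} + \tfrac{\beta}{2}\phi_k^2.
\]
Testing the $\epsopt$-approximate optimality of $x_{k+1}$ at $y=x_k$, and invoking the three-point identity together with $1$-strong convexity of $R_{x_k}$, yields $\abr{\hgrad{f}{x_k},\, x_{k+1}-x_k} \leq -\phi_k^2/\eta + \epsopt$. Controlling the gradient-oracle mismatch $\abr{\nabla f(x_k) - \hgrad{f}{x_k},\, x_{k+1}-x_k} \leq \epsgrad \phi_k$ by H\"older and absorbing it with Young's inequality, then choosing $\eta \leq 1/(2\beta)$, leads to the descent inequality
\[
    \delta_{k+1} \leq \delta_k - \tfrac{1}{2\eta}\phi_k^2 + \epsopt + \eta \epsgrad^2.
\]

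\textbf{VGD step.} For an arbitrary $y \in \cX$, the $\epsopt$-approximate optimality of $x_{k+1}$ combined with the $L$-Lipschitz gradient of $R_{x_k}$ and $\norm{y - x_{k+1}}_{x_k}\leq D$ gives $\abr{\hgrad{f}{x_k},\, x_{k+1}-y} \leq \frac{LD}{\eta}\phi_k + \epsopt$. Splitting $\abr{\nabla f(x_k),\, x_k - y}$ into the $x_k\to x_{k+1}$ piece (bounded by $M\phi_k$ via $\norm{\nabla f(x_k)}_{x_k}^\ast \leq M$) and the $x_{k+1}\to y$ piece (the above display plus oracle error $D\epsgrad$), the $\phi_k$-coefficient consolidates to $(LD + \eta M)/\eta \leq Lc_1/\eta$, using $L\geq 1$ (which follows from $1$-strong convexity combined with $L$-smoothness of $R_x$). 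Maximizing over $y$ and applying VGD yields
\[
    \delta_k \leq \tfrac{C_\star L c_1}{\eta}\phi_k + \cE', \qquad \cE' := C_\star(D\epsgrad + \epsopt) + \epsvgd.
\]

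\textbf{Closing the loop.} Substituting the VGD bound into the descent inequality, whenever $\delta_k \geq \cE'$ one has $\delta_{k+1} - \delta_k \leq -\frac{\eta}{2 C_\star^2 L^2 c_1^2}(\delta_k - \cE')^2 + \epsopt + \eta\epsgrad^2$. Setting $u_k := (\delta_k - \cE')_+$, $\alpha := \Theta(\eta/(C_\star^2 L^2 c_1^2))$ and $\xi := \epsopt + \eta\epsgrad^2$, this is a scalar recursion $u_{k+1} \leq u_k - \alpha u_k^2 + \xi$. The standard dichotomy (either $u_k \leq 2\sqrt{\xi/\alpha}$ already, or $u_{k+1} \leq u_k(1 - \alpha u_k/2)$, which telescopes to $u_{K+1} = O(1/(\alpha K))$) gives $u_{K+1} = O(1/(\alpha K) + \sqrt{\xi/\alpha})$. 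Substituting the values of $\alpha,\xi$ and using $\sqrt{\xi/\eta} \leq \sqrt{\epsopt/\eta} + \epsgrad$ recovers the claimed bound, with $\cE_{\rm err}$ of the stated form (up to absolute constants). The main obstacle is that the descent is \emph{quadratic} in $\phi_k$ while the VGD lower bound is only \emph{linear} in $\phi_k$: this asymmetry---absent in smooth+PL problems, where both sides are quadratic---forces a polynomial $1/K$ rate rather than linear convergence, and recovering the precise $\sqrt{\epsopt/\eta}$ scaling in $\cE_{\rm err}$ through the noisy recursion requires the dichotomy argument rather than a clean telescoping.
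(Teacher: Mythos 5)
Your proposal is correct and follows essentially the same route as the paper: a per-iteration descent inequality from local smoothness plus approximate optimality, a bound on the variational quantity $\max_y\abr{\nabla f(x_k), x_k-y}$ in terms of the step, VGD to convert this into a lower bound on progress, and the resulting quadratic recursion solved with the standard $1/K$ argument. The only (cosmetic) differences are that you measure progress by the primal step length $\norm{x_{k+1}-x_k}_{x_k}$ where the paper uses the dual gradient-mapping norm (equivalent up to a factor $L$ via strong convexity and the Lipschitz gradient of $R_x$), and you absorb the oracle cross-term by Young's inequality and handle the additive noise via the recursion dichotomy, where the paper keeps that term linear and instead case-splits on whether the gradient mapping exceeds an error threshold --- both yield the stated rate with error-floor terms matching up to a factor of order $C_\star L$, which is the same level of precision as the paper's own treatment of its second case.
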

The proof of \cref{thm:opt_main} as well as additional technical details for this section are provided in 
\cref{sec:opt_analysis}.
\subsection{Proof of main result}
\label{sec:proof_main}
To prove our main result, we begin with a lemma that essentially ``maps'' the PMD setup into the optimization framework of \cref{sec:opt_main}. The proof consists of showing that the appropriate assumptions on actor, critic, and action regularizer translate to the conditions of \cref{thm:opt_main} for locally smooth optimization.
\begin{lemma}\label{lem:pmd_main}
    Let $\Pi$ be a convex policy class that is $\br{C_\star, \epsvgd}$-VGD w.r.t.~the MDP $\cM$.
    Consider the on-policy PMD method \cref{alg:pmd}, and assume that the following conditions hold:
    \begin{enumerate}[label=(\roman*)]
        \item \label{thmassm:pmd_main_R} $R\colon \R^{\cA} \to \R$ is $1$-strongly convex and has an $L$-Lipschitz gradient w.r.t.~an action-space norm $\norm{\cdot}_\circ$.
        
        \item \label{thmassm:pmd_main_DM} $\max_{p,q\in \Delta(\cA)}\norm{p  - q}_\circ\leq D$, and $\norm{Q_s^\pi}_\circ^*\leq M$ for all $s \in \cS, \pi\in \Pi$.
        
        \item \label{thmassm:pmd_main_smoothness} The value function is $\beta$-locally smooth over $\Pi$ w.r.t.~the local norm $\norm{\cdot}_\pi \eqq \norm{\cdot}_{L^2(\mu^\pi), \circ}$.
    \end{enumerate}
    Then, we have the following guarantee:
    \begin{align*}
        V(\pi^{K}) - \VstarPi
        = \bigO\br{
        \frac{C_\star^2 L^2 c_1^2}{\eta K}
        + \cE_{\rm stat}
        + \epsvgd
        }
    \end{align*}
    where $c_1 \eqq D + \eta H M$, and
    \begin{align*}
        \cE_{\rm stat} = \br{C_\star D + c_1 L^2}H\sqrt \epscritM
        + C_\star \epsactM
        + c_1 L \sqrt{\epsactM/\eta}.
    \end{align*}
\end{lemma}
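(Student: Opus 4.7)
}
My plan is to directly instantiate \cref{thm:opt_main} with the objective $f = V$, decision set $\cX = \Pi$, and the local norm $\pi \mapsto \norm{\cdot}_\pi \eqq \norm{\cdot}_{L^2(\mu^\pi), \circ}$. The starting point is the identity \cref{eq:pmd_omd_equiv}, which shows that the PMD step in \cref{alg:pmd} is exactly the non-Euclidean proximal update \cref{eq:alg_opt_omd} associated with the local regularizer
\[
R_\pi(u) \eqq \E_{s \sim \mu^{\pi}}[R(u_s)],
\qquad
B_{R_\pi}(u, v) = \E_{s\sim\mu^\pi} B_R(u_s, v_s),
\]
and the approximate gradient $\widehat{\nabla} V(\pi^k)$ induced by the critic estimate $\widehat Q^k$ through the policy gradient identity (cf.\ \cref{lem:value_pg}). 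With this identification, all that remains is to verify the hypotheses of \cref{thm:opt_main} and translate the resulting error terms back.

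I would check the hypotheses in four short steps. First, strong convexity and smoothness of $R_\pi$ w.r.t.~$\norm{\cdot}_\pi$ transfer pointwise from assumption~\ref{thmassm:pmd_main_R}: integrating the inequalities $B_R(u_s,v_s) \geq \tfrac12\norm{u_s - v_s}_\circ^2$ and $B_R(u_s,v_s) \leq \tfrac{L}{2}\norm{u_s - v_s}_\circ^2$ against $\mu^\pi$ yields the corresponding one-sided bounds on $B_{R_\pi}$ in $\norm{\cdot}_\pi$. Second, the diameter bound $\max_{u,v\in\Pi}\norm{u-v}_\pi \leq D$ follows immediately from assumption~\ref{thmassm:pmd_main_DM} by Jensen. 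Third, for the gradient bound I combine the policy gradient identity $\nabla V(\pi)_{s,a} = H\,\mu^\pi(s)\,Q^\pi_{s,a}$ with Cauchy–Schwarz and assumption~\ref{thmassm:pmd_main_DM}:
\[
\abr{\nabla V(\pi), u} = H\,\E_{s\sim\mu^\pi}\abr{Q^\pi_s, u_s}
\leq H\,\E_{s\sim\mu^\pi}\bigl[\norm{Q^\pi_s}_\circ^*\,\norm{u_s}_\circ\bigr]
\leq HM \norm{u}_\pi,
\]
so $\norm{\nabla V(\pi)}_\pi^* \leq HM$, i.e.\ the bound $M$ in \cref{thm:opt_main} becomes $HM$. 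Fourth, local smoothness of $V$ at parameter $\beta$ is exactly assumption~\ref{thmassm:pmd_main_smoothness}.

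Next, I translate the critic error into a gradient-oracle error in the dual of the local norm. By the same Cauchy–Schwarz argument applied to the difference $\widehat{\nabla}V(\pi) - \nabla V(\pi) = H\,\mu^\pi \odot (\widehat Q^\pi - Q^\pi)$, and using that for both choices of $R$ in \cref{thm:pmd_main} one has $\norm{\cdot}_\circ^* \leq \norm{\cdot}_2$, \cref{assm:Q_oracle} gives
\[
\norm{\widehat{\nabla} V(\pi) - \nabla V(\pi)}_\pi^*
\leq H\sqrt{\E_{s\sim\mu^\pi}\norm{\widehat Q_s^\pi - Q_s^\pi}_2^2}
\leq H\sqrt{\epscritM},
\]
so \cref{assm:opt_grad_oracle} holds with $\epsgrad \leq H\sqrt{\epscritM}$. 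The actor error $\epsactM$ in \cref{assm:opt_oracle} matches the definition of $\epsopt$ in \cref{thm:opt_main} verbatim, and VGD of $\Pi$ w.r.t.~$\cM$ (\cref{def:vgd_mdp}) is precisely VGD of $f=V$ on $\cX=\Pi$ (\cref{def:gradient_dominance}). Substituting $M\gets HM$, $\epsgrad \gets H\sqrt{\epscritM}$, $\epsopt \gets \epsactM$ into the conclusion of \cref{thm:opt_main} then yields the claimed bound with $c_1 = D + \eta HM$ and $\cE_{\rm stat} = (C_\star D + c_1 L^2)H\sqrt{\epscritM} + C_\star\epsactM + c_1 L\sqrt{\epsactM/\eta}$.

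The main obstacle I anticipate is the Lipschitz-gradient transfer for $R_\pi$: while strong convexity passes through by a single Jensen step, verifying that $\norm{\nabla R_\pi(u) - \nabla R_\pi(v)}_\pi^* \leq L\norm{u-v}_\pi$ requires care with the weighted norm's dual. The cleanest route is to identify $\nabla R_\pi(u)$ as the element whose linear action against $w$ is $\E_{s\sim\mu^\pi}\abr{\nabla R(u_s), w_s}$, and then bound this via Cauchy–Schwarz in $L^2(\mu^\pi)$ combined with the pointwise inequality $\norm{\nabla R(u_s) - \nabla R(v_s)}_\circ^* \leq L \norm{u_s - v_s}_\circ$ coming from assumption~\ref{thmassm:pmd_main_R}. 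Everything else is either a direct substitution or a standard Cauchy–Schwarz argument across the state distribution $\mu^\pi$.
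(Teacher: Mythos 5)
Your proposal is correct and follows essentially the same route as the paper's proof: casting the PMD step as the non-Euclidean proximal update with local regularizer $R_{\pi}(u)=\E_{s\sim\mu^{\pi}}[R(u_s)]$, verifying strong convexity/Lipschitz gradient of $R_\pi$, the diameter and gradient bounds, and the $H\sqrt{\epscritM}$ gradient-oracle error via the weighted dual-norm identity (the paper's \cref{lem:wnorm_dual_mu,lem:reg_transform}), then invoking \cref{thm:opt_main}. The one point you flag as delicate --- that the Lipschitz-gradient transfer should go through the dual of the weighted norm and the pointwise condition rather than through a Bregman upper bound --- is exactly how the paper handles it in \cref{lem:reg_transform}, so your fallback route is the paper's actual argument.
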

\begin{proof}[\unskip\nopunct]
    For $\mu\in \R^S, Q\in \R^{SA}$, we define the state to state-action element-wise product $\mu \circ Q \in \R^{SA}$ by 
\begin{aligni*}
	\br{\mu \circ Q}_{s, a} \eqq \mu(s)Q_{s, a}.
\end{aligni*}
        Observe that for all $k$, it holds that
        \begin{align*}
            \E_{s \sim \mu^{k}}\Big[
                H\abr{\widehat Q^{k}_s, \pi_s}& 
                + \frac1\eta B_R(\pi_s, \pi_s^k)
            \Big]
            \\&= 
            \abr{\hgrad{V}{\pi^k}, \pi} 
            + \frac1\eta B_{\pi^k}(\pi, \pi^k),
        \end{align*}
        with:
        \begin{aligni*}
        B_{\pi^k}(\pi, \tilde \pi) \eqq \E_{s\sim \mu^k} B_R(\pi_s, \tilde \pi_s),\;
        \hgrad{V}{\pi} \eqq H \mu^\pi \circ \widehat Q^\pi.
        \end{aligni*}
        Next, we demonstrate PMD is an instance of the optimization algorithm \cref{eq:alg_opt_omd}, and verify that all of the conditions in \cref{thm:opt_main} hold w.r.t.~the local norm $\pi \mapsto \norm{\cdot}_{L^2(\mu^\pi), \circ}$.
        First, to see that the gradient error is bounded by $H \sqrt{\epscritM}$, observe:
        \begin{align*}
            \Big\Vert\hgrad{V}{\pi}
            - \nabla V(\pi)\Big\Vert_{L^2(\mu^\pi), \circ}^*
            &=
            \norm{\mu^\pi \circ \br{\widehat Q^\pi - Q^\pi}}_{L^2(\mu^\pi), \circ}^*
            \\
            &= 
            \sqrt{\E_{s \sim \mu^\pi}\br{\norm{\widehat Q^\pi - Q^\pi}_\circ^*}^2}
            \\
            &\leq \sqrt{\epscritM},
        \end{align*}
        where second inequality follows from \cref{lem:wnorm_dual_mu} and the inequality from \cref{assm:Q_oracle}.
        Further:
        \begin{enumerate} 
            \item By a simple relation (\cref{lem:reg_transform}) between $R$ and the state-action it regularizer it induces defined below,
        \begin{align*}
            R_{\pi^k}(\pi)\eqq \E_{s \sim \mu^k}R(\pi_s),
        \end{align*}
        we have that $B_{\pi^k}(\cdot, \cdot)$ is the Bregman divergence of $R_{\pi^k}$, and further using
        \ref{thmassm:pmd_main_R} that 
        $R_{\pi^k}$ is $1$-strongly convex and has an $L$-Lipschitz gradient w.r.t.~$\norm{\cdot}_{L^2(\mu^k), \circ}$.
            \item For all $\pi, \pi', \tilde \pi$, by \ref{thmassm:pmd_main_DM},
        \begin{align*}
            \norm{\pi' - \tilde \pi}_{L^2(\mu^\pi), \circ}
            =\sqrt{\E_{s\sim \mu^\pi}\norm{\pi'_s - \tilde \pi_s}^2}
            \leq D.
        \end{align*}
        In addition by \ref{thmassm:pmd_main_DM} and the dual norm expression (\cref{lem:wnorm_dual_mu}), for any $\pi$:
        \begin{align*}
            \norm{\nabla V(\pi)}_{L^2(\mu^\pi), \circ}^*
            &=
            H\norm{\mu^\pi \circ Q^\pi}_{L^2(\mu^\pi), \circ}^*
            \\
            &=
            H\sqrt{\E_{s \sim \mu^\pi} \br{\norm{Q^\pi_s}_\circ^*}^2}
            \leq H M.
        \end{align*}
            \item Finally, the objective is $\beta$-locally smooth by assumption \ref{thmassm:pmd_main_smoothness}.
        \end{enumerate}
        The result now follows from \cref{thm:opt_main}.
    \end{proof}
    We conclude with a proof sketch of \cref{thm:pmd_main} for the Euclidean case; the full technical details are provided in \cref{sec:proof:thm:pmd_main}.
    \begin{proof}[Proof sketch of \cref{thm:pmd_main} (Euclidean case)]
    The first step is showing that the $\epsexpl$-greedy exploration introduces an error term that scales with $\delta \eqq \epsexpl C_\star H^2 A$ (see \cref{lem:epsgreedy_vgd}). This implies that $\Pi^{\epsexpl}$ is $(C_\star, \epsvgd + \delta)$-VGD w.r.t.~$\cM$.
    In addition, by definition of $\Pi^{\epsexpl}$ we have
    $\min_{s, a} \cbr{\pi_{s, a}} \geq \epsexpl/A$
    for all $\pi\in \Pi^\epsexpl$.
    We now argue the following:
   
    \begin{enumerate}
        \item The action regularizer $R(p)=\frac12\norm{p}_2^2$ is $1$-strongly convex and has $1$-Lipschitz gradient w.r.t.~$\norm{\cdot}_2$.
        \item $\forall s, \norm{\pi_s - \tilde \pi_s}_2\leq D = 2$, $\norm{Q_s}_2\leq M=\sqrt{A H}$.
        \item By \cref{lem:value_local_smoothness}, the value function is 
        $\br{\beta \eqq \frac{A^{3/2} H^3}{\sqrt \epsexpl}}$-locally smooth w.r.t.~$\pi \mapsto \norm{\cdot}_{L^2(\mu^\pi), 2}$.
    \end{enumerate}
    The result now follows from \cref{lem:pmd_main} with $\eta=1/(2\beta)$ and $\epsexpl=K^{-2/3}$.
\end{proof}

\section{Conclusions and outlook}
In this work, we introduced a novel theoretical framework and established best-in-class convergence of PMD for general policy classes, subject to an algorithm independent variational gradient dominance condition instead of a closure condition. In addition, we discussed the relation between VGD and closure thoroughly, and demonstrated closure implies VGD but not the other way around (\cref{sec:intro_discussion,sec:vgd_discussion}).
We conclude by outlining two directions for valuable (in our view) future research.
\begin{itemize}[leftmargin=0.4cm] 
    \item \textbf{$\epsilon$-greedy exploration.} Our approach builds on ensuring descent on each iteration, 
    which we establish by demonstrating local smoothness holds \emph{globally}, for any reference policy $\tilde \pi$. 
    As we discuss in \cref{sec:discuss_value_local_smoothness}, it seems that this technique cannot yield better results. However, when the multiplicative ratio $|\pi_{s,a}/ \tilde \pi_{s,a}-1|$ is bounded, arguments similar to those given in \cref{sec:proof:lem:value_local_smoothness} demonstrate a somewhat weaker notion of smoothness --- but \emph{without} dependence on the exploration parameter.
    Furthermore, an analysis approach that combines with the classic mirror descent analysis might do without the per iteration descent property.

    \item \textbf{Non-smooth action regularizers.} Our approach encounters an obstacle that seems related
to existing techniques for non-convex, non-Euclidean proximal point methods, which leads to
the requirement of a smooth regularizer. This is also the source of the degraded rate in the
negative entropy case. Progress can be made by either advancing state-of-the-art in this area
of optimization (or showing the limitation is inherent to the setup), or alternatively exploiting
additional structure specific to the value function.

\end{itemize}

\section*{Acknowledgements}
This project has received funding from the European Research Council (ERC) under the European Union’s Horizon 2020 research and innovation program (grant agreements No.~101078075; 882396).
Views and opinions expressed are however those of the author(s) only and do not necessarily reflect those of the European Union or the European Research Council. Neither the European Union nor the granting authority can be held responsible for them.
This work received additional support from the Israel Science Foundation (ISF, grant numbers 3174/23; 1357/24), and a grant from the Tel Aviv University Center for AI and Data Science (TAD). This work was partially supported by the Deutsch Foundation.


\bibliography{main}

\begin{thebibliography}{57}
\providecommand{\natexlab}[1]{#1}
\providecommand{\url}[1]{\texttt{#1}}
\expandafter\ifx\csname urlstyle\endcsname\relax
  \providecommand{\doi}[1]{doi: #1}\else
  \providecommand{\doi}{doi: \begingroup \urlstyle{rm}\Url}\fi

\bibitem[Agarwal et~al.(2021)Agarwal, Kakade, Lee, and Mahajan]{agarwal2021theory}
A.~Agarwal, S.~M. Kakade, J.~D. Lee, and G.~Mahajan.
\newblock On the theory of policy gradient methods: Optimality, approximation, and distribution shift.
\newblock \emph{Journal of Machine Learning Research}, 22\penalty0 (98):\penalty0 1--76, 2021.

\bibitem[Alfano and Rebeschini(2022)]{alfano2022linear}
C.~Alfano and P.~Rebeschini.
\newblock Linear convergence for natural policy gradient with log-linear policy parametrization.
\newblock \emph{arXiv preprint arXiv:2209.15382}, 2022.

\bibitem[Alfano et~al.(2023)Alfano, Yuan, and Rebeschini]{alfano2023novel}
C.~Alfano, R.~Yuan, and P.~Rebeschini.
\newblock A novel framework for policy mirror descent with general parameterization and linear convergence.
\newblock \emph{Advances in Neural Information Processing Systems}, 36:\penalty0 30681--30725, 2023.

\bibitem[Bauschke et~al.(2017)Bauschke, Bolte, and Teboulle]{bauschke2017descent}
H.~H. Bauschke, J.~Bolte, and M.~Teboulle.
\newblock A descent lemma beyond lipschitz gradient continuity: first-order methods revisited and applications.
\newblock \emph{Mathematics of Operations Research}, 42\penalty0 (2):\penalty0 330--348, 2017.

\bibitem[Beck(2017)]{beck2017first}
A.~Beck.
\newblock \emph{First-order methods in optimization}.
\newblock SIAM, 2017.

\bibitem[Beck and Teboulle(2003)]{beck2003mirror}
A.~Beck and M.~Teboulle.
\newblock Mirror descent and nonlinear projected subgradient methods for convex optimization.
\newblock \emph{Operations Research Letters}, 31\penalty0 (3):\penalty0 167--175, 2003.

\bibitem[Bhandari and Russo(2021)]{bhandari2021linear}
J.~Bhandari and D.~Russo.
\newblock On the linear convergence of policy gradient methods for finite mdps.
\newblock In \emph{International Conference on Artificial Intelligence and Statistics}, pages 2386--2394. PMLR, 2021.

\bibitem[Bhandari and Russo(2024)]{bhandari2024global}
J.~Bhandari and D.~Russo.
\newblock Global optimality guarantees for policy gradient methods.
\newblock \emph{Operations Research}, 2024.

\bibitem[Cayci et~al.(2024)Cayci, He, and Srikant]{cayci2024convergence}
S.~Cayci, N.~He, and R.~Srikant.
\newblock Convergence of entropy-regularized natural policy gradient with linear function approximation.
\newblock \emph{SIAM Journal on Optimization}, 34\penalty0 (3):\penalty0 2729--2755, 2024.

\bibitem[Cen et~al.(2022)Cen, Cheng, Chen, Wei, and Chi]{cen2022fast}
S.~Cen, C.~Cheng, Y.~Chen, Y.~Wei, and Y.~Chi.
\newblock Fast global convergence of natural policy gradient methods with entropy regularization.
\newblock \emph{Operations Research}, 70\penalty0 (4):\penalty0 2563--2578, 2022.

\bibitem[Chen and Jiang(2019)]{chen2019information}
J.~Chen and N.~Jiang.
\newblock Information-theoretic considerations in batch reinforcement learning.
\newblock In \emph{International Conference on Machine Learning}, pages 1042--1051. PMLR, 2019.

\bibitem[Epasto et~al.(2020)Epasto, Mahdian, Mirrokni, and Zampetakis]{epasto2020optimal}
A.~Epasto, M.~Mahdian, V.~Mirrokni, and E.~Zampetakis.
\newblock Optimal approximation-smoothness tradeoffs for soft-max functions.
\newblock \emph{Advances in Neural Information Processing Systems}, 33:\penalty0 2651--2660, 2020.

\bibitem[Even-Dar et~al.(2009)Even-Dar, Kakade, and Mansour]{even2009online}
E.~Even-Dar, S.~M. Kakade, and Y.~Mansour.
\newblock Online markov decision processes.
\newblock \emph{Mathematics of Operations Research}, 34\penalty0 (3):\penalty0 726--736, 2009.

\bibitem[Fatkhullin and He(2024)]{fatkhullin2024taming}
I.~Fatkhullin and N.~He.
\newblock Taming nonconvex stochastic mirror descent with general bregman divergence.
\newblock In \emph{International Conference on Artificial Intelligence and Statistics}, pages 3493--3501. PMLR, 2024.

\bibitem[Geist et~al.(2019)Geist, Scherrer, and Pietquin]{geist2019theory}
M.~Geist, B.~Scherrer, and O.~Pietquin.
\newblock A theory of regularized markov decision processes.
\newblock In \emph{International Conference on Machine Learning}, pages 2160--2169. PMLR, 2019.

\bibitem[Ghadimi et~al.(2016)Ghadimi, Lan, and Zhang]{ghadimi2016mini}
S.~Ghadimi, G.~Lan, and H.~Zhang.
\newblock Mini-batch stochastic approximation methods for nonconvex stochastic composite optimization.
\newblock \emph{Mathematical Programming}, 155\penalty0 (1):\penalty0 267--305, 2016.

\bibitem[Grudzien et~al.(2022)Grudzien, De~Witt, and Foerster]{grudzien2022mirror}
J.~Grudzien, C.~A.~S. De~Witt, and J.~Foerster.
\newblock Mirror learning: A unifying framework of policy optimisation.
\newblock In \emph{International Conference on Machine Learning}, pages 7825--7844. PMLR, 2022.

\bibitem[Hiriart-Urruty and Lemar{\'e}chal(2004)]{hiriart2004fundamentals}
J.-B. Hiriart-Urruty and C.~Lemar{\'e}chal.
\newblock \emph{Fundamentals of convex analysis}.
\newblock Springer Science \& Business Media, 2004.

\bibitem[Johnson et~al.(2023)Johnson, Pike-Burke, and Rebeschini]{johnson2023optimal}
E.~Johnson, C.~Pike-Burke, and P.~Rebeschini.
\newblock Optimal convergence rate for exact policy mirror descent in discounted markov decision processes.
\newblock \emph{Advances in Neural Information Processing Systems}, 36:\penalty0 76496--76524, 2023.

\bibitem[Ju and Lan(2022)]{ju2022policy}
C.~Ju and G.~Lan.
\newblock Policy optimization over general state and action spaces.
\newblock \emph{arXiv preprint arXiv:2211.16715}, 2022.

\bibitem[Kakade and Langford(2002)]{kakade2002approximately}
S.~Kakade and J.~Langford.
\newblock Approximately optimal approximate reinforcement learning.
\newblock In \emph{Proceedings of the Nineteenth International Conference on Machine Learning}, pages 267--274, 2002.

\bibitem[Kakade(2001)]{kakade2001natural}
S.~M. Kakade.
\newblock A natural policy gradient.
\newblock \emph{Advances in neural information processing systems}, 14, 2001.

\bibitem[Khodadadian et~al.(2021)Khodadadian, Jhunjhunwala, Varma, and Maguluri]{khodadadian2021linear}
S.~Khodadadian, P.~R. Jhunjhunwala, S.~M. Varma, and S.~T. Maguluri.
\newblock On the linear convergence of natural policy gradient algorithm.
\newblock In \emph{2021 60th IEEE Conference on Decision and Control (CDC)}, pages 3794--3799. IEEE, 2021.

\bibitem[Khodadadian et~al.(2022)Khodadadian, Jhunjhunwala, Varma, and Maguluri]{khodadadian2022linear}
S.~Khodadadian, P.~R. Jhunjhunwala, S.~M. Varma, and S.~T. Maguluri.
\newblock On linear and super-linear convergence of natural policy gradient algorithm.
\newblock \emph{Systems \& Control Letters}, 164:\penalty0 105214, 2022.

\bibitem[Lan(2023)]{lan2023policy}
G.~Lan.
\newblock Policy mirror descent for reinforcement learning: Linear convergence, new sampling complexity, and generalized problem classes.
\newblock \emph{Mathematical programming}, 198\penalty0 (1):\penalty0 1059--1106, 2023.

\bibitem[Lillicrap(2015)]{lillicrap2015continuous}
T.~Lillicrap.
\newblock Continuous control with deep reinforcement learning.
\newblock \emph{arXiv preprint arXiv:1509.02971}, 2015.

\bibitem[Liu et~al.(2019)Liu, Cai, Yang, and Wang]{liu2019neural}
B.~Liu, Q.~Cai, Z.~Yang, and Z.~Wang.
\newblock Neural trust region/proximal policy optimization attains globally optimal policy.
\newblock \emph{Advances in neural information processing systems}, 32, 2019.

\bibitem[Liu et~al.(2020)Liu, Zhang, Basar, and Yin]{liu2020improved}
Y.~Liu, K.~Zhang, T.~Basar, and W.~Yin.
\newblock An improved analysis of (variance-reduced) policy gradient and natural policy gradient methods.
\newblock \emph{Advances in Neural Information Processing Systems}, 33:\penalty0 7624--7636, 2020.

\bibitem[Lu et~al.(2018)Lu, Freund, and Nesterov]{lu2018relatively}
H.~Lu, R.~M. Freund, and Y.~Nesterov.
\newblock Relatively smooth convex optimization by first-order methods, and applications.
\newblock \emph{SIAM Journal on Optimization}, 28\penalty0 (1):\penalty0 333--354, 2018.

\bibitem[McSherry and Talwar(2007)]{mcsherry2007mechanism}
F.~McSherry and K.~Talwar.
\newblock Mechanism design via differential privacy.
\newblock In \emph{48th Annual IEEE Symposium on Foundations of Computer Science (FOCS'07)}, pages 94--103. IEEE, 2007.

\bibitem[Mei et~al.(2020)Mei, Xiao, Szepesvari, and Schuurmans]{mei2020global}
J.~Mei, C.~Xiao, C.~Szepesvari, and D.~Schuurmans.
\newblock On the global convergence rates of softmax policy gradient methods.
\newblock In \emph{International conference on machine learning}, pages 6820--6829. PMLR, 2020.

\bibitem[Mei et~al.(2021)Mei, Gao, Dai, Szepesvari, and Schuurmans]{mei2021leveraging}
J.~Mei, Y.~Gao, B.~Dai, C.~Szepesvari, and D.~Schuurmans.
\newblock Leveraging non-uniformity in first-order non-convex optimization.
\newblock In \emph{International Conference on Machine Learning}, pages 7555--7564. PMLR, 2021.

\bibitem[Mu and Klabjan(2024)]{mu2024second}
S.~Mu and D.~Klabjan.
\newblock On the second-order convergence of biased policy gradient algorithms.
\newblock In \emph{Forty-first International Conference on Machine Learning}, 2024.

\bibitem[Munos and Szepesv{\'a}ri(2008)]{munos2008finite}
R.~Munos and C.~Szepesv{\'a}ri.
\newblock Finite-time bounds for fitted value iteration.
\newblock \emph{Journal of Machine Learning Research}, 9\penalty0 (5), 2008.

\bibitem[Nemirovskij and Yudin(1983)]{nemirovskij1983problem}
A.~S. Nemirovskij and D.~B. Yudin.
\newblock Problem complexity and method efficiency in optimization, 1983.

\bibitem[Nesterov(2013)]{nesterov2013gradient}
Y.~Nesterov.
\newblock Gradient methods for minimizing composite functions.
\newblock \emph{Mathematical programming}, 140\penalty0 (1):\penalty0 125--161, 2013.

\bibitem[Peters and Schaal(2006)]{peters2006policy}
J.~Peters and S.~Schaal.
\newblock Policy gradient methods for robotics.
\newblock In \emph{2006 IEEE/RSJ international conference on intelligent robots and systems}, pages 2219--2225. IEEE, 2006.

\bibitem[Peters and Schaal(2008)]{peters2008reinforcement}
J.~Peters and S.~Schaal.
\newblock Reinforcement learning of motor skills with policy gradients.
\newblock \emph{Neural networks}, 21\penalty0 (4):\penalty0 682--697, 2008.

\bibitem[Rockafellar(1976)]{rockafellar1976monotone}
R.~T. Rockafellar.
\newblock Monotone operators and the proximal point algorithm.
\newblock \emph{SIAM journal on control and optimization}, 14\penalty0 (5):\penalty0 877--898, 1976.

\bibitem[Schulman et~al.(2015)Schulman, Levine, Abbeel, Jordan, and Moritz]{schulman2015trust}
J.~Schulman, S.~Levine, P.~Abbeel, M.~Jordan, and P.~Moritz.
\newblock Trust region policy optimization.
\newblock In \emph{International conference on machine learning}, pages 1889--1897. PMLR, 2015.

\bibitem[Schulman et~al.(2017)Schulman, Wolski, Dhariwal, Radford, and Klimov]{schulman2017proximal}
J.~Schulman, F.~Wolski, P.~Dhariwal, A.~Radford, and O.~Klimov.
\newblock Proximal policy optimization algorithms.
\newblock \emph{arXiv preprint arXiv:1707.06347}, 2017.

\bibitem[Shani et~al.(2020)Shani, Efroni, and Mannor]{shani2020adaptive}
L.~Shani, Y.~Efroni, and S.~Mannor.
\newblock Adaptive trust region policy optimization: Global convergence and faster rates for regularized mdps.
\newblock In \emph{Proceedings of the AAAI Conference on Artificial Intelligence}, pages 5668--5675, 2020.

\bibitem[Sutton et~al.(1999)Sutton, McAllester, Singh, and Mansour]{sutton1999policy}
R.~S. Sutton, D.~McAllester, S.~Singh, and Y.~Mansour.
\newblock Policy gradient methods for reinforcement learning with function approximation.
\newblock \emph{Advances in neural information processing systems}, 12, 1999.

\bibitem[Teboulle(2018)]{teboulle2018simplified}
M.~Teboulle.
\newblock A simplified view of first order methods for optimization.
\newblock \emph{Mathematical Programming}, 170\penalty0 (1):\penalty0 67--96, 2018.

\bibitem[Tomar et~al.(2020)Tomar, Shani, Efroni, and Ghavamzadeh]{tomar2020mirror}
M.~Tomar, L.~Shani, Y.~Efroni, and M.~Ghavamzadeh.
\newblock Mirror descent policy optimization.
\newblock \emph{arXiv preprint arXiv:2005.09814}, 2020.

\bibitem[Tseng(2010)]{tseng2010approximation}
P.~Tseng.
\newblock Approximation accuracy, gradient methods, and error bound for structured convex optimization.
\newblock \emph{Mathematical Programming}, 125\penalty0 (2):\penalty0 263--295, 2010.

\bibitem[Vaswani et~al.(2022)Vaswani, Bachem, Totaro, M\"uller, Garg, Geist, Machado, Samuel~Castro, and Le~Roux]{vaswani22ageneral}
S.~Vaswani, O.~Bachem, S.~Totaro, R.~M\"uller, S.~Garg, M.~Geist, M.~C. Machado, P.~Samuel~Castro, and N.~Le~Roux.
\newblock A general class of surrogate functions for stable and efficient reinforcement learning.
\newblock In \emph{Proceedings of The 25th International Conference on Artificial Intelligence and Statistics}, 2022.

\bibitem[Wang et~al.(2020)Wang, Cai, Yang, and Wang]{wang2020neural}
L.~Wang, Q.~Cai, Z.~Yang, and Z.~Wang.
\newblock Neural policy gradient methods: Global optimality and rates of convergence.
\newblock In \emph{International Conference on Learning Representations}, 2020.

\bibitem[Xiao(2022)]{xiao2022convergence}
L.~Xiao.
\newblock On the convergence rates of policy gradient methods.
\newblock \emph{Journal of Machine Learning Research}, 23\penalty0 (282):\penalty0 1--36, 2022.

\bibitem[Xiong et~al.(2024)Xiong, Fazel, and Xiao]{xiong2024dual}
Z.~Xiong, M.~Fazel, and L.~Xiao.
\newblock Dual approximation policy optimization.
\newblock \emph{arXiv preprint arXiv:2410.01249}, 2024.

\bibitem[Yuan et~al.(2022)Yuan, Gower, and Lazaric]{yuan2022general}
R.~Yuan, R.~M. Gower, and A.~Lazaric.
\newblock A general sample complexity analysis of vanilla policy gradient.
\newblock In \emph{International Conference on Artificial Intelligence and Statistics}, pages 3332--3380. PMLR, 2022.

\bibitem[Yuan et~al.(2023)Yuan, Du, Gower, Lazaric, and Xiao]{yuan2023loglinear}
R.~Yuan, S.~S. Du, R.~M. Gower, A.~Lazaric, and L.~Xiao.
\newblock Linear convergence of natural policy gradient methods with log-linear policies.
\newblock In \emph{The Eleventh International Conference on Learning Representations, {ICLR} 2023, Kigali, Rwanda, May 1-5, 2023}. OpenReview.net, 2023.
\newblock URL \url{https://openreview.net/forum?id=-z9hdsyUwVQ}.

\bibitem[Zanette(2023)]{zanette2023realizability}
A.~Zanette.
\newblock When is realizability sufficient for off-policy reinforcement learning?
\newblock In \emph{International Conference on Machine Learning}, pages 40637--40668. PMLR, 2023.

\bibitem[Zanette et~al.(2020)Zanette, Lazaric, Kochenderfer, and Brunskill]{zanette2020learning}
A.~Zanette, A.~Lazaric, M.~Kochenderfer, and E.~Brunskill.
\newblock Learning near optimal policies with low inherent bellman error.
\newblock In \emph{International Conference on Machine Learning}, pages 10978--10989. PMLR, 2020.

\bibitem[Zhan et~al.(2023)Zhan, Cen, Huang, Chen, Lee, and Chi]{zhan2023policy}
W.~Zhan, S.~Cen, B.~Huang, Y.~Chen, J.~D. Lee, and Y.~Chi.
\newblock Policy mirror descent for regularized reinforcement learning: A generalized framework with linear convergence.
\newblock \emph{SIAM Journal on Optimization}, 33\penalty0 (2):\penalty0 1061--1091, 2023.

\bibitem[Zhang et~al.(2020)Zhang, Koppel, Zhu, and Basar]{zhang2020global}
K.~Zhang, A.~Koppel, H.~Zhu, and T.~Basar.
\newblock Global convergence of policy gradient methods to (almost) locally optimal policies.
\newblock \emph{SIAM Journal on Control and Optimization}, 58\penalty0 (6):\penalty0 3586--3612, 2020.

\bibitem[Zhang and He(2018)]{zhang2018convergence}
S.~Zhang and N.~He.
\newblock On the convergence rate of stochastic mirror descent for nonsmooth nonconvex optimization.
\newblock \emph{arXiv preprint arXiv:1806.04781}, 2018.

\end{thebibliography}
\appendix


\section{Variational Gradient Dominance and Closure Conditions}
\label{sec:vgd_discussion}
In this section, we include detailed discussions regarding the VGD and closure conditions.
In \cref{sec:closure_implies_vgd}, we demonstrate closure $\implies$ VGD; 
In \cref{sec:vgd_not_imply_closure}, we show that VGD $\not\Rightarrow$ closure --- we present a simple example where VGD holds, but closure doesn't and furthermore that bounds of prior works fail to capture convergence of PMD;
Finally, in \cref{sec:vgd_discussion_additional}, we conclude with several general remarks.
\subsection{Closure implies VGD}
\label{sec:closure_implies_vgd}
In this section, we provide formal proofs that closure conditions employed by prior works imply the VGD condition.
Throughout this section, in favor of a simpler comparison, we assume the critic and actor errors are zero, i.e., all algorithms have access to exact action-value functions, and $\epsactM=0$. We introduce the following, slightly extended version of the VGD condition.

\begin{definition}\label{def:vgd_mdp_ext}
	We say a policy class $\Pi$ satisfies $(C_\star, \epsvgd; v^\star)$-VGD if for all $\pi\in \Pi$:
	\begin{align*}
	        V(\pi) - v^\star
	        \leq 
	        C_\star \max_{\tilde \pi \in \Pi}\abr{\nabla V(\pi), \pi - \tilde \pi} + \epsvgd.
	    \end{align*}
\end{definition}

The above extension of the VGD assumption enables a clearer comparison with prior works. As closure implies (approximate) realizability, prior works obtain bounds w.r.t.~the optimal (potentially out-of-class) value function $V^\star=\min_{\pi \in \Delta(\cA)^\cS} V(\pi)$. 
	Our original VGD condition \cref{def:vgd_mdp} is stated with the reasonable $v^\star=V^\star(\Pi)$ choice, however our bounds hold just the same under the assumption VGD holds for other $v^\star$ (such as $v^\star=V^\star$). 

As we show next, both \citet{yuan2023loglinear}\footnote{We refer to their results based on bounded approximation error.}(see \cref{lem:clvgd_log_linear}) and \citet{alfano2023novel} (see \cref{lem:clvgd_generic}) adopt assumptions that imply their policy classes satisfy \cref{def:vgd_mdp_ext} with suitable parameters $C_\star, \epsvgd$ and $v^\star=V^\star$. Notably, the error floors in their convergence results are indeed precisely (up to constant factors) $\epsvgd$. 
We note the implication we establish is not ``perfect'', to make the argument we need slight variations of the original algorithm dependent conditions --- our goal here is to highlight the strong relation between the two assumptions.
Before proceeding, we specifically note the following:
\begin{itemize}
	\item To simplify presentation, we consider closure assumptions (bounded approximation error, concentrability, and distribution mismatch) globally, rather than on the specific iterates selected by the algorithm. However, the same arguments can be made iterate specific, which would lead to VGD conditions on the specific iterates, which is indeed all that is required by our analyses.
	\item The concentrability assumptions employed by \citet{yuan2023loglinear,alfano2023novel} relate to the current policy $\pi^k$ and the next one $\pi^{k+1}$. The direct global extension of this condition would concern a policy $\pi$ and a policy $\pi^+$ selected by a step of the algorithm with the given step size and regularizer.
		Our proof requires $\pi^+$ to be selected differently (e.g., with a different step size choice), which leads to a concentrability assumption that relates to a different $\pi^+$ than the original ones. In this sense, the assumption we make here is a different one, but still qualitatively similar. Again, to simplify presentation we assume stronger concentrability in the lemma statements where $\pi^+$ may be arbitrary, but this can be relaxed as explained above. 
        In addition, our concentrability requires the sampling distribution $v^k$ to support \emph{the current} occupancy measure $\mu^k$ rather than the next one $\mu^{k+1}$. This may actually be considered a weaker assumption than the original one, as the next policy is only determined after performing the step that uses $v^k$. Further, we may always simply select $v^k=\mu^k\circ \pi^k$ to obtain optimal support for $\mu^k$.
	
        \item In \cref{lem:clvgd_generic}, we prove that when (the natural extension of) closure assumptions of \citet{alfano2023novel} hold for Euclidean regularization, VGD holds as well. The claim can be extended to other regularizers with the price of additional regularity assumptions. Regardless, the bounded approximation error assumption of \citet{alfano2023novel} may alternatively be interpreted as a bound on the statistical error, in which case the policy class operated over is $(\nu_\star, 0; V^\star)$-VGD; we provide further details in \cref{sec:closure_non_convex}.
        
        \item The work of \citet{bhandari2024global} demonstrated that closure to policy improvement implies VGD, and further observed there is also a connection between bounded approximation error and VGD (Lemma 16, Appendix B in their work).
		The arguments we give below may be considered a generalization of those in \citet{bhandari2024global}, strengthening  the connection between closure and VGD.

\end{itemize}
\begin{lemma}[Generic closure $\implies$ VGD]
\label{lem:closure_implies_VGD_main}
Let $\Pi$ be a policy class, $\pi \in \Pi$ a policy, and $v\in \Delta(\cS\times \cA)$ a state-action probability measure. Suppose there exists $\pi^+\in \Pi$ such that:
\begin{align*}
	\E_{s\sim \mu^\pi}\abr{\widehat Q_s^\pi, \pi^+_s}
	&\leq \E_{s\sim \mu^\pi}\min_{a} \widehat Q_{s, a}^\pi + \epsgreedy,
	\\
	\text{where }
	\E_{s, a\sim v}\sbr{\br{\widehat Q^\pi_{s, a} - Q^\pi_{s, a}}^2}
	&\leq \epsapproxM,
\end{align*}
and further:
\begin{align*}
    \E_{s,a \sim v}\sbr{\br{\frac{\tilde \mu(s)\tilde \pi_{s, a}}{v(s, a)}}^2}
	\leq C_v,
	\tag{$v$-concentrability}
\end{align*}
For $\tilde \pi\in\cbr{\pi, \pi^+, \pi^\star}, \tilde \mu\in \cbr{\mu^\pi, \mu^\star}$, where $\pi^\star=\argmin_{\pi\in \Delta(\cA)^\cS}V(\pi), \mu^\star=\mu^{\pi^\star}$.
Then, it holds that
$$
	V(\pi) - V^\star
	\leq 
	\norm{\frac{\mu^\star}{\mu^\pi}}_\infty
		\max_{\tilde \pi\in \Pi}\abr{\nabla V(\pi), \pi - \tilde \pi}
         + H\norm{\frac{\mu^\star}{\mu^\pi}}_\infty\br{\epsgreedy + 4\sqrt{C_v \epsapproxM}}.
$$
\end{lemma}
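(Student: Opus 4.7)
The plan is to reduce the target inequality to the policy gradient identity
$H\,\E_{s\sim \mu^\pi}\abr{Q^\pi_s, \pi_s - \pi^+_s} = \abr{\nabla V(\pi), \pi - \pi^+}$,
which, since $\pi^+\in\Pi$, is already upper bounded by $\max_{\tilde\pi\in\Pi}\abr{\nabla V(\pi), \pi-\tilde\pi}$. Starting from the performance difference lemma in the cost/regret convention, $V(\pi) - V^\star = H\,\E_{s\sim\mu^\star}\abr{Q^\pi_s, \pi_s - \pi^\star_s}$, the route is a carefully ordered chain of substitutions that manipulates this expression until only the policy-gradient-like term and small errors remain.

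Concretely, I first swap $Q^\pi$ for $\widehat Q^\pi$ in both slots, absorbing the critic residuals $H\,\E_{s\sim\mu^\star}\abr{Q^\pi_s - \widehat Q^\pi_s, \pi_s}$ and $H\,\E_{s\sim\mu^\star}\abr{\widehat Q^\pi_s - Q^\pi_s, \pi^\star_s}$ into an error bucket. The main term is then upper bounded using $\abr{\widehat Q^\pi_s, \pi^\star_s}\geq \min_a \widehat Q^\pi_{s,a}$, giving the pointwise non-negative expression $H\,\E_{s\sim\mu^\star}[\abr{\widehat Q^\pi_s, \pi_s} - \min_a \widehat Q^\pi_{s,a}]$. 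Non-negativity is what enables the measure swap $\mu^\star\to\mu^\pi$ at the cost of the prefactor $\norm{\mu^\star/\mu^\pi}_\infty$. Invoking the near-greedy assumption then replaces $\min_a \widehat Q^\pi_{s,a}$ with $\abr{\widehat Q^\pi_s, \pi^+_s}$ up to $\epsgreedy$, and a second $\widehat Q^\pi\to Q^\pi$ substitution inside the resulting inner product produces two further critic residuals $H\,\E_{s\sim\mu^\pi}\abr{\widehat Q^\pi_s - Q^\pi_s, \pi_s}$ and $H\,\E_{s\sim\mu^\pi}\abr{Q^\pi_s - \widehat Q^\pi_s, \pi^+_s}$, leaving precisely the policy-gradient identity above.

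The four critic residuals all take the form $H\,\E_{s\sim\tilde\mu}\abr{\widehat Q^\pi_s - Q^\pi_s, \tilde\pi_s}$ with $(\tilde\mu,\tilde\pi)\in\cbr{\mu^\star,\mu^\pi}\times\cbr{\pi,\pi^+,\pi^\star}$. Changing measure to $v$ through the importance ratio $\tilde\mu(s)\tilde\pi_{s,a}/v(s,a)$ and applying Cauchy--Schwarz bounds each by $H\sqrt{C_v\epsapproxM}$ via the $v$-concentrability and $L^2(v)$-approximation assumptions. Two of the four additionally carry the $\norm{\mu^\star/\mu^\pi}_\infty$ prefactor inherited from the earlier measure swap, while the other two do not. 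Using $\norm{\mu^\star/\mu^\pi}_\infty\geq 1$ (a consequence of both being probability measures), they collectively contribute $4H\norm{\mu^\star/\mu^\pi}_\infty\sqrt{C_v\epsapproxM}$, matching the statement.

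The main obstacle is ensuring that every critic residual produced concerns a $(\tilde\mu,\tilde\pi)$ pair in the catalog supplied by the hypothesis. A natural but doomed alternative would bound $\av{\min_a \widehat Q^\pi_{s,a} - \min_a Q^\pi_{s,a}}$ directly, which forces concentrability for the greedy policy of $\widehat Q^\pi$ (or $Q^\pi$), generically out of both $\Pi$ and the catalog. The ordering above sidesteps this precisely by never swapping $\min_a \widehat Q^\pi$ for $\min_a Q^\pi$: the $\min$ is carried through the measure change and only then dissolved by the greedy assumption into $\abr{\widehat Q^\pi_s, \pi^+_s}$, with $\pi^+$ safely inside both $\Pi$ and the concentrability catalog.
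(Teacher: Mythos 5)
Your proposal is correct and follows essentially the same route as the paper's proof: value difference under $\mu^\star$, swap to $\widehat Q^\pi$, upper bound by the pointwise non-negative greedification gap $\E_{s\sim\mu^\star}\max_p\abr{\widehat Q^\pi_s,\pi_s-p}$, change measure to $\mu^\pi$ at cost $\norm{\mu^\star/\mu^\pi}_\infty$, apply the near-greedy assumption, swap back to $Q^\pi$, and finish with the policy gradient identity, with each of the four critic residuals bounded by $\sqrt{C_v\epsapproxM}$ via the $v$-concentrability change of measure and Cauchy--Schwarz. The error accounting (two residuals with the prefactor, two without, collected via $\norm{\mu^\star/\mu^\pi}_\infty\geq 1$) also matches the paper exactly.
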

\begin{proof}
We first establish bounds on approximation error terms, then proceed to leverage the approximate greedification assumption to establish VGD.
\paragraph{Approximation error.}
    For any policy $\tilde \pi$ and state-occupancy $\tilde \mu$, , we have:
$$
\E_{s\sim \tilde \mu}\abr{Q^\pi_s - \widehat Q^\pi_s, \tilde \pi_s}
=
\abr{Q^\pi - \widehat Q^\pi, \tilde \mu\circ \tilde \pi}
\leq
\norm{Q^\pi - \widehat Q^\pi}_{L^2(v)} 
\norm{\tilde \mu \circ \tilde \pi}_{L^2(v)}^*
\leq 
\sqrt{\epsapproxM}\norm{\tilde \mu \circ \tilde \pi}_{L^2(v)}^*,
$$
where the last inequality is by our assumption.
Further, by $v$-concentrability,
$$
\norm{\tilde \mu \circ \tilde \pi}_{L^2(v)}^*
=
\sqrt{\E_{s, a \sim v}
\br{\frac{\tilde \mu(s)\tilde \pi_{s, a}}{v(s, a)}}^2}
\leq \sqrt{C_v},
$$
holds for $\tilde \pi\in\cbr{\pi, \pi^+, \pi^\star}, \tilde \mu\in \cbr{\mu^\pi, \mu^\star}$. Now, for such $\tilde \mu, \tilde \pi$, we have:
\begin{align*}
	\av{\E_{s\sim \tilde \mu}\abr{Q^\pi_s - \widehat Q^\pi_s, \pi_s - \tilde \pi_s}}
	\leq
	\av{\E_{s\sim \tilde \mu}\abr{Q^\pi_s - \widehat Q^\pi_s, \pi_s}}
	+ \av{\E_{s\sim \tilde \mu}\abr{Q^\pi_s - \widehat Q^\pi_s, \tilde \pi_s}}
	\leq
	2\sqrt{C_v\epsapproxM},
\end{align*}
therefore,
\begin{align*}
	\av{\E_{s\sim \mu^\pi}\abr{Q^\pi_s - \widehat Q^\pi_s, \pi_s - \pi_s^+}}
	&\leq
	2\sqrt{C_v\epsapproxM},
	\\
	\av{\E_{s\sim \mu^\star}\abr{Q^\pi_s - \widehat Q^\pi_s, \pi_s - \pi_s^\star}}
	&\leq
	2\sqrt{C_v\epsapproxM}.
\end{align*}
\paragraph{Greedification.}
Observe,
\begin{align*}
	\E_{s\sim \mu^\pi}\abr{Q^\pi_s, \pi_s - \pi^+_s}
	&= \E_{s\sim \mu^\pi}\abr{\widehat Q^\pi_s, \pi_s - \pi^+_s}
	+ \E_{s\sim \mu^\pi}\abr{Q^\pi_s - \widehat Q^\pi_s, \pi_s - \pi^+_s}
	\\
	&\geq
	\E_{s\sim \mu^\pi}\max_p\abr{\widehat Q^\pi_s, \pi_s - p}
	-\epsgreedy
	- 2\sqrt{C_v \epsapproxM}
    \\
    \implies
    \E_{s\sim \mu^\pi}\max_p\abr{\widehat Q^\pi_s, \pi_s - p}
    &\leq 
    \E_{s\sim \mu^\pi}\abr{Q^\pi_s, \pi_s - \pi^+_s}
    +\epsgreedy
	+ 2\sqrt{C_v \epsapproxM}
    .
\end{align*}
Therefore, by \cref{lem:value_diff} (value difference),
\begin{align*}
	\frac{1}{H}\br{V(\pi) - V^\star}
	&= \E_{s\sim \mu^\star}\abr{Q^\pi_s, \pi_s - \pi^\star_s}
	\\
	&= \E_{s\sim \mu^\star}\abr{\widehat Q^\pi_s, \pi_s - \pi^\star_s}
	+ \E_{s\sim \mu^\star}\abr{Q^\pi_s - \widehat Q^\pi_s, \pi_s - \pi^\star_s}
	\\
	&\leq 
	\E_{s\sim \mu^\star}\max_{p\in\Delta(\cA)}\abr{\widehat Q^\pi_s, \pi_s - p}
	+ 2\sqrt{C_v \epsapproxM}
	\\
	&\leq 
	\norm{\frac{\mu^\star}{\mu^\pi}}_\infty\E_{s\sim \mu^\pi}\max_{p\in\Delta(\cA)}\abr{\widehat Q^\pi_s, \pi_s - p}
	+ 2\sqrt{C_v \epsapproxM}
	\\
	&\leq 
	\norm{\frac{\mu^\star}{\mu^\pi}}_\infty
		\E_{s\sim \mu^\pi}\abr{Q^\pi_s, \pi_s - \pi^+_s}
         + \norm{\frac{\mu^\star}{\mu^\pi}}_\infty\br{\epsgreedy + 2\sqrt{C_v \epsapproxM}}
         + 2\sqrt{C_v \epsapproxM}
	\\
	&\leq 
	\norm{\frac{\mu^\star}{\mu^\pi}}_\infty
		\E_{s\sim \mu^\pi}\abr{Q^\pi_s, \pi_s - \pi^+_s}
         + \norm{\frac{\mu^\star}{\mu^\pi}}_\infty\br{\epsgreedy + 4\sqrt{C_v \epsapproxM}}
    \\
	&= 
	\frac1H\norm{\frac{\mu^\star}{\mu^\pi}}_\infty
		\abr{\nabla V(\pi), \pi - \pi^+}
         + \norm{\frac{\mu^\star}{\mu^\pi}}_\infty\br{\epsgreedy + 4\sqrt{C_v \epsapproxM}}
    \\
	&\leq 
	\frac1H\norm{\frac{\mu^\star}{\mu^\pi}}_\infty
		\max_{\tilde \pi\in \Pi}\abr{\nabla V(\pi), \pi - \tilde \pi}
         + \norm{\frac{\mu^\star}{\mu^\pi}}_\infty\br{\epsgreedy + 4\sqrt{C_v \epsapproxM}}
    ,
\end{align*}
which completes the proof after multiplying by $H$.
\end{proof}

\begin{lemma}[Log-linear dual closure $\implies$ VGD]
\label{lem:clvgd_log_linear}
	Let $\cbr{\phi_{s, a}}_{s\in\cS, a\in\cA}\subseteq\R^d$ be state-action feature vectors,
	and let $\Pi$ be the log-linear policy class $\Pi = \cbr{\pi(\theta) \mid \theta\in \R^d}$, where
	\begin{align*}
		\pi_{s, a}(\theta) 
		\eqq 
		\frac{\exp(\phi_{s, a}\T \theta)}{\sum_{a'\in \cA}\exp(\phi_{s, a'}\T \theta)}.
	\end{align*}
	Assume further that for all $\pi\in \Pi$
	it holds that
	\begin{align*}
		\min_w\E_{s, a \sim (\mu^\pi \circ \pi)}\sbr{\br{
			w\T \phi_{s, a} - Q^\pi_{s, a}
		}^2}\leq \epsapproxM,
	\end{align*}
	and, 
	\begin{align*}
		\norm{\frac{\mu^\star}{\mu^\pi}}_\infty\leq \nu_\star,
	\end{align*}
	and,
	\begin{align*}
		\E_{s, a \sim (\mu^\pi \circ \pi)}\sbr{
			\br{\frac{h^\pi_{s,a}}{\mu^k(s)\pi_{s, a}}}^2
		}\leq C_{\nu},
	\end{align*}
	where $h^\pi$ represents $\tilde \mu \circ \tilde \pi$ 
	for all
	$\tilde \pi\in\Pi, \tilde \mu\in \cbr{\mu^\pi, \mu^\star}$, 
	and we denote $\pi^\star=\argmin_{\pi\in \Delta(\cA)^\cS}V(\pi), \mu^\star=\mu^{\pi^\star}$.
	Then $\Pi$ satisfies $(\nu_\star, 5\nu_\star H \sqrt{C_\nu \epsapproxM}; V^\star)$-VGD (\cref{def:vgd_mdp_ext}).
\end{lemma}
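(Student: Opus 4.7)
The plan is to invoke the generic closure-to-VGD implication \cref{lem:closure_implies_VGD_main} with sampling distribution $v := \mu^\pi \circ \pi$ and critic $\widehat Q^\pi := (w^\pi)^\top \phi$, where $w^\pi$ attains the infimum in the approximation-error hypothesis. This choice immediately satisfies the approximation-error precondition $\E_{s,a\sim v}[(\widehat Q^\pi_{s,a} - Q^\pi_{s,a})^2] \leq \epsapproxM$. The remaining work is to exhibit an approximately $\widehat Q^\pi$-greedy policy $\pi^+\in\Pi$ and to supply the concentrability precondition for the pairs $(\tilde\mu, \tilde\pi)$ that appear in the generic lemma.

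To construct $\pi^+$, I would exploit the log-linear parametrization and set $\pi^+ := \pi(\theta - \lambda w^\pi)$, so that $\pi^+_{s,a} \propto \pi_{s,a}\exp(-\lambda\,\widehat Q^\pi_{s,a})$. A standard softmax calculation shows $\E_{s\sim\mu^\pi}\abr{\widehat Q^\pi_s, \pi^+_s} - \E_{s\sim\mu^\pi}\min_a \widehat Q^\pi_{s,a} \to 0$ as $\lambda\to\infty$, and since the assumed concentrability bound $C_\nu$ is uniform over $\Pi$ it continues to apply to $\pi^+$ at every finite $\lambda$. Taking $\lambda$ large enough therefore realizes the greedification precondition with $\epsgreedy$ arbitrarily small, which lets us drop it from the final bound.

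The main obstacle is the $\tilde\pi = \pi^\star$ case of the concentrability precondition in \cref{lem:closure_implies_VGD_main}, since the deterministic $\pi^\star$ need not belong to $\Pi$ and is not covered by the log-linear concentrability assumption. My plan is to not invoke the generic lemma as a black box but to rerun its proof with one modification: the step that bounds $\E_{s\sim\mu^\star}\abr{Q^\pi_s - \widehat Q^\pi_s, \pi^\star_s}$ via $(\mu^\star, \pi^\star)$-concentrability is replaced with an upfront application of the distribution-mismatch hypothesis, namely
\begin{align*}
V(\pi) - V^\star \;\leq\; H\,\E_{s\sim\mu^\star}\max_{p\in\Delta(\cA)}\abr{Q^\pi_s, \pi_s - p} \;\leq\; H\,\nu_\star\,\E_{s\sim\mu^\pi}\max_{p\in\Delta(\cA)}\abr{Q^\pi_s, \pi_s - p} .
\end{align*}
After this reduction every remaining expectation is under $\mu^\pi$ and involves only policies in $\{\pi, \pi^+\} \subset \Pi$, which are covered by the assumed concentrability.

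To close the argument, the inner $\max_p$ above is converted into $\abr{Q^\pi_s, \pi_s - \pi^+_s}$, which is at most $H^{-1}\max_{\tilde\pi\in\Pi}\abr{\nabla V(\pi), \pi-\tilde\pi}$, at the cost of a $Q^\pi$-greedification gap $\E_{s\sim\mu^\pi}[\abr{Q^\pi_s, \pi^+_s} - \min_a Q^\pi_{s,a}]$. This gap is routed through $\widehat Q^\pi$: the $\widehat Q$-greedification contribution vanishes as $\lambda \to \infty$, the cross-term $\E_{s\sim\mu^\pi}\abr{Q^\pi_s - \widehat Q^\pi_s, \pi^+_s}$ is $\sqrt{C_\nu \epsapproxM}$ by Cauchy--Schwarz against concentrability of $\pi^+ \in \Pi$, and the residual $\min$-difference term is handled in the same style. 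Tallying the $\sqrt{C_\nu \epsapproxM}$ contributions together with the $H \nu_\star$ prefactor yields the claimed $5 \nu_\star H \sqrt{C_\nu \epsapproxM}$ error floor, the constant $5$ absorbing the small number of $\sqrt{C_\nu \epsapproxM}$-type terms generated along the way.
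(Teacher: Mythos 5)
Your overall strategy matches the paper's: the same sampling distribution $v=\mu^\pi\circ\pi$, the same critic $\widehat Q^\pi_{s,a}=\phi_{s,a}^\top w^\pi_\star$, a temperature-scaled log-linear policy $\pi^+$ in the direction of $w^\pi_\star$ to realize approximate greedification (the paper invokes \cref{lem:softmax_approx} with $\theta^+$ proportional to $w_\star^\pi$; you use $\theta-\lambda w^\pi$ --- both lie in $\Pi$ and both work), and a reduction to \cref{lem:closure_implies_VGD_main}. The paper, however, stops there: it applies the generic lemma as a black box, reading the concentrability hypothesis as covering every pair that lemma requires, including $(\tilde\mu,\tilde\pi)=(\mu^\star,\pi^\star)$.

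Where you deviate is the workaround for $\pi^\star$-concentrability, and that is where there is a genuine gap. After the (valid) change of measure $V(\pi)-V^\star\leq H\nu_\star\,\E_{s\sim\mu^\pi}\max_{p}\abr{Q^\pi_s,\pi_s-p}$, you must convert the max taken w.r.t.\ the \emph{true} $Q^\pi$ into $\abr{Q^\pi_s,\pi_s-\pi^+_s}$ plus error. Decomposing
\begin{align*}
\abr{Q^\pi_s,\pi^+_s}-\min_aQ^\pi_{s,a}
=\br{\abr{\widehat Q^\pi_s,\pi^+_s}-\min_a\widehat Q^\pi_{s,a}}
+\abr{Q^\pi_s-\widehat Q^\pi_s,\pi^+_s}
+\br{\min_a\widehat Q^\pi_{s,a}-\min_aQ^\pi_{s,a}},
\end{align*}
the first two terms are fine ($\epsgreedy$, and Cauchy--Schwarz against concentrability of $\pi^+\in\Pi$). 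But the third term is controlled only via $\widehat Q^\pi_{s,a^*(s)}-Q^\pi_{s,a^*(s)}$ with $a^*(s)=\argmin_aQ^\pi_{s,a}$, i.e., a critic error evaluated along a \emph{deterministic} greedy policy. That policy is not in $\Pi$ (log-linear policies have full support), so the assumed concentrability does not apply to it, and the on-policy $L^2(\mu^\pi\circ\pi)$ critic error gives no handle on a single action per state without an extra coverage assumption (e.g., a lower bound on $\min_{s,a}\pi_{s,a}$, or concentrability against deterministic policies). So ``handled in the same style'' does not go through. The paper's route avoids this entirely: the generic lemma keeps every $\max_p$ w.r.t.\ $\widehat Q^\pi$ and pays critic error only on the pairs $(\mu^\star,\pi)$, $(\mu^\star,\pi^\star)$, $(\mu^\pi,\pi)$, $(\mu^\pi,\pi^+)$, for which it assumes concentrability. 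Either strengthen your hypotheses to cover the greedy policy, or read the concentrability condition as the paper does (including $\pi^\star$) and apply \cref{lem:closure_implies_VGD_main} directly.
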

\begin{proof}
	Let $\pi\in \Pi$, and denote $\widehat Q^\pi_{s, a} = \phi_{s, a}\T w_\star^\pi$ where 
	\begin{align*}
		w_\star^\pi \eqq \argmin_w\E_{s, a \sim (\mu^\pi \circ \pi)}\sbr{\br{
			w\T \phi_{s, a} - Q^\pi_{s, a}
		}^2}.
	\end{align*}
	By \cref{lem:softmax_approx}, the policy $\pi^+ \eqq \pi(\theta^+)\in \Pi$ defined by
	$\theta^+ \eqq (\log(d)/\epsgreedy) w_\star^\pi$ satisfies
	\begin{align*}
		\forall s: \abr{\widehat Q_s^\pi, \pi^+_s}
		&\leq \min_{a} \widehat Q_{s, a}^\pi + \epsgreedy.
	\end{align*}
	Now, by the above and our assumptions, we are in the position to apply \cref{lem:closure_implies_VGD_main}, which immediately implies the desired for $\epsgreedy=\sqrt{C_\nu \epsapproxM}$.
\end{proof}

\begin{lemma}[\cite{mcsherry2007mechanism, epasto2020optimal}]
\label{lem:softmax_approx}
	Let $x_1, \ldots, x_d \in \R$.
	 Then if $\tau \geq (\log d) / \delta$, it holds that
	\begin{align*}
		\frac{\sum_i e^{-\tau x_i} x_i}
		{\sum_i e^{-\tau x_i}} 
		\leq \min_i x_i + \delta.
	\end{align*}
\end{lemma}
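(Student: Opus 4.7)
The plan is to prove the stated softmin-vs-min inequality via a short identity-plus-inequality argument, rather than a direct case split. Writing $Z \eqq \sum_j e^{-\tau x_j}$ and $w_i \eqq e^{-\tau x_i}/Z$, the left-hand side of the claim is precisely the expectation $\sum_i w_i x_i$ under the softmin distribution $w \in \Delta(\{1,\dots,d\})$, so the goal is to show $\sum_i w_i x_i \leq \min_i x_i + \delta$ whenever $\tau \geq (\log d)/\delta$.

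The first step, and the analytical heart of the proof, is the identity
\begin{align*}
    \sum_i w_i x_i \;=\; -\frac{1}{\tau}\log Z \;+\; \frac{1}{\tau}\, H(w),
\end{align*}
where $H(w) \eqq -\sum_i w_i \log w_i$ is the Shannon entropy of $w$. I would verify this by plugging the explicit form $w_i = e^{-\tau x_i}/Z$ into $H(w)$: the logarithm becomes $-\tau x_i - \log Z$, so $\tfrac{1}{\tau}H(w) = \sum_i w_i x_i + \tfrac{1}{\tau}\log Z$, and rearranging gives the displayed identity. Intuitively, this just says the weighted average of the $x_i$ under the softmin distribution equals the log-sum-exp value plus an entropy correction.

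With the identity in hand, two elementary bounds finish the job. The term $-\tfrac{1}{\tau}\log Z$ is the standard softmin of $x$ and is upper-bounded by $\min_i x_i$, since $Z \geq e^{-\tau \min_i x_i}$ gives $\log Z \geq -\tau \min_i x_i$. Next, since $w$ is a distribution on $d$ atoms, $H(w) \leq \log d$. Combining these with the hypothesis $\tau \geq (\log d)/\delta$:
\begin{align*}
    \sum_i w_i x_i \;\leq\; \min_i x_i + \frac{\log d}{\tau} \;\leq\; \min_i x_i + \delta,
\end{align*}
which is exactly the claimed inequality.

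I do not anticipate any real obstacles here; the only step that requires any writing is the entropy identity, which is a one-line computation once unfolded. An alternative route would split indices according to $x_i \leq \min_j x_j + \delta$ versus $x_i > \min_j x_j + \delta$ and bound the positive contributions via the fact that $u \mapsto u e^{-u}$ is decreasing past its maximum, but this forces a minor case distinction at very small $d$ (the bound $y e^{-\tau y} \leq \delta/d$ for $y \geq \delta$ uses that $\tau\delta \geq \log d \geq 1$, which fails at $d=2$). The entropy-based argument avoids this entirely and yields a uniform, one-shot proof.
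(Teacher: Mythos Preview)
Your proof is correct and self-contained. The entropy identity $\sum_i w_i x_i = -\tfrac{1}{\tau}\log Z + \tfrac{1}{\tau}H(w)$ checks out, and the two bounds $-\tfrac{1}{\tau}\log Z \leq \min_i x_i$ and $H(w)\leq \log d$ finish the job cleanly.

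The paper, by contrast, does not give a self-contained argument at all: it simply substitutes $z_i=-x_i$ to rewrite the claim as $\max_i z_i - \sum_i e^{\tau z_i}z_i/\sum_i e^{\tau z_i}\leq \delta$ and then defers to the cited references (McSherry--Talwar and Epasto et al.) for that softmax-vs-max bound. So your route is genuinely different in that it actually proves the inequality, via the log-sum-exp/entropy decomposition, whereas the paper's ``proof'' is just a sign-flip reduction to the literature. Your argument is the natural one underlying the cited results anyway, and it has the advantage of being explicit and uniform in $d$ (as you note, the alternative index-splitting argument you sketch would need a separate check for very small $d$).
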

\begin{proof}
	We have
	\begin{align*}
		\frac{\sum_i e^{-\tau x_i} x_i}
		{\sum_i e^{-\tau x_i}} - \min_i x_i 
		= 
		\max_i \cbr{-x_i}
		-
		\frac{\sum_i e^{-\tau x_i} (-x_i)}{\sum_i e^{-\tau x_i}}
	\end{align*}
	The result now follows from the original statement, which says that for any $z_1, \ldots, z_n \in \R$,
	\begin{align*}
		\max_i z_i - \frac{\sum_i e^{\tau z_i} z_i}{\sum_i e^{\tau z_i}}
		\leq \delta.
            \qquad \qedhere
	\end{align*}
\end{proof}

Next, we provide a proof for closure conditions of \citet{alfano2023novel} in the case of a regularizer with a bounded Bregman divergence, which simplifies some technical issues and is sufficient for the Euclidean case. The implication can be shown to hold more generally subject to some additional regularity conditions on the policy class. We note that such a general version of the lemma would in particular imply \cref{lem:clvgd_log_linear}, thus rendering the above proof redundant. However, we opted for an independent proof of \cref{lem:clvgd_log_linear} to avoid the additional regularity assumptions.
\begin{lemma}[Generic dual closure $\implies$ VGD]
\label{lem:clvgd_generic}
Let $\Pi\subset \Delta(\cA)^\cS$ be a policy class, and $R\colon \R^A \to \R$ be an action regularizer.
For any policy $\pi$ let $\eta > 0$ be a chosen step size and $v$
be a chosen state-action probability measure. Define
\begin{align*}
	f^+ \eqq f^{+}(\pi, \eta) &\eqq \argmin_{f\in \cF}
	\norm{f - \br{\eta^{-1}\nabla R(\pi) -  Q^\pi}}_{L^2(v)}^2
	\\
	\pi^+ \eqq \pi^+(\pi, \eta) &\eqq P_R(\eta f^+),
\end{align*}
where $P_R(\eta f)_{s} \eqq \Pi_{\Delta(\cA)}^R(\nabla R^*(\eta f_s))$.
Assume that:
\begin{align*}
  	\norm{f^+  - \br{\eta^{-1}\nabla R(\pi) - Q^\pi}}_{L^2(v)}^2	\leq \epsapproxM,
	\tag{A1}	  
\end{align*}
and for $\tilde \pi\in\cbr{\pi, \pi^+, \pi^\star}, \tilde \mu\in \cbr{\mu^\pi, \mu^\star}$:
\begin{align*}
    \E_{s,a \sim v}\sbr{\br{\frac{\tilde \mu(s)\tilde \pi_{s, a}}{v(s, a)}}^2}
	\leq C_v,
	\tag{A2}
\end{align*}
and finally,
\begin{align*}
    \sup_s \frac{\mu^\star(s)}{\mu^\pi(s)}
    \leq \nu_\star.
	\tag{A3}
\end{align*}
Then, if $R$ has a bounded Bregman divergence, $B\geq \max_{p, q\in \Delta(\cA)}B_R(p, q)$, and the above holds for any $\eta$, it holds that 
$\Pi$ satifies $\br{\nu_\star, 5H\nu_\star\sqrt{C_v \epsapproxM}; V^\star}$-VGD (\cref{def:vgd_mdp_ext}).
\end{lemma}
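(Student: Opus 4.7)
The strategy is to reduce the claim to the previously established \cref{lem:closure_implies_VGD_main}. To do so, I would identify a critic $\widehat Q^\pi$ and a policy $\pi^+ \in \Pi$ that jointly satisfy the two hypotheses of that lemma: (i) bounded $L^2(v)$ critic error, and (ii) approximate greedification of $\pi^+$ w.r.t.\ $\widehat Q^\pi$. The setup of the current lemma directly supplies $\pi^+ = P_R(\eta f^+)$, so the natural choice is to define $\widehat Q^\pi \eqq \eta^{-1}\nabla R(\pi) - f^+$; assumption (A1) then immediately gives $\|\widehat Q^\pi - Q^\pi\|_{L^2(v)}^2 \leq \epsapproxM$, which is exactly the critic error required by \cref{lem:closure_implies_VGD_main}. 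Together with (A2) and (A3), the hypotheses of that lemma are essentially in place, and the only nontrivial step remaining is establishing greedification.

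\textbf{Mirror descent interpretation.} The key observation is that $\pi^+_s = \Pi_{\Delta(\cA)}^R(\nabla R^*(\eta f^+_s))$ coincides with a mirror descent step from $\pi_s$ using gradient $\widehat Q^\pi_s$ and step size $\eta$. Indeed, solving $\pi^+_s = \argmin_{p \in \Delta(\cA)}\{\eta \langle \widehat Q^\pi_s, p\rangle + B_R(p, \pi_s)\}$ gives the mirror-map expression $\Pi_{\Delta(\cA)}^R(\nabla R^*(\nabla R(\pi_s) - \eta \widehat Q^\pi_s))$, and plugging in $\widehat Q^\pi_s = \eta^{-1}\nabla R(\pi_s) - f^+_s$ reduces this to $\Pi_{\Delta(\cA)}^R(\nabla R^*(\eta f^+_s))$, matching the definition of $\pi^+$. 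This identification is the conceptual heart of the proof.

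\textbf{Greedification via the three-point identity.} Using first-order optimality of $\pi^+_s$ together with the three-point identity for Bregman divergences, for any $p \in \Delta(\cA)$ I would derive
\[
\eta\langle \widehat Q^\pi_s, \pi^+_s - p\rangle \leq B_R(p, \pi_s) - B_R(p, \pi^+_s) - B_R(\pi^+_s, \pi_s) \leq B_R(p, \pi_s) \leq B,
\]
so in particular, taking $p$ to be the indicator of $\argmin_a \widehat Q^\pi_{s, a}$, we obtain
$\langle \widehat Q^\pi_s, \pi^+_s\rangle \leq \min_a \widehat Q^\pi_{s, a} + B/\eta$ for every $s$. Averaging over $s \sim \mu^\pi$ yields the approximate greedification hypothesis of \cref{lem:closure_implies_VGD_main} with $\epsgreedy = B/\eta$. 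I would expect this step to be the main technical hurdle (though routine), and it is exactly where the bounded-Bregman-divergence assumption is used.

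\textbf{Conclusion.} Invoking \cref{lem:closure_implies_VGD_main} with the critic $\widehat Q^\pi$, the policy $\pi^+$, and $\epsgreedy = B/\eta$, and bounding $\|\mu^\star/\mu^\pi\|_\infty \leq \nu_\star$ by (A3), I would obtain
\[
V(\pi) - V^\star \leq \nu_\star \max_{\tilde \pi \in \Pi}\langle \nabla V(\pi), \pi - \tilde \pi\rangle + H\nu_\star\br{B/\eta + 4\sqrt{C_v \epsapproxM}}.
\]
Since the hypotheses hold for every $\eta$, I would choose $\eta \geq B/\sqrt{C_v \epsapproxM}$ to absorb the $B/\eta$ term into the statistical error, yielding the final bound $5H\nu_\star \sqrt{C_v \epsapproxM}$ on the error floor and hence the stated $(\nu_\star, 5H\nu_\star\sqrt{C_v \epsapproxM}; V^\star)$-VGD property.
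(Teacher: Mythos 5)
Your proposal is correct and follows essentially the same route as the paper: define $\widehat Q^\pi \eqq \eta^{-1}\nabla R(\pi) - f^+$ so that (A1) becomes the critic-error hypothesis, recognize $\pi^+$ as a mirror descent step w.r.t.\ $\widehat Q^\pi$, obtain per-state greedification with error $B/\eta$ from optimality of the Bregman step (the paper packages this as \cref{lem:omd_to_greedy}; your three-point-identity derivation is the same bound), and conclude via \cref{lem:closure_implies_VGD_main} with $\eta$ chosen so that $B/\eta = \sqrt{C_v\epsapproxM}$. No meaningful differences.
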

\begin{proof}
Fix $\pi \in \Pi$, and define
\begin{align*}
	\widehat Q^\pi
	&\eqq \eta^{-1}\nabla R(\pi) - f^+
	\\
	\implies
	f^+ 
	&= \eta^{-1}\nabla R(\pi) - \widehat Q^\pi,
\end{align*}
which implies that:
\begin{align*}
	\forall s, \;
	\pi^+_s &= \argmin_{p\in \Delta(\cA)}
		\abr{\widehat Q^\pi_s, p} + \frac1\eta B_R(p, \pi_s)
\end{align*}
We have:
$$
\norm{Q^\pi - \widehat Q^\pi}_{L^2(v)} ^2
=
\norm{f^+ -\br{\eta^{-1}\nabla R(\pi)-Q^\pi}}_{L^2(v)}^2
\leq \epsapproxM,
$$
and for $\eta=B/\epsgreedy$, by \cref{lem:omd_to_greedy}:
$$
	\forall s,\; \abr{\widehat Q^\pi_s, \pi^+_s}
	\leq 
	\min_a \widehat Q^\pi_{s,a} + \epsgreedy.
$$
Choosing $\epsgreedy = \sqrt{C_v\epsapproxM}$, the result follows by \cref{lem:closure_implies_VGD_main}.
\end{proof}

\begin{lemma}\label{lem:omd_to_greedy}
  Let $\epsilon>0$, $R\colon \R^A \to \R$ be a convex regularizer 
  with bounded Bregman divergence $B\geq \max_{p, q\in \Delta(\cA)} B_R(p, q)$,
and $g\in \R^A$ be a linear objective, with $a^\star=\argmin_a g_a$.
Then, for any  $x\in \Delta(A)$, for $\eta\geq B/\epsilon$, we have:
$$
	x^+ = \argmin_{z\in \Delta(A)} \cbr{\abr{g,z} 
            + \frac1\eta B_R(z, x)}
	\implies
	g(x^+) \leq g_{a^\star} + \epsilon.
$$
\end{lemma}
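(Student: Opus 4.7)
The plan is to exploit the optimality of $x^+$ in the displayed minimization problem by comparing its objective value to that of the point-mass $e_{a^\star} \in \Delta(A)$, which is a feasible competitor that realizes the minimum of the linear term $\langle g, \cdot\rangle$ over the simplex.

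Concretely, I would proceed as follows. First, note that $e_{a^\star} \in \Delta(A)$ and $\langle g, e_{a^\star}\rangle = g_{a^\star} = \min_a g_a$. By optimality of $x^+$,
\begin{align*}
\langle g, x^+\rangle + \frac{1}{\eta}B_R(x^+, x)
\;\leq\;
\langle g, e_{a^\star}\rangle + \frac{1}{\eta}B_R(e_{a^\star}, x)
\;=\;
g_{a^\star} + \frac{1}{\eta}B_R(e_{a^\star}, x).
\end{align*}
Rearranging yields
\begin{align*}
\langle g, x^+\rangle
\;\leq\;
g_{a^\star} + \frac{1}{\eta}\bigl(B_R(e_{a^\star}, x) - B_R(x^+, x)\bigr).
\end{align*}
Now use the two basic facts: (i) $B_R(x^+, x) \geq 0$ since $R$ is convex, so the subtracted term only helps; and (ii) the assumed uniform bound $B_R(e_{a^\star}, x) \leq B$ on the Bregman divergence between any two points of $\Delta(\cA)$. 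Combined with the choice $\eta \geq B/\epsilon$, this gives
\begin{align*}
\langle g, x^+\rangle
\;\leq\;
g_{a^\star} + \frac{B}{\eta}
\;\leq\;
g_{a^\star} + \epsilon,
\end{align*}
which is the claim. No step is really the "main obstacle"; the only subtlety is recognizing that plugging in the simplex vertex $e_{a^\star}$ as the competitor in the optimality inequality is exactly what makes the Bregman comparison term bounded by the uniform diameter $B$, and that this vertex simultaneously attains $\min_a g_a$ on the simplex.
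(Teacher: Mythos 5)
Your proof is correct and is essentially identical to the paper's: both compare the objective at $x^+$ against the competitor $e_{a^\star}$, drop the nonnegative term $B_R(x^+,x)$, bound $B_R(e_{a^\star},x)\leq B$, and conclude via $B/\eta\leq\epsilon$. No differences worth noting.
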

\begin{proof}
    By optimality of $x^+$:
    \begin{align*}
        g(x^+) 
        &\leq 
        g(e_{a^\star}) + \frac1\eta B_R(e_{a^\star}, x)
        - \frac1\eta B_R(x^+, x)
        \\
        &\leq 
        g_{a^\star}
        + B/\eta
        \\
        &= 
        g_{a^\star}
        +\epsilon,
    \end{align*}
    and the result follows.
\end{proof}

\subsection{Closure without convexity}
\label{sec:closure_non_convex}
In this section, we explain how the approximate closure conditions of \citet{alfano2023novel}   eliminate the need for convexity of $\Pi$ in our analysis.
Roughly speaking, closure conditions imply approximate optimality conditions hold for the PMD iterates w.r.t.~the complete policy class. And, in our analysis, we obtain guarantees w.r.t.~the policy class the PMD iterates satisfy optimality conditions with respect to, regardless of actual policy class the algorithm operates over.
To make the argument formal, we consider the following assumption, which characterizes the behavior of the algorithm in relation to an ``ambient'' policy class $\widetilde \Pi$.
\begin{assumption}[PMD w.r.t.~ambient $\widetilde \Pi$]\label{assm:comp_main}
	For $\widetilde \Pi$ a policy class, and $\pi^1, \ldots, \pi^{K+1}$ is a sequence of policies, it holds that:
	\begin{enumerate}
		\item $\widetilde \Pi$ is convex.
		\item $\widetilde \Pi$ satisfies $(C_\star, \epsvgd; v^\star)$-VGD on the iterates $\pi^1, \ldots, \pi^{K+1}$:
	    \begin{align*}
	        V(\pi^k) - v^\star
	        \leq 
	        C_\star \max_{\tilde \pi \in \widetilde \Pi}\abr{\nabla V(\pi^k), \pi^k - \tilde \pi} + \epsvgd.
	    \end{align*}
		\item $\pi^1, \ldots, \pi^{K+1}$ satisfy PMD approximate optimality conditions w.r.t.~$\widetilde \Pi'$:
		\begin{align*}
	        \forall \pi \in \widetilde \Pi', 
	        \abr{\nabla \phi_k(\pi^{k+1}), \pi - \pi^{k+1}} \geq -\epsactM,
	    \end{align*}
	    where
	    \begin{aligni*}
	        \phi_k(\pi) \eqq \E_{s\sim \mu^k}\sbr{
	            \abr{Q^{k}_s, \pi_s} 
	            + \frac1\eta B_R(\pi_s, \pi_s^k)}.
	    \end{aligni*}
	\end{enumerate}
\end{assumption}
We note that a PMD algorithm does not necessarily need access to $\widetilde \Pi$ to satisfy \cref{assm:comp_main}. In particular, it may be that the algorithm operates over non-convex $\Pi$, but satisfies \cref{assm:comp_main} with $\widetilde \Pi=\widetilde \Pi' = \Delta(\cA)^\cS$.
Next, we restate our guarantees for the Euclidean case reframed in the context of \cref{assm:comp_main}, and then proceed to demonstrate assumptions of \citet{alfano2023novel} imply \cref{assm:comp_main}.
\begin{theorem*}[Restatement of \cref{thm:pmd_main}; Euclidean case]
	Let $\widetilde \Pi\subset \Delta(\cA)^\cS$ be a policy class
	and suppose $\pi^1, \pi^2, \ldots, \pi^{K+1}$ is a sequence of policies for which \cref{assm:comp_main} holds with $\widetilde \Pi' = \widetilde \Pi^\epsexpl$,
	$R(p)=\frac12\norm{p}_2^2$, and $\eta, \epsexpl$ properly tuned.
	Then, it holds that:
       \begin{align*}
           V(\pi^K) - v^\star=
            \bigO\br{
            \frac{C_\star^2 A^{3/2} H^3}{K^{2/3}}
            + \br{C_\star H + A H^2 K^{1/6}}\sqrt{\epsactM}
            + \epsvgd}
        \end{align*}
\end{theorem*}

To establish the next lemma, we interpret the colure conditions of \citet{alfano2023novel} as perfect closure, where $\epsapproxM$ bounds the actor error, rather than relating to expressivity of the dual policy parametrization.
\begin{lemma}
Suppose that for all $k\in [K]$,
\begin{align*}
	\norm{f^{k+1} - \br{\eta^{-1}\nabla R(\pi^k) -  Q^k}}_{L^2(v^k)}^2
	&\leq \epsapproxM
	\tag{A1},
\end{align*}
and $\pi^{k+1} = P_R(\eta f^{k+1})$
where $P_R(\eta f)_{s} \eqq \Pi_{\Delta(\cA)}^R(\nabla R^*(\eta f_s))$.
Suppose further that for all $k$,
\begin{align*}
    \sup_s \frac{\mu^\star(s)}{\mu^k(s)}
    \leq \nu_\star.
	\tag{A3}
\end{align*}
Then, with the choice of $v^k=\mu^k\circ u$, i.e., $s, a\sim v^k \implies s\sim \mu^k, a\sim \Unif(\cA)$, we have that 
\cref{assm:comp_main} is satisfied with $\widetilde \Pi = \widetilde \Pi'=\Delta(\cA)^\cS$, $v^\star=V^\star$, $C_\star=\nu_\star$, $\epsvgd=0$, and $\epsactM \leq 2\sqrt {A \epsapproxM}$.
\end{lemma}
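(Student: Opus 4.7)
My plan is to directly verify the three items of \cref{assm:comp_main} with $\widetilde\Pi=\widetilde\Pi'=\Delta(\cA)^\cS$: convexity of $\Delta(\cA)^\cS$ is immediate, so the substance lies in (ii) the VGD property and (iii) the approximate optimality bound $\epsactM\leq 2\sqrt{A\epsapproxM}$.

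\textbf{VGD step.} For item (ii) I would run the classical concentrability-based argument. The performance difference lemma (\cref{lem:value_diff}) gives $V(\pi^k)-V^\star = H\,\E_{s\sim\mu^\star}\abr{Q^k_s,\pi^k_s-\pi^\star_s}$. Relaxing $\pi^\star_s$ to the pointwise maximizer over $\Delta(\cA)$ (the resulting integrand is pointwise nonnegative) and then applying the change of measure (A3) upper bounds this by $H\nu_\star\,\E_{s\sim\mu^k}\max_{p\in\Delta(\cA)}\abr{Q^k_s,\pi^k_s-p}$. By the policy gradient theorem $\nabla V(\pi^k) = H\,\mu^k\circ Q^k$, so the latter equals $\nu_\star\max_{\tilde\pi\in\Delta(\cA)^\cS}\abr{\nabla V(\pi^k),\pi^k-\tilde\pi}$. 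This yields $(\nu_\star,0;V^\star)$-VGD and should proceed without obstruction.

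\textbf{Approximate optimality --- the main obstacle.} For item (iii) I would first identify $\pi^{k+1}=P_R(\eta f^{k+1})$ as the \emph{exact} per-state minimizer over $\Delta(\cA)^\cS$ of the perturbed subproblem
\begin{align*}
	\hat\phi_k(\pi)\eqq\E_{s\sim\mu^k}\sbr{\abr{\widehat Q^k_s,\pi_s}+\eta^{-1}B_R(\pi_s,\pi^k_s)},
	\qquad \widehat Q^k_s \eqq \eta^{-1}\nabla R(\pi^k_s) - f^{k+1}_s,
\end{align*}
by invoking the standard equivalence between Bregman projection of a mirror step and the corresponding composite-minimization update. Assumption (A1) then reads $\norm{\widehat Q^k - Q^k}_{L^2(v^k)}^2\leq \epsapproxM$, and since $\hat\phi_k$ and $\phi_k$ differ only in their linear part, $\nabla\hat\phi_k-\nabla\phi_k = \mu^k\circ(\widehat Q^k - Q^k)$. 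Exact optimality of $\pi^{k+1}$ for $\hat\phi_k$ on $\Delta(\cA)^\cS$ then gives, for every $\pi\in\Delta(\cA)^\cS$,
\begin{align*}
	\abr{\nabla\phi_k(\pi^{k+1}),\pi-\pi^{k+1}}
	\;\geq\; -\E_{s\sim\mu^k}\abr{\widehat Q^k_s - Q^k_s,\pi_s-\pi^{k+1}_s}.
\end{align*}

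\textbf{Closing the factor of $\sqrt{A}$.} The crux is to control the right-hand side above by $2\sqrt{A\epsapproxM}$, and this is precisely where the choice $v^k=\mu^k\circ u$ (uniform over actions) enters. Pointwise Cauchy--Schwarz in $\R^\cA$ combined with the simplex $\ell_2$-diameter $\norm{\pi_s-\pi^{k+1}_s}_2\leq\sqrt 2$ reduces the task to bounding $\E_{s\sim\mu^k}\norm{\widehat Q^k_s-Q^k_s}_2$; Jensen's inequality promotes this to the squared norm, and uniformity of $u$ furnishes the identity $\E_{s\sim\mu^k}\norm{\widehat Q^k_s-Q^k_s}_2^2 = A\,\E_{(s,a)\sim v^k}(\widehat Q^k_{s,a}-Q^k_{s,a})^2 \leq A\epsapproxM$, closing the argument. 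The one place where I expect to exercise care is precisely this book-keeping of the $\sqrt A$ factor --- the rest of the proof is a routine application of (A1)--(A3) together with standard mirror descent identities.
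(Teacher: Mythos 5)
Your proposal is correct and follows essentially the same route as the paper's proof: identify $\pi^{k+1}=P_R(\eta f^{k+1})$ as the exact per-state minimizer of the subproblem with the perturbed action-value $\widehat Q^k=\eta^{-1}\nabla R(\pi^k)-f^{k+1}$, transfer exact optimality conditions to $\phi_k$ at the cost of the term $\E_{s\sim\mu^k}\abr{\widehat Q^k_s-Q^k_s,\pi_s-\pi^{k+1}_s}$, bound it by Cauchy--Schwarz plus the identity $\E_{s\sim\mu^k}\norm{\widehat Q^k_s-Q^k_s}_2^2=A\norm{\widehat Q^k-Q^k}_{L^2(v^k)}^2$ arising from $v^k=\mu^k\circ u$, and obtain the VGD item via the standard performance-difference/concentrability argument (the paper simply invokes \cref{lem:vgd_complete} here). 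The only cosmetic differences are that you use pointwise Cauchy--Schwarz with the simplex $\ell_2$-diameter $\sqrt 2$ followed by Jensen, whereas the paper uses the weighted dual-norm inequality with diameter bound $2$; both yield $\epsactM\leq 2\sqrt{A\epsapproxM}$.
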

\begin{proof}
Let $\zeta^{k+1} \eqq f^{k+1} - \br{\eta^{-1}\nabla R(\pi^k) - Q^k}$. Then $\norm{\zeta^{k+1}}_{L^2(v^k)}^2 \leq \epsapproxM$, and
$$
	f^{k+1} = \eta^{-1}\nabla R(\pi^k) - \br{Q^k + \zeta^{k+1}}.
$$
Now by definition of $\pi^{k+1}$, 
\begin{align*}
\pi^{k+1}_s
&= \argmin_{\pi_s \in \Delta(\cA)} \abr{Q^k_s + \zeta^{k+1}_s, \pi_s} + \frac1\eta B_R(\pi_s, \pi^k_s)
\end{align*}
hence, by optimality conditions, for any $\pi\in \Delta(\cA)^\cS$:
\begin{align*}
	\abr{Q^k_s + \zeta^{k+1}_s + \frac1\eta\br{\nabla R(\pi^{k+1}_s) - \nabla R(\pi^k_s)}, \pi_s - \pi^{k+1}_s}
	&\geq 0
	\\
	\iff
	\abr{Q^k_s + \frac1\eta\br{\nabla R(\pi^{k+1}_s) - \nabla R(\pi^k_s)}, \pi_s - \pi^{k+1}_s}
	&\geq \abr{\zeta^{k+1}_s, \pi^{k+1}_s - \pi_s}
\end{align*}
Now, note that
\begin{align*}
	\E_{s\sim \mu^k}\abr{\zeta^{k+1}_s, \pi^{k+1}_s - \pi_s}
	&=
	\abr{\mu^k\circ \zeta^{k+1}, \pi^{k+1} - \pi}
	\\
	&\leq 
	\norm{\mu^k\circ \zeta^{k+1}}_{L^2(\mu^k), 2}^*\norm{\pi^{k+1} - \pi}_{L^2(\mu^k), 2}
	\\
	&\leq 
	2\sqrt{\E_{s\sim \mu^k}\norm{\zeta^{k+1}_s}_2^2}
\end{align*}
Further, by the choice of $v^k = \mu^k\circ u$,
$$
	\E_{s\sim \mu^k}\norm{\zeta^{k+1}_s}_2^2
	=
	\E_{s\sim \mu^k} \sbr{\sum_{a\in \cA} \br{\zeta^{k+1}_{s, a}}^2}
	= A \E_{s\sim \mu^k} \sbr{\sum_{a\in \cA} \frac1A\br{\zeta^{k+1}_{s, a}}^2}
	= A \norm{\zeta^{k+1}}_{L^2(v^k)}^2.
$$
Therefore, for all $\pi\in \Delta(\cA)^\cS$:
$$
	\E_{s\sim \mu^k}\abr{
		Q^k_s + \frac1\eta\br{\nabla R(\pi^{k+1}_s) - \nabla R(\pi^k_s)}, \pi_s - \pi^{k+1}_s
	}\geq -\epsactM
$$
with $\epsactM \leq 2\sqrt {A \epsapproxM}$.	
Finally, the complete class satisfies $(\nu^\star, 0)$-VGD on the $\pi^k$ iterates by \cref{lem:vgd_complete}, with $\nu^\star$ in place of $H\nu_0$ owed to our assumption (A3).
\end{proof}
Finally, we note that we could have traded the dependence on the action set with an additional concentrability assumption. 

\subsection{VGD does not imply closure}
\label{sec:vgd_not_imply_closure}
In this section, we present a sinple example where the VGD condition holds but closure does not, and as a result existing analyses fail to establish convergence of PMD. We note that the fact that VGD does not imply closure is immediate, as closure implies realizability but VGD does not. We go further here to show that the bounds of prior works may indeed become vacuous in setups where VGD holds and closure does not.
We consider the MDP depicted in \cref{fig:simple_mdp} 
 with the log-linear policy class $\Pi$ induced by the state-action feature vectors shown in the diagram. 
 
\begin{figure}[ht!]
    \centering
\begin{tikzpicture}[
    state/.style={circle, draw, minimum size=2cm, node distance=2cm},
    every node/.style={font=\sffamily},
    >={Stealth[round]}
]

\node[state] (S0) at (0, 0) {$S_0$};
\node[state] (S1) at (-2, -3) {$S_1$};
\node[state] (S2) at (2, -3) {$S_2$};

\draw[->, color=orange] (S0) -- 
		node[pos=0, anchor=south, font=\scriptsize, shift={(0.15, -0.1)}] 
		{\tiny $\actA$} 
	(S1) node[midway, above] {};
\draw[->, color=violet] (S0) -- 
		node[pos=0, anchor=south, font=\scriptsize, shift={(-0.1, -0.1)}] 
		{\tiny $\actB$} 
	(S2) node[midway, left] {};

\draw[->, color=violet] (S1) to[bend right] 
	node[pos=0, anchor=north east, font=\scriptsize, shift={(0.2, 0.15)}] 
		{\tiny $\actB$}
	(S0) node[midway, above] {};
\draw[->, color=orange] (S2) to[bend left] 
		node[pos=0, anchor=north west, font=\scriptsize, shift={(-0.2, 0.1)}] 
		{\tiny $\actA$}
	(S0) node[midway, below] {};

\draw[->, very thick, color=orange] (S1) 
	to[bend left] node[midway, left] {1} 
		node[pos=0, anchor=north, font=\scriptsize, shift={(0, 0.05)}] 
		{\tiny $\actA$}
	(S0) node[midway, above] {};

\draw[->, very thick, color=violet] (S2) 
	to[bend right] node[midway, right] {$1$}
		node[pos=0, anchor=north, font=\scriptsize, shift={(0, 0.05)}] 
		{\tiny $\actB$}
	(S0) node[midway, below] {};
\end{tikzpicture}
    \caption{A simple MDP with a convex value landscape. Each action represented by a (feature-vector, edge) pair leads deterministically to the state at the other end of the edge. The two outer bold edges labeled $1$ inflict a cost of $1$, the others have cost $0$.}
    \label{fig:simple_mdp}
\end{figure}
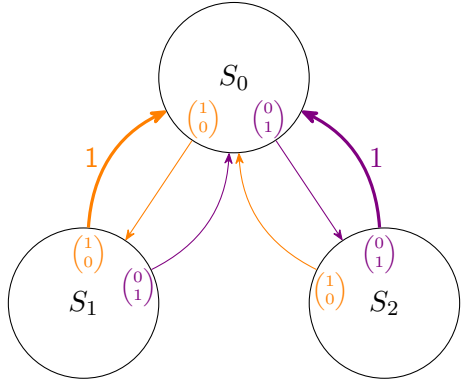

 For simplicity we assume there are no statistical errors in the execution of the algorithm ($\epsstatM=0$). In this example the value landscape is convex (in state-action space) over $\Pi$, and thus $\Pi$ is $(1,0)$-VGD and convergence of PMD follows by our main theorem:
\begin{align*}
        V(\pi^{K}) - &\min_{\pi^\star\in \Pi}V(\pi^\star)
        \xrightarrow[K\to \infty]{} 0.
    \end{align*}
At the same time, results based on closure imply convergence to an error floor that is larger than $H$.
For instance, by Theorem 1 of \citet{yuan2023loglinear} establishes that:
\begin{align}\label{eq:thm_yuan}
        V(\pi^{K}) - &\min_{\pi^\star\in \Pi}V(\pi^\star)
        \lesssim 
        2H\br{1- \frac{1}{\nu_0}}^{K}
        + 2H \nu_0 \sqrt{A C_0 \epsbiasM},
    \end{align}
meaning:
\begin{align*}
        V(\pi^{K}) - &\min_{\pi^\star\in \Pi}V(\pi^\star)
        \xrightarrow[K\to \infty]{}
        2H \nu_0 \sqrt{A C_0 \epsbiasM}
        \geq 10 H,
    \end{align*}
    where $\epsbiasM=\Omega(1)$, $\nu_0 \eqq H\norm{\frac{\mu^\star}{\rho_0}}_\infty$, and $C_0$ is a certain concentrability coefficient larger than $1$. Here, both the transfer error and approximation error are $\Omega(\epsbiasM)$.
A rigorous analysis is given below in \cref{sec:vgd_not_cl_analysis}.
Recent papers such as \citet{alfano2023novel,xiong2024dual} accommodate more general policy parameterizations but still include
the log-linear setup as a special case (see discussion in \citealp{alfano2023novel} and \cref{sec:dual_pc}). The error floor in their results is also larger than $H$ for the example in question for exactly the same reasons; their results depend on the approximation error, which for this example as mentioned behaves the same as the transfer error.

Finally, we note that the example is not realizable and the discussion focuses on best-in class convergence as the objective.
If we were to look for convergence w.r.t.~the true optimal policy, our \cref{thm:pmd_main} establishes convergence to an error floor of $V(\Pi^\star) - V^\star \approx H/2$, while closure based analyses suffer from the same $\geq H$ error floor.
In all that follows, we focus on the transfer error $\epsbiasM$; the argument for the approximation error is the same.

\subsubsection{Analysis}
\label{sec:vgd_not_cl_analysis}
We denote the actions:
\begin{align*}
	u \eqq  \actA,
	\quad 
	b \eqq  \actB,
\end{align*}
and the state-action features, for all $s$:
\begin{align*}
	\phi_{s, 1} \eqq \actA,
	\quad 
	\phi_{s, 2} \eqq \actB,
	\quad 
	\phi_{s} 
	\eqq (\phi_{s, 1}, \phi_{s, 2})
	= (\actA \actB)
	\in \R^{2\times 2}.
\end{align*}
In favor of conciseness, we will let
\begin{align*}
	\phi_{i, \cdot} \eqq \phi_{S_i, \cdot}.
\end{align*}
For $\theta \in \R^2$, we denote the log-linear policy $\pi^\theta_{s}\eqq \sigma(\phi_{s}\T \theta)$, where $\sigma$ is the softmax function:
\begin{align*}
	\sigma(u)_i \eqq \frac{e^{u_i}}{\sum_{j}e^{u_j}}.
\end{align*}
This gives rise to the log-linear policy class:
\begin{align*}
	\Pi \eqq \cbr{\pi^\theta \mid \theta \in \R^2}.
\end{align*}
Since such a policy $\pi^\theta$ in this MDP must select actions independent of the state, we let $\alpha$ denote the probability it chooses $u$ and $1-\alpha$ the probability it chooses $b$; $\alpha\eqq\pi^\theta_{s, u} \implies 1-\alpha = \pi^\theta_{s, b}$.
Now, denote $V_i^\alpha \eqq V_{S_i}\br{\pi^\theta}, Q^\alpha_{i, \cdot}\eqq Q_{S_i, \cdot}^{\pi^\theta}$, and observe that by direct computation:
\begin{align*}
	V_0(\alpha) 
	&= \frac{\gamma}{(1-\gamma)(1+\gamma)}\br{\alpha^2 + (1-\alpha)^2}
	=: \widetilde H\br{\alpha^2 + (1-\alpha)^2}
	\\
	V_1(\alpha) &= \alpha + \gamma \widetilde H\br{\alpha^2 + (1-\alpha)^2}
	\\
	V_2(\alpha) &= (1-\alpha) + \gamma \widetilde H\br{\alpha^2 + (1-\alpha)^2}.
\end{align*}
and,
\begin{alignat*}{2}
	Q^\alpha_{0,u} &= \gamma V_1(\alpha),
	\quad &
	Q^\alpha_{0,b} &= \gamma V_2(\alpha);
	\\
	Q^\alpha_{1,u} &= 1 + \gamma V_0(\alpha),
	\quad &
	Q^\alpha_{1,b} &= \gamma V_0(\alpha);
	\\
	Q^\alpha_{2,u} &= \gamma V_0(\alpha),
	\quad &
	Q^\alpha_{2,b} &= 1 + \gamma V_0(\alpha).
\end{alignat*}
Let $\rho_0(S0)=1-p,\rho_0(S1)=\rho_0(S2)=p/2$ for some $p\in[0,1)$. Then
\begin{align*}
	V(\alpha) 
	&= (1-p+\gamma p)\widetilde H\br{\alpha^2 + (1-\alpha)^2}
	+ p/2.
\end{align*}

\paragraph{The VGD condition holds.}
It is not hard to verify the value function is convex (in state-action space) over this policy class. Indeed, we have
\begin{align*}
	\abr{\nabla_{\pi^\theta} V(\pi^\theta), \pi^{\tilde \theta} - \pi^\theta}
	= \frac{\partial V^\alpha}{\partial \alpha}\br{\tilde \alpha - \alpha},
\end{align*}
and therefore convexity of $V^\alpha$ w.r.t.~$\alpha$ implies convexity in the direct parametrization over $\Pi$.
Hence in particular, $\Pi$ is $(1, 0)$-VGD w.r.t.~the MDP in question. Thus, convergence of PMD follows by \cref{thm:pmd_main}, which in this case guarantees the sub-optimality of $\pi^K$ tends to $0$ as $K$ grows (since there is no error floor).

\paragraph{Closure does not hold, and the error floor in closure based analyses is $\geq H=\frac{1}{1-\gamma}$.}
Let $\mu^\alpha \eqq \mu^{\pi^\theta}$, then
\begin{align*}
	\mu^\alpha(S_0) 
	&= \frac{(1-\gamma)(1-p) + \gamma}{1+\gamma}
	= \frac{1-p + \gamma H}{(1+\gamma) H}
	\\
	\mu^\alpha(S_1) &= (1-\gamma)p + \gamma \alpha \mu^\alpha(S_0)
	\\
	\mu^\alpha(S_2) &= (1-\gamma)p + \gamma (1-\alpha) \mu^\alpha(S_0).
\end{align*}
It is immediate that the optimal in-class policy is given by $\theta^\star\eqq (1, 1), \alpha^\star = 1/2$, and satisfies,
\begin{align*}
	\mu^\star(S_0) = \frac{1-p + \gamma H}{(1+\gamma)H}, \quad 
	\mu^\star(S_1) = \frac{H + p -1}{2(1+\gamma)H}, \quad 
	\mu^\star(S_2) = \frac{H + p -1}{2(1+\gamma)H}.
\end{align*}
Now suppose that $\gamma\geq0.99$ and $p\leq 1/100$, then by direct computation,
\begin{align*}
	\mu^\star(S_0) 
	\approx \frac12, \quad
	\mu^\star(S_1) \approx \frac{1}{4}, \quad
	\mu^\star(S_2) \approx \frac{1}{4},
\end{align*}
where the approximation is correct up to error of $1/100$.
Recall that for a policy $\pi^{(k)}$, in the NPG update step \cite{agarwal2021theory,yuan2023loglinear}
\begin{align*}
	w^{(k)}_\star
	&\eqq \argmin_{w}	\E_{s \sim \mu^k, a\sim \pi^k_s}\sbr{
		\br{\phi_{s, a}\T w - Q^k_{s, a}}^2
	}
\end{align*}
Meanwhile, by definition
\begin{align*}
	\epsilon_{\rm bias}
	&\geq  \E_{s \sim \mu^\star, a\sim {\rm Unif}(\cA)}\sbr{
		\br{\phi_{s, a}\T w_\star^{(k)} - Q^k_{s, a}}^2
	}
	\\
	&\geq \frac12\argmin_{w_1}\E_{s \sim \mu^\star}\sbr{
		\br{w_1 - Q^k_{s, u}}^2
	}
	\\
	&\approx  \frac12\argmin_{w_1}
	\cbr{
		\frac12\br{w_1 - \gamma V_1(\alpha)}^2
		+
		\frac14\br{w_1 - 1 - \gamma V_0(\alpha)}^2
		+
		\frac14\br{w_1 - \gamma V_0(\alpha)}^2
	}
	\\
	&\geq
	\frac18\argmin_{w_1}
	\cbr{
		\br{w_1 - 1 - \gamma V_0(\alpha)}^2
		+
		\br{w_1 - \gamma V_0(\alpha)}^2
	}
	\\
	&= \frac{1}{32}
\end{align*}
Now the bias term in \cref{eq:thm_yuan} is at least as large as
\begin{align*}
	H\norm{\frac{\mu^\star}{\rho_0}}_\infty\sqrt{\epsilon_{\rm bias}}
	\gtrsim H\frac{1}{p}\sqrt{\frac{1}{32}}	
	\geq
	10 H.
\end{align*}

\subsection{Additional Remarks}
\label{sec:vgd_discussion_additional}
In this section we include several additional points for consideration regarding closure and VGD conditions.

\paragraph{On-policy PMD is prone to local optima.}
The necessity of some structural assumption
    (whether VGD or closure) is motivated in the introduction by the fact that policy gradient methods over non-complete policy classes $\Pi \neq \Delta(\cA)^\cS$ are prone to local optima \citep{bhandari2024global}.
     While PMD and vanilla policy gradients are not the same algorithm, the example given in \citet{bhandari2024global} (Example 1) also applies to PMD with Euclidean regularization, as we explain next.
    A vanilla policy gradient update in the direct parametrization case is equivalent to:
    \begin{align*}
        \pi^{k+1} = \argmin_{\pi\in \Pi}\sbr{\E_{s\sim \mu^k}\sbr{\abr{Q_s^k, \pi_s}} + \frac{1}{2\eta}\norm[b]{\pi - \pi^k}^2_2},
    \end{align*}
    which is an ``unweighted regularization'' version of Euclidean PMD.
    While this is equivalent to PMD for $\Pi = \Delta(\cA)^\cS$ (in the error free case), it is indeed not equivalent in general.
	However, Example 1 of \citet{bhandari2024global} indeed also applies to Euclidean PMD because the policy class in question contains only policies $\pi$ such that $\pi_{s, a} = \pi_{s', a}$ for all $s,s',a$. Hence, for any two policies $\pi, \pi^k\in \Pi$,
    $\norm{\pi_s - \pi^k_s}^2_2 = \norm{\pi_{s'} - \pi^k_{s'}}^2_2$ for all $s, s'$, and
    $\norm{\pi - \pi_k}^2_2 = S\E_{s\sim \mu^k}\norm{\pi_s - \pi^k_s}^2_2 = 2\E_{s\sim \mu^k}\norm{\pi_s - \pi^k_s}^2_2$. Thus, for the example in question the two algorithms are equivalent up to scaling of the step-size by a constant factor.

\paragraph{Closure conditions in practice.}
Closure conditions (that are based on bounded approximation error) roughly stipulate the policy class is closed to a soft policy improvement step.
This has a flavor that is similar to Bellman completeness \citep{munos2008finite,chen2019information,zanette2020learning,zanette2023realizability}, a property of a $Q$-function class that says the class is closed to a Bellman backup step.  
Bellman completeness is widely considered too strong a condition to hold in practice, the reasoning being that increasing capacity of a function class that violates completeness inadvertently introduces new functions for which completeness needs to be satisfied. Therefore, an increase in capacity may actually cause completeness to be further violated.
The same can be argued for closure conditions, with one difference being that the complete policy class $\Delta(\cA)^\cS$ is naturally closed to any policy improvement step. However, in a large scale environment setting, the complete policy class is typically many orders of magnitude too large to be well approximated by realistically sized neural network architectures (at least at the present time).

\paragraph{PMD and VGD from the optimization perspective.}
Standard arguments from optimization literature are insufficient to establish convergence of PMD with the VGD condition. First, PMD is not an algorithm that has (prior to our work) a formulation within a purely optimization-based framework.
    Second, convergence in a smooth non-convex setting typically scales with the distance to the optimal solution, measured by the norm induced by smoothness of the objective. Prior works that establish convergence of gradient descent type methods (though not of PMD; e.g., \citealp{agarwal2021theory,bhandari2024global,xiao2022convergence}) exploit smoothness of the value function w.r.t.~the Euclidean norm (established  in \citealp{agarwal2021theory}), and as a result obtain bounds that scale with the cardinality of the state-space.

\paragraph{Divergence of Policy Iteration.}
Our setup with the VGD condition is general enough to accommodate examples where the policy iteration algorithm does not converge (the same example we discuss in \cref{sec:vgd_not_imply_closure} demonstrates this). Here, since the policy class is non-complete, the policy improvement step is performed over the current policy occupancy measure (see \citealp{bhandari2024global} who introduce this natural adaptation). Arguably, it should not be expected that policy iteration converges for real world, large-scale problems, as it is a very ``non-regularized'' algorithm from an optimization perspective.
    At the same time,
    in setups where closure conditions based on bounded approximation error hold, in particular, closure to policy improvement as studied in \citet{bhandari2024global}, the policy iteration algorithm converges at a linear rate. Thus it is not immediately clear why should we employ more sophisticated algorithms such as PMD in such settings.

\paragraph{Convergence beyond the VGD condition.}
    Using our framework, it can be shown that PMD converges to a stationary point regardless of any VGD condition; see \cref{sec:prox_convergence_stationary_point}.

\section{Deferred Discussions}

\subsection{Assumption on the critic error}
\label{sec:discuss_critic_error}
Our results can be easily adapted to 
    the (generally weaker) assumption that 
    \begin{align*}
        \E_{s \sim \mu^\pi}
        \norm[b]{\widehat Q_{s}^{\pi} - Q_{s}^{\pi}}_2 
        \leq \epscritM.
    \end{align*}
    (In which case the bounds would depend on $\epscritM$ rather than $\sqrt \epscritM$.) \cref{assm:Q_oracle} in its current form simplifies presentation, since it allows working with the weighted $L^2$ norm for both smoothness and errors in the gradient approximation. Also noteworthy, when working with the negative entropy regularizer, approximation w.r.t.~the $\norm{\cdot}_\infty$ norm would suffice. Since the statistical errors are not the focus of this work, we make these concessions in favor of a more streamlined and clear presentation. 
    
\subsection{Local smoothness of the value function requires greedy exploration}
\label{sec:discuss_value_local_smoothness}
In this section we discuss why the dependence on $\epsilon$ in the bound of \cref{lem:value_local_smoothness} cannot be improved in general.
We consider the MDP in \cref{fig:mdp_smoothness_lb}, for which we can show \cref{lem:value_local_smoothness} has tight dependence on the $\epsilon$-exploration parameter.
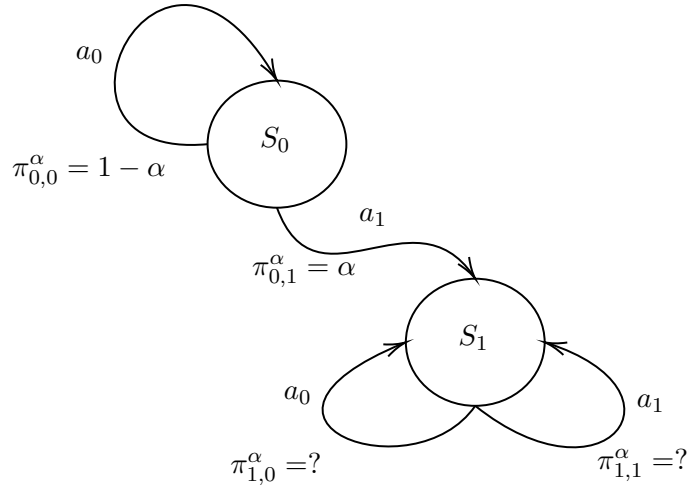
\begin{figure}[ht!]
    \centering
\tikzset{every picture/.style={line width=0.75pt}} 
\begin{tikzpicture}[x=0.75pt,y=0.75pt,yscale=-1,xscale=1]

\draw   (244,99.93) .. controls (244,82.21) and (259.67,67.85) .. (279,67.85) .. controls (298.33,67.85) and (314,82.21) .. (314,99.93) .. controls (314,117.64) and (298.33,132) .. (279,132) .. controls (259.67,132) and (244,117.64) .. (244,99.93) -- cycle ;
\draw    (244,99.93) .. controls (148.48,108.81) and (219.28,-35.69) .. (278.11,66.3) ;
\draw [shift={(279,67.85)}, rotate = 240.67] [color={rgb, 255:red, 0; green, 0; blue, 0 }  ][line width=0.75]    (10.93,-3.29) .. controls (6.95,-1.4) and (3.31,-0.3) .. (0,0) .. controls (3.31,0.3) and (6.95,1.4) .. (10.93,3.29)   ;
\draw   (344,199.93) .. controls (344,182.21) and (359.67,167.85) .. (379,167.85) .. controls (398.33,167.85) and (414,182.21) .. (414,199.93) .. controls (414,217.64) and (398.33,232) .. (379,232) .. controls (359.67,232) and (344,217.64) .. (344,199.93) -- cycle ;
\draw    (279,132) .. controls (298.8,186.3) and (348,122.16) .. (378.09,166.48) ;
\draw [shift={(379,167.85)}, rotate = 237.45] [color={rgb, 255:red, 0; green, 0; blue, 0 }  ][line width=0.75]    (10.93,-3.29) .. controls (6.95,-1.4) and (3.31,-0.3) .. (0,0) .. controls (3.31,0.3) and (6.95,1.4) .. (10.93,3.29)   ;
\draw    (379,232) .. controls (444.67,284.59) and (490.54,224.46) .. (415.15,200.29) ;
\draw [shift={(414,199.93)}, rotate = 17.26] [color={rgb, 255:red, 0; green, 0; blue, 0 }  ][line width=0.75]    (10.93,-3.29) .. controls (6.95,-1.4) and (3.31,-0.3) .. (0,0) .. controls (3.31,0.3) and (6.95,1.4) .. (10.93,3.29)   ;
\draw    (379,232) .. controls (348.16,276.63) and (246.03,240.22) .. (342.53,200.52) ;
\draw [shift={(344,199.93)}, rotate = 158.04] [color={rgb, 255:red, 0; green, 0; blue, 0 }  ][line width=0.75]    (10.93,-3.29) .. controls (6.95,-1.4) and (3.31,-0.3) .. (0,0) .. controls (3.31,0.3) and (6.95,1.4) .. (10.93,3.29)   ;

\draw (269,90) node [anchor=north west][inner sep=0.75pt]   [align=left] {$\displaystyle S_{0}$};
\draw (369,190) node [anchor=north west][inner sep=0.75pt]   [align=left] {$\displaystyle S_{1}$};
\draw (144,104) node [anchor=north west][inner sep=0.75pt]   [align=left] {$\displaystyle \pi _{0,0}^{\alpha } =1-\alpha $};
\draw (265,154) node [anchor=north west][inner sep=0.75pt]   [align=left] {$\displaystyle \pi _{0,1}^{\alpha } =\alpha $};
\draw (439,252) node [anchor=north west][inner sep=0.75pt]   [align=left] {$\displaystyle \pi _{1,1}^{\alpha } =?$};
\draw (254,254) node [anchor=north west][inner sep=0.75pt]   [align=left] {$\displaystyle \pi _{1,0}^{\alpha } =?$};
\draw (176,49) node [anchor=north west][inner sep=0.75pt]   [align=left] {$\displaystyle a_{0}$};
\draw (281,221) node [anchor=north west][inner sep=0.75pt]   [align=left] {$\displaystyle a_{0}$};
\draw (459,225) node [anchor=north west][inner sep=0.75pt]   [align=left] {$\displaystyle a_{1}$};
\draw (319,131) node [anchor=north west][inner sep=0.75pt]   [align=left] {$\displaystyle a_{1}$};

\end{tikzpicture}
\caption{A two state deterministic MDP, with $\rho_0(S_0)=1$. Each edge is labeled with an action ($a\in \cbr{a_0, a_1}$) that takes the agent to the state at the other end. A policy $\pi^\alpha$, $\alpha\in [0,1]$ takes actions in $S_0$ with the probabilities displayed in the diagram next to the relevant action. The probabilities $\pi^\alpha$ assigns to actions in $S_1$ denoted by $?$ are unrelated to $\alpha$ and left for later.}
    \label{fig:mdp_smoothness_lb}
\end{figure}
Let $p\in(0,1/2)$ and $0< \epsilon < p$. Define:
\begin{align*}
	\pi \eqq \pi^\epsilon, \quad \pi_{1,0}=1,\pi_{1,0}=0,
	\\
	\tilde \pi \eqq \pi^p, \quad  \tilde \pi_{1,0}=0, \tilde \pi_{1,0}=1.
\end{align*}
\paragraph{Idea.}
Think of $\epsilon$ as much smaller than $p$. When measuring distance with the local norm $\norm{\tilde \pi - \pi}_{L^2(\mu^\pi), 1}$, the large difference $\norm{\tilde \pi_1 - \pi_1}_1^2$ gets little weight: $\mu^\pi(S_1) \approx \epsilon$.
Meanwhile, the error of the linear approximation at $\pi$ behaves like (see proof of \cref{lem:value_local_smoothness} in \cref{sec:proof:lem:value_local_smoothness}):
\begin{align*}
	\av{\sum_s \mu^\pi(s) \sum_{a}
    		\br{\tilde \pi_{sa} - \pi_{sa}}
    		\br{\sum_{s'}\mu^\pi_{\P_{sa}}(s')\norm{\tilde \pi_{s'}- \pi_{s'}}_1}
    },
\end{align*}
where the weight assigned to $\norm{\tilde \pi_1 - \pi_1}_1^2$ is approximately $(\tilde \pi_{0, 1} - \pi_{0, 1}) = p-\epsilon$.
Hence, if $\epsilon=p^2$,
\begin{align*}
	\av{V(\tilde \pi) - V(\pi) - \abr{\nabla V(\pi), \tilde \pi - \pi}}
	&\approx p,
	\\
	\norm{\tilde \pi - \pi}_{L^2(\mu^\pi), 1}^2 
	&\approx p^2,
\end{align*}
so
\begin{align*}
	\av{V(\tilde \pi) - V(\pi) - \abr{\nabla V(\pi), \tilde \pi - \pi}}
	\gtrsim  \frac{1}{\sqrt \epsilon}\norm{\tilde \pi - \pi}_{L^2(\mu^\pi), 1}^2.
\end{align*}

\paragraph{Computations.}
The term that is equal to the linearization error, up to constant factors, is the following:
\begin{align*}
\av{\sum_s \mu^\pi(s) \sum_{a}
    		\br{\tilde \pi_{sa} - \pi_{sa}}
    		\br{\sum_{s'}\mu^\pi_{\P_{sa}}(s')\abr{Q^{\tilde \pi}_{s'}, \tilde \pi_{s'}- \pi_{s'}}}
    }.
\end{align*}
Assume $p > \epsilon$.
By choosing a cost function $r(s,i)=i$ for $s\in \cbr{S_0, S_1}$, $i\in \cbr{0,1}$ we have that for all $s$,
\begin{align*}
	\abr{Q^{\tilde \pi}_{s}, \tilde \pi_{s}- \pi_{s}}
	= \Omega(\norm{\tilde \pi_s - \pi_s}_1),
\end{align*}
hence we focus on lower bounding
\begin{align*}
	(*) \eqq \av{\sum_s \mu^\pi(s) \sum_{a}
    		\br{\tilde \pi_{sa} - \pi_{sa}}
    		\br{\sum_{s'}\mu^\pi_{\P_{sa}}(s')\norm{\tilde \pi_{s'}- \pi_{s'}}_1}
    }.
\end{align*}
By direct computation,
\begin{align*}
	\mu^\pi(S_0) = \frac{1}{1+\gamma\epsilon}, \quad
	\mu^\pi(S_1) = \frac{\gamma\epsilon}{(1+\gamma\epsilon)(1-\gamma)}
\end{align*}
and 
\begin{align*}
\norm{\tilde \pi_0 - \pi_0}_1=2|p-\epsilon|,
\quad \norm{\tilde \pi_1 - \pi_1}_1=2.
\end{align*}
Thus,
\begin{align*}
	(\tilde \pi_{0,0} - \pi_{0,0})\sum_{s'}\mu^\pi_{\P_{0,0}}(s')\norm{\tilde \pi_{s'}- \pi_{s'}}_1
	&\approx
	(1-\epsilon)(\epsilon - p)|p-\epsilon| + \epsilon
	\geq - p^2 + \epsilon
	\\
	(\tilde \pi_{0,1} - \pi_{0,1})\sum_{s'}\mu^\pi_{\P_{0,1}}(s')\norm{\tilde \pi_{s'}- \pi_{s'}}_1
	&\approx
	p-\epsilon,
\end{align*}
and further,
\begin{align*}
\av{\sum_{a}
    		\br{\tilde \pi_{1,a} - \pi_{1,a}}
    		\br{\sum_{s'}\mu^\pi_{\P_{1,a}}(s')\norm{\tilde \pi_{s'}- \pi_{s'}}_1}
    } = 0.
\end{align*}
We obtain
\begin{align*}
	(*) 
	\gtrsim \mu^\pi(S_0) \br{p - p^2}
	\approx p - p^2.
\end{align*}
Meanwhile,
\begin{align*}
	\norm{\tilde \pi - \pi}_{L^2(\mu^\pi), 1}^2 
	=
	\frac{4(p-\epsilon)^2}{1+\gamma\epsilon}
	+ \frac{4\gamma\epsilon}{(1+\gamma\epsilon)(1-\gamma)}
	\approx (p-\epsilon)^2 + \epsilon.
\end{align*}
Now,  
\begin{align*}
	\frac{\av{V(\tilde \pi) - V(\pi) - \abr{\nabla V(\pi), \tilde \pi - \pi}}}
	{\norm{\tilde \pi - \pi}_{L^2(\mu^\pi), 1}^2 }
	\approx
		\frac{(*)}
	{\norm{\tilde \pi - \pi}_{L^2(\mu^\pi), 1}^2 }
	\approx \frac{p - p^2}{(p-\epsilon)^2 + \epsilon}.
\end{align*}
Now, for $\epsilon\eqq p^2, p< 1/2$, we obtain 
\begin{align*}
	\frac{p - p^2}{(p-\epsilon)^2 + \epsilon}
	=
	\frac{p - p^2}{(p-p^2)^2 + p^2}
	\geq \frac{p}{4 p^2} = \frac{1}{4 p} 
		= \frac{1}{4\sqrt \epsilon}.
\end{align*}
\section{State-weighted state-action space: Basic Facts}
Given a state probability measure $\mu\in \Delta(\cS)$, and an action space norm $\norm{\cdot}_\circ\colon \R^A \to \R$, we define the induced state-action weighted $L^p$ norm $\norm{\cdot}_{L^p(\mu), \circ}\colon \R^{SA}\to \R$:
\begin{align}
	\norm{u}_{L^p(\mu), \circ}
	&\eqq \br{\E_{s \sim \mu}\norm{u_s}_\circ^p}^{1/p}.
\end{align}
In addition, for $\mu\in \R^S, Q\in \R^{SA}$, we define the state to state-action element-wise product $\mu \circ Q \in \R^{SA}$:
\begin{align}
	\br{\mu \circ Q}_{s, a} \eqq \mu(s)Q_{s, a}.
\end{align}
\begin{lemma}\label{lem:weighted_norm_dual}
	For any strictly positive measure $\mu\in \R_{++}^S$,
	the dual norm of $\norm{\cdot}_{L^2(\mu),\circ}$ is given by
	\begin{align}
		\norm{z}_{L^2(\mu), \circ}^*
		&= \sqrt{\int \mu(s)^{-1}\br{\norm{z_s}_\circ^*}^2 {\rm d}s}
	\end{align}
\end{lemma}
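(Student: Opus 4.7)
The plan is to directly apply the definition of the dual norm and decouple the maximization over state-action pairs into a per-state action norm maximization, followed by a scalar Cauchy–Schwarz step. Specifically, I would start from
\begin{align*}
    \|z\|_{L^2(\mu),\circ}^*
    = \sup_{u\in \R^{SA}: \|u\|_{L^2(\mu),\circ} \leq 1}
        \sum_{s\in \cS}\abr{z_s, u_s},
\end{align*}
and observe that the objective and constraint depend on $u$ only through $u_s$ per state.

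The key step is to reduce to a scalar problem. For every fixed vector of nonnegative radii $t_s := \|u_s\|_\circ$, the inner supremum over directions gives $\sup_{u_s: \|u_s\|_\circ = t_s} \abr{z_s, u_s} = t_s \norm{z_s}_\circ^*$, by definition of the dual of $\norm{\cdot}_\circ$, and is attainable. Thus the problem becomes
\begin{align*}
    \sup_{t\in \R_{\geq 0}^S:\, \sum_s \mu(s) t_s^2 \leq 1}
        \sum_s \norm{z_s}_\circ^* \, t_s.
\end{align*}
Writing $\norm{z_s}_\circ^* t_s = \br{\mu(s)^{-1/2}\norm{z_s}_\circ^*} \cdot \br{\mu(s)^{1/2} t_s}$, Cauchy–Schwarz yields an upper bound of $\sqrt{\sum_s \mu(s)^{-1}(\norm{z_s}_\circ^*)^2}$, with equality attained by taking $t_s \propto \mu(s)^{-1}\norm{z_s}_\circ^*$ and normalizing to satisfy the constraint (this is well-defined since $\mu$ is strictly positive).

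Putting the two steps together yields the claimed expression. There is no substantive obstacle here; the only care needed is to justify that the per-state radius maximization and the scalar Cauchy–Schwarz are each attainable in $\R^A$ and over nonnegative $t$ respectively, so that the overall supremum is achieved (rather than merely bounded above) and the equality in the lemma statement holds.
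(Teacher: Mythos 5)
Your proposal is correct and follows essentially the same route as the paper: a per-state reduction via the definition of the dual action norm, a Cauchy--Schwarz step over states with the $\mu(s)^{\pm 1/2}$ weighting, and the same equality witness $t_s \propto \mu(s)^{-1}\norm{z_s}_\circ^*$ (which, unwound, is exactly the paper's explicit maximizer $x_s = \mu(s)^{-1}\norm{z_s}_\circ^* z_s^*$). The only cosmetic difference is that you package the attainability argument as the equality case of Cauchy--Schwarz over the scalar radii rather than exhibiting the vector maximizer up front; just note the trivial $z=0$ case when normalizing.
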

\begin{proof}
	First denote
$$\begin{aligned}
	z_s^* &\eqq \argmax_{u_s \in \R^A, \norm{u_s}_\circ\leq 1}\abr{u_s, z_s}
	\\
	\implies
	\norm{z_s}^*_\circ &= \abr{z_s^*, z_s},
	\text{ and } \norm{z_s^*}_\circ = 1.
\end{aligned}$$
Now let $x\in \R^{SA}$ be defined by $x_s \eqq \frac{\norm{z_s}_\circ^*}{\mu(s)} z_s^*$, then
$$\begin{aligned}
	\abr{x, z}
	= \int \frac{\norm{z_s}_\circ^*}{\mu(s)}\abr{z_s^*, z_s} \rmd s
	= \int \frac{1}{\mu(s)} \br{\norm{z_s}_\circ^*}^2 {\rm d}s.
\end{aligned}$$
Now, note that
$$
	\norm{x}_{L^2(\mu), \circ}
	= \int \mu(s)\br{\frac{\norm{z_s}_\circ^*}{\mu(s)}}^2\norm{z_s^*}_\circ^2
	= \int \frac{1}{\mu(s)}\br{\norm{z_s}_\circ^*}^2
	= \abr{x, z},
$$
hence, for $\bar x \eqq x/\norm{x}_{L^2(\mu), \circ}$ we have $\norm{\bar x}_{L^2(\mu), \circ}=1$, and
$$
	\abr{\bar x, z} = \sqrt{\int \frac{1}{\mu(s)} \br{\norm{z_s}_\circ^*}^2 \rmd s.}
$$
On the other hand, for any $v$ such that $\norm{v}_{L^2(\mu), \circ} \leq 1$, we have 
\begin{align*} \abr{v, z}
	= \int \abr{v_s, z_s} \rmd s
	&= \int \abr{\mu(s)v_s, \mu(s)^{-1}z_s} \rmd s
	\\
	&\leq \int \norm{\sqrt{\mu(s)}v_s}_\circ
		\norm{\sqrt{\mu(s)^{-1}}z_s}_\circ^* \rmd s
	\\
	&\leq \sqrt{\int \mu(s)\norm{v_s}_\circ^2 \rmd s}
	 \sqrt{\int \mu(s)^{-1}\br{\norm{z_s}_\circ^*}^2 \rmd s}
	\\
	&\leq 
	 \sqrt{\int \mu(s)^{-1}\br{\norm{z_s}_\circ^*}^2 \rmd s},
\end{align*}
and the proof is complete.
\end{proof}
\begin{lemma}\label{lem:wnorm_dual_mu}
	Let $\mu\in \Delta(\cS)$, and consider the state-action norm $\norm{\cdot}_{L^2(\mu), \circ}$. For any $W\in \R^{SA}$, we have
	\begin{align*}
		\norm{\mu \circ W}_{L^2(\mu), \circ}^*
		= \sqrt{\E_{s\sim \mu} \br{\norm{W_s}_\circ^*}^2}
	\end{align*}
\end{lemma}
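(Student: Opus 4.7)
The plan is to derive the claim as an immediate application of Lemma \ref{lem:weighted_norm_dual}. Setting $z \eqq \mu \circ W$, we have $z_s = \mu(s) W_s$, and by positive homogeneity of the dual action-space norm (using $\mu(s) \geq 0$), we get $\norm{z_s}_\circ^* = \mu(s)\norm{W_s}_\circ^*$. Substituting into the formula of Lemma \ref{lem:weighted_norm_dual} and simplifying the $\mu(s)^{-1} \cdot \mu(s)^2 = \mu(s)$ cancellation yields
\begin{align*}
    \norm{\mu \circ W}_{L^2(\mu), \circ}^*
    = \sqrt{\int \mu(s)^{-1} \mu(s)^2 \br{\norm{W_s}_\circ^*}^2 {\rm d}s}
    = \sqrt{\E_{s \sim \mu}\br{\norm{W_s}_\circ^*}^2},
\end{align*}
which is the desired identity.

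The only subtlety is that Lemma \ref{lem:weighted_norm_dual} is stated for strictly positive $\mu \in \R_{++}^S$, whereas here $\mu \in \Delta(\cS)$ may vanish on some states. This is not a real obstacle: on the support $\cS_\mu \eqq \{s : \mu(s) > 0\}$, the integrand $\mu(s)\norm{u_s}_\circ^2$ in the primal norm is insensitive to coordinates $s \notin \cS_\mu$, so the primal norm $\norm{\cdot}_{L^2(\mu), \circ}$ is really a norm (or seminorm) on the restriction to $\cS_\mu$. Similarly, $(\mu \circ W)_s = 0$ for $s \notin \cS_\mu$, so the dual norm of $\mu \circ W$ depends only on the restriction to $\cS_\mu$. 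Applying Lemma \ref{lem:weighted_norm_dual} to the restricted, strictly positive measure yields the result with no further work. There is no significant obstacle; the statement is essentially a one-line corollary of the preceding lemma.
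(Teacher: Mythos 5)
Your proposal is correct and matches the paper's proof exactly: the paper likewise obtains the identity by applying \cref{lem:weighted_norm_dual} to $z = \mu \circ W$, using homogeneity of $\norm{\cdot}_\circ^*$ and the cancellation $\mu(s)^{-1}\mu(s)^2 = \mu(s)$. Your additional remark about restricting to the support of $\mu$ to reconcile the strict-positivity hypothesis of \cref{lem:weighted_norm_dual} is a sound (and slightly more careful) touch that the paper's one-line proof omits.
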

\begin{proof}
	By \cref{lem:weighted_norm_dual}, 
	\begin{align*}
		\norm{\mu \circ W}_{L^2(\mu), \circ}^*
		&= \sqrt{\int \mu(s)^{-1} \br{\norm{\mu(s)W_s}_\circ^*}^2}
		= \sqrt{\E_{s\sim \mu} \br{\norm{W_s}_\circ^*}^2}
		. \qedhere
	\end{align*}
	
\end{proof}

\begin{lemma}\label{lem:reg_transform}
	Assume $h\colon \R^A \to \R$ is $1$-strongly convex and has $L$-Lipschitz gradient w.r.t.~$\norm{\cdot}$.
	Let  $\mu\in \Delta(\cS)$, and define $R_\mu(\pi) \eqq \E_{s\sim \mu}[h(\pi_s)]$.
	Then
	\begin{enumerate}
		\item $B_{R_\mu}(\pi, \tilde \pi) = \E_{s \sim \mu} B_R(\pi_s, \tilde \pi_s)$.
		\item $R_\mu$ is $1$-strongly convex and has an $L$-Lipschitz gradient w.r.t. $\norm{\cdot}_{L^2(\mu), \circ}$. 
	\end{enumerate}

\end{lemma}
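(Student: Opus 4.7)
The plan is to unwind the definition of the Bregman divergence for the separable regularizer $R_\mu$, obtaining Part~1 by direct computation, and then to leverage Part~1 together with the dual norm formula for state-weighted state-action vectors (\cref{lem:wnorm_dual_mu}) to establish Part~2.

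First I would establish Part~1. Since $R_\mu(\pi)=\sum_s \mu(s)\, h(\pi_s)$ is separable across states, the gradient is $(\nabla R_\mu(\tilde\pi))_{s,a} = \mu(s)\,(\nabla h(\tilde\pi_s))_a$. Hence $\langle \nabla R_\mu(\tilde\pi), \pi-\tilde\pi\rangle = \sum_s \mu(s)\,\langle \nabla h(\tilde\pi_s), \pi_s-\tilde\pi_s\rangle = \E_{s\sim\mu}\langle \nabla h(\tilde\pi_s), \pi_s-\tilde\pi_s\rangle$. Plugging into the definition of $B_{R_\mu}$ and combining terms state-wise directly yields $B_{R_\mu}(\pi,\tilde\pi) = \E_{s\sim\mu}\,B_h(\pi_s,\tilde\pi_s)$, which is Part~1.

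Next I would use Part~1 to prove strong convexity of $R_\mu$ w.r.t.~$\norm{\cdot}_{L^2(\mu),\circ}$. Since $h$ is $1$-strongly convex w.r.t.~$\norm{\cdot}_\circ$, we have $B_h(\pi_s,\tilde\pi_s)\geq \tfrac12\norm{\pi_s-\tilde\pi_s}_\circ^2$ for each $s$, so taking expectation over $\mu$ gives $B_{R_\mu}(\pi,\tilde\pi)\geq \tfrac12 \E_{s\sim\mu}\norm{\pi_s-\tilde\pi_s}_\circ^2 = \tfrac12 \norm{\pi-\tilde\pi}_{L^2(\mu),\circ}^2$, which is exactly the $1$-strong convexity condition.

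For the Lipschitz gradient part, the key observation is that $\nabla R_\mu(\pi)-\nabla R_\mu(\tilde\pi) = \mu\circ W$ where $W_s \eqq \nabla h(\pi_s)-\nabla h(\tilde\pi_s)$. Applying \cref{lem:wnorm_dual_mu} computes the dual norm:
\begin{align*}
\norm{\nabla R_\mu(\pi)-\nabla R_\mu(\tilde\pi)}_{L^2(\mu),\circ}^{*}
= \sqrt{\E_{s\sim\mu}\br{\norm{\nabla h(\pi_s)-\nabla h(\tilde\pi_s)}_\circ^{*}}^2}.
\end{align*}
Using the per-state Lipschitz property $\norm{\nabla h(\pi_s)-\nabla h(\tilde\pi_s)}_\circ^{*}\leq L\norm{\pi_s-\tilde\pi_s}_\circ$ and taking the expectation inside the square root yields the bound $L\,\norm{\pi-\tilde\pi}_{L^2(\mu),\circ}$, establishing $L$-Lipschitzness of $\nabla R_\mu$ w.r.t.~$\norm{\cdot}_{L^2(\mu),\circ}$.

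The proof is essentially a bookkeeping exercise, and I do not anticipate any real obstacle; the only step requiring a bit of care is the Lipschitz bound, where one must first recognize $\nabla R_\mu(\pi)-\nabla R_\mu(\tilde\pi)$ as a $\mu\circ(\cdot)$ expression so that \cref{lem:wnorm_dual_mu} can be applied cleanly, rather than attempting to bound the dual norm of a generic state-action vector from scratch.
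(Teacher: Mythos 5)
Your proposal is correct and follows essentially the same route as the paper's proof: state-wise expansion of the Bregman divergence for Part 1, pointwise strong convexity plus expectation for the lower bound, and recognizing $\nabla R_\mu(\pi)-\nabla R_\mu(\tilde\pi)$ as $\mu\circ W$ so that \cref{lem:wnorm_dual_mu} gives the dual-norm identity for the Lipschitz bound. No gaps.
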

\begin{proof}
We have
\begin{align*}
	\forall s, \nabla R_\mu(\pi)_{s} &= \mu(s) \nabla R(\pi_s) \in \R^A
	\\
	\implies
	B_{R_\mu}(\pi, \tilde \pi) 
	&=
	R_\mu(\pi) - R_\mu(\tilde \pi) - \abr{\nabla R_\mu(\tilde \pi), \pi - \tilde \pi}
	\\
	&=
	\E_{s \sim \mu}\sbr{
		R(\pi_s) - R(\tilde \pi_s) - \abr{\nabla R(\tilde \pi_s), \pi_s - \tilde \pi_s}
	}
	\\
	&=
	\E_{s \sim \mu} B_R(\pi_s, \tilde \pi_s)
	.
\end{align*}
Further, $1$-strongly convexity follows by
\begin{align*}
		\E_{s \sim \mu} B_R(\pi_s, \tilde \pi_s) \geq 
		\frac12\E_{s \sim \mu} \norm{\pi_s - \tilde \pi_s}_\circ^2,
\end{align*}
and the Lipschitz gradient condition from \cref{lem:wnorm_dual_mu}:
\begin{align*}
	\norm{\nabla R_\mu(\pi) - \nabla R_\mu(\pi^+)}_{L^2(\mu), \circ}^*
	&=
	\norm{\mu\circ\br{\nabla h(\pi_s) - \nabla h(\pi_s^+)}}_{L^2(\mu), \circ}^*
	\\
	&=
	\sqrt{\E_{s \sim \mu}
	\br{\norm{\nabla h(\pi_s) - \nabla h(\pi_s^+)}_\circ^*}^2}
	\\
	&\leq
	L\sqrt{\E_{s \sim \mu}
	\norm{\pi_s - \pi_s^+}_\circ^2}
	\\
	&=
	L
	\norm{\pi - \pi^+}_{L^2(\mu),\circ},
\end{align*}
which completes the proof.
\end{proof}

\section{Deferred proofs}

\subsection{Auxiliary Lemmas}
\label{sec:aux_lemmas}
\begin{lemma}[Value difference; \citealp{kakade2002approximately}]\label{lem:value_diff}
    For any $\rho\in \Delta(\cS)$,
    \begin{align*}
        V_\rho\br{\tilde \pi} -  V_\rho\br{ \pi}
        = \frac{1}{1-\gamma}
        \E_{s\sim \mu_\rho^\pi}\abr{Q^{\tilde \pi}_s, \tilde \pi_s - \pi_s}.
    \end{align*}
\end{lemma}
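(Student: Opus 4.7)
The plan is to prove this classical performance difference identity (attributed to Kakade and Langford) by a telescoping argument along trajectories generated by $\pi$, rather than by $\tilde \pi$. The key observation is that $V^{\tilde\pi}(s_0)$ is a deterministic quantity, so it equals its expectation under any measure; we will exploit this to introduce trajectories drawn from $\pi$ and then collapse the sum using the definition of $\mu_\rho^\pi$.

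First I would fix $s_0$ and expand $V_{s_0}(\tilde\pi) - V_{s_0}(\pi)$ using the identity
\begin{align*}
V_{s_0}(\tilde\pi) \;=\; \E_\pi\!\left[\sum_{t=0}^\infty \bigl(\gamma^t V_{s_t}(\tilde\pi) - \gamma^{t+1} V_{s_{t+1}}(\tilde\pi)\bigr)\right],
\end{align*}
where the expectation is over the trajectory $(s_0, a_0, s_1, a_1, \ldots)$ generated by $\pi$ starting from $s_0$; this telescopes because $\gamma^T V_{s_T}(\tilde\pi) \to 0$ using boundedness of rewards in $[0,1]$. Subtracting the corresponding series expansion $V_{s_0}(\pi) = \E_\pi[\sum_t \gamma^t r(s_t, a_t)]$ yields
\begin{align*}
V_{s_0}(\tilde\pi) - V_{s_0}(\pi) \;=\; \E_\pi\!\left[\sum_{t=0}^\infty \gamma^t\bigl(V_{s_t}(\tilde\pi) - r(s_t, a_t) - \gamma V_{s_{t+1}}(\tilde\pi)\bigr)\right].
\end{align*}

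Next, I would use the Bellman relation for $Q^{\tilde\pi}$: conditioning on $(s_t, a_t)$ and taking the inner expectation over $s_{t+1} \sim \P_{s_t, a_t}$ turns $r(s_t, a_t) + \gamma V_{s_{t+1}}(\tilde\pi)$ into $Q^{\tilde\pi}_{s_t, a_t}$. Then, noting that $a_t \sim \pi_{s_t}$ and that $V_{s_t}(\tilde\pi) = \abr{Q^{\tilde\pi}_{s_t}, \tilde\pi_{s_t}}$, taking the conditional expectation over $a_t$ collapses the integrand to $\abr{Q^{\tilde\pi}_{s_t}, \tilde\pi_{s_t} - \pi_{s_t}}$, giving
\begin{align*}
V_{s_0}(\tilde\pi) - V_{s_0}(\pi) \;=\; \E_\pi\!\left[\sum_{t=0}^\infty \gamma^t \abr{Q^{\tilde\pi}_{s_t}, \tilde\pi_{s_t} - \pi_{s_t}}\right].
\end{align*}

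Finally, I would average over $s_0 \sim \rho$ and swap sum and expectation (justified by absolute convergence), then rewrite in terms of state visitation probabilities $\Pr(s_t = s \mid s_0 \sim \rho, \pi)$. Invoking the definition $\mu_\rho^\pi(s) = (1-\gamma)\sum_{t=0}^\infty \gamma^t \Pr(s_t = s \mid s_0 \sim \rho, \pi)$ converts the weighted sum into $\tfrac{1}{1-\gamma}\E_{s\sim \mu_\rho^\pi}\abr{Q^{\tilde\pi}_s, \tilde\pi_s - \pi_s}$, which is the claim. The only delicate point is the telescoping step; everything else is bookkeeping and Fubini. Since this is a textbook identity with a self-contained one-paragraph derivation, I do not anticipate any real obstacle.
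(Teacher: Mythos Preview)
Your proposal is correct and is precisely the standard telescoping derivation of the performance difference lemma. The paper does not actually give its own proof of this statement; it simply cites \citet{kakade2002approximately} and uses the result as a black box, so there is nothing to compare against beyond noting that your argument is the classical one.
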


\begin{lemma}[Policy gradient theorem; \citealp{sutton1999policy}]\label{lem:value_pg}
    For any $\rho\in \Delta(\cS)$,
    \begin{align*}
        \br{\nabla V_\rho(\pi)}_{s, a}
        &= \frac{1}{1-\gamma} \mu^\pi_\rho(s) Q^\pi_{s, a},
        \\
        \abr{\nabla V_\rho(\pi), \tilde \pi - \pi}
        &= \frac{1}{1-\gamma} \E_{s\sim \mu^\pi_\rho} 
        \abr{Q^\pi_s, \tilde \pi_s - \pi_s}.
    \end{align*}
\end{lemma}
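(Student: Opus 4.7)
The plan is to prove the identity by direct differentiation of the Bellman decomposition of the value function in the direct parametrization, where each coordinate $\pi_{s,a}$ is treated as a free variable (the simplex constraints enter only through the choice of feasible $\pi$, and play no role in computing partial derivatives). Specifically, I would start from
\begin{align*}
V_s(\pi) = \sum_{a} \pi_{s,a}\Bigl(r(s,a) + \gamma \sum_{s'} \P_{s,a}(s') V_{s'}(\pi)\Bigr) = \sum_a \pi_{s,a} Q^\pi_{s,a},
\end{align*}
and differentiate both sides with respect to a generic coordinate $\pi_{\sstar,\astar}$. Using the product rule and the identity $Q^\pi_{s,a} = r(s,a) + \gamma \sum_{s'} \P_{s,a}(s') V_{s'}(\pi)$, the first factor contributes $\mathbf{1}[s=\sstar]\, Q^\pi_{\sstar,\astar}$ (since $r(s,a)$ and $\P_{s,a}$ do not depend on $\pi$), and the second contributes a linear term in the partials $\partial V_{s'}/\partial \pi_{\sstar,\astar}$. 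This gives the linear recursion
\begin{align*}
\frac{\partial V_s(\pi)}{\partial \pi_{\sstar,\astar}} = \mathbf{1}[s=\sstar]\, Q^\pi_{\sstar,\astar} + \gamma \sum_{a} \pi_{s,a} \sum_{s'} \P_{s,a}(s')\, \frac{\partial V_{s'}(\pi)}{\partial \pi_{\sstar,\astar}}.
\end{align*}

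Next, I would unroll this recursion. The operator $T^\pi$ defined by $(T^\pi f)(s) = \gamma \sum_a \pi_{s,a} \sum_{s'} \P_{s,a}(s') f(s')$ is a contraction with modulus $\gamma$, so the unique fixed point of $f = g + T^\pi f$ with $g(s) = \mathbf{1}[s=\sstar]\, Q^\pi_{\sstar,\astar}$ is $f(s) = \sum_{t=0}^\infty (T^\pi)^t g(s)$. A simple induction identifies $((T^\pi)^t g)(s) = \gamma^t \Pr(s_t = \sstar \mid s_0=s,\pi)\, Q^\pi_{\sstar,\astar}$, whence
\begin{align*}
\frac{\partial V_s(\pi)}{\partial \pi_{\sstar,\astar}} = Q^\pi_{\sstar,\astar} \sum_{t=0}^\infty \gamma^t \Pr(s_t = \sstar \mid s_0 = s, \pi).
\end{align*}
Averaging over $s \sim \rho$ and invoking the definition $\mu^\pi_\rho(\sstar) = (1-\gamma)\sum_{t=0}^\infty \gamma^t \Pr(s_t = \sstar \mid s_0 \sim \rho, \pi)$ yields the first claim, $(\nabla V_\rho(\pi))_{\sstar,\astar} = \frac{1}{1-\gamma}\mu^\pi_\rho(\sstar)\, Q^\pi_{\sstar,\astar}$.

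The second identity follows immediately by taking the Euclidean inner product of the gradient with $\tilde\pi - \pi$ and rewriting the sum over $s$ as an expectation under $\mu^\pi_\rho$:
\begin{align*}
\abr{\nabla V_\rho(\pi), \tilde\pi - \pi}
= \sum_{s,a} \frac{\mu^\pi_\rho(s)}{1-\gamma} Q^\pi_{s,a}(\tilde\pi_{s,a} - \pi_{s,a})
= \frac{1}{1-\gamma}\E_{s\sim \mu^\pi_\rho}\abr{Q^\pi_s, \tilde\pi_s - \pi_s}.
\end{align*}

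There is no real obstacle to this argument; the only technical point requiring care is justifying the termwise differentiation of the infinite series defining $V_s(\pi)$, which is immediate in the finite state-action setting considered (bounded rewards and $\gamma<1$ give uniform geometric convergence, so dominated convergence applies), and confirming that the contraction argument indeed gives the unique solution to the recursion. Everything else is bookkeeping.
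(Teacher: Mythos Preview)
Your argument is correct and is the standard derivation of the policy gradient theorem in the direct (tabular) parametrization: differentiate the Bellman identity, obtain a linear fixed-point equation in the partial derivatives, unroll it via the $\gamma$-contraction to get the discounted visitation sum, and average over the start distribution. The paper does not supply its own proof of this lemma; it is stated as an auxiliary result with a citation to the original policy gradient theorem and used without further justification, so there is nothing substantive to compare against.
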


The following lemma can be found in e.g., \cite{bhandari2024global, agarwal2021theory}. The proof below is provided for convenience.

\begin{lemma}\label{lem:vgd_complete}
	Let $\Pi\subset \Delta(\cA)^\cS, \Pi_{\rm all} \eqq \Delta(\cA)^\cS$ and suppose that for any policy $\pi\in \Pi$, we have 
	\begin{align*}
		\max_{\pi^+ \in \Pi}\E_{s\sim \mu^\pi}\abr{Q^\pi, \pi - \pi^+}
		\geq 
		\max_{\pi' \in \Pi_{\rm all}}\E_{s\sim \mu^\pi}\abr{Q^\pi, \pi - \pi'} - \epsilon.
	\end{align*}
	Then $\Pi$ is $(H\nu_0, \epsilon H^2 \nu_0)$-VGD w.r.t.~$\cM$, for $\nu_0\eqq \norm{\frac{\mu^{\star}}{\rho_0}}_\infty$.
\end{lemma}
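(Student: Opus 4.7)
The plan is to chain together the value difference lemma, a change of measure from $\mu^{\pi^\star}$ to $\mu^\pi$, the hypothesized closure condition, and the policy gradient theorem, picking up the claimed factors of $H\nu_0$ along the way.

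First, let $\pi^\star \in \argmin_{\pi \in \Pi} V(\pi)$, so $V^\star(\Pi) = V(\pi^\star)$, and denote $\mu^\star \eqq \mu^{\pi^\star}$. Applying \cref{lem:value_diff} gives
\begin{align*}
    V(\pi) - V^\star(\Pi)
    = \frac{1}{1-\gamma} \E_{s \sim \mu^\star} \abr{Q^{\pi}_s, \pi_s - \pi^\star_s}.
\end{align*}
Since $\pi^\star_s \in \Delta(\cA)$, we can bound the inner product pointwise by the maximum over all action distributions, $\abr{Q^\pi_s, \pi_s - \pi^\star_s} \leq \max_{p \in \Delta(\cA)} \abr{Q^\pi_s, \pi_s - p}$; crucially, the right-hand side is non-negative (take $p = \pi_s$), which is what allows the subsequent change of measure.

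Next, use that $\mu^\pi(s) \geq (1-\gamma)\rho_0(s)$ (occupancy measures always dominate the initial distribution scaled by $1-\gamma$), which implies $\mu^\star(s)/\mu^\pi(s) \leq H \cdot \mu^\star(s)/\rho_0(s) \leq H \nu_0$ uniformly in $s$. Since the integrand is non-negative, we may change measure:
\begin{align*}
    \E_{s \sim \mu^\star} \max_{p \in \Delta(\cA)} \abr{Q^\pi_s, \pi_s - p}
    \leq H \nu_0 \cdot \E_{s \sim \mu^\pi} \max_{p \in \Delta(\cA)} \abr{Q^\pi_s, \pi_s - p}.
\end{align*}
Because $\Pi_{\rm all} = \Delta(\cA)^\cS$ allows us to pick $\pi'_s$ independently per state, the right-hand expectation equals $\max_{\pi' \in \Pi_{\rm all}} \E_{s \sim \mu^\pi} \abr{Q^\pi_s, \pi_s - \pi'_s}$, which by the closure hypothesis is bounded by $\max_{\pi^+ \in \Pi} \E_{s \sim \mu^\pi}\abr{Q^\pi_s, \pi_s - \pi^+_s} + \epsilon$.

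Finally, invoke the policy gradient theorem (\cref{lem:value_pg}) to rewrite the surviving max in gradient form:
\begin{align*}
    \max_{\pi^+ \in \Pi} \E_{s \sim \mu^\pi}\abr{Q^\pi_s, \pi_s - \pi^+_s}
    = (1-\gamma) \max_{\pi^+ \in \Pi} \abr{\nabla V(\pi), \pi - \pi^+}.
\end{align*}
Assembling all of the above, the prefactor $1/(1-\gamma) = H$ from the value difference lemma combines with $H\nu_0$ from the change of measure and a cancelling $(1-\gamma)$ from the policy gradient rewrite, yielding
\begin{align*}
    V(\pi) - V^\star(\Pi)
    \leq H\nu_0 \max_{\pi^+ \in \Pi} \abr{\nabla V(\pi), \pi - \pi^+} + \epsilon H^2 \nu_0,
\end{align*}
which is exactly $(H\nu_0, \epsilon H^2 \nu_0)$-VGD. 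The only mildly subtle step is the change of measure; the hypothesis concerns the current policy's occupancy measure $\mu^\pi$, while the value difference identity naturally produces $\mu^\star$. Replacing the (signed) integrand with its pointwise maximum over $\Delta(\cA)$ before switching measures is what makes this step go through cleanly without additional assumptions on $\mu^\star/\mu^\pi$ beyond the standard $\rho_0$-based concentrability.
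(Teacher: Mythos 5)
Your proof is correct and follows essentially the same route as the paper's: value difference, pointwise non-negativity of the per-state greedy gap to justify the change of measure, the closure hypothesis, and the policy gradient theorem. The only (cosmetic) difference is that you fold the two ratio bounds $\mu^\star/\mu^\pi \leq H\,\mu^\star/\rho_0 \leq H\nu_0$ into a single change-of-measure step, whereas the paper first changes measure via $\norm{\mu^\star/\mu^\pi}_\infty$ and only converts to the $\rho_0$-based coefficient at the end.
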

\begin{proof}
	Let $\pi^\star\in \argmin_{\pi\in \Pi} V(\pi)$.
    By value difference \cref{lem:value_diff},
    \begin{align*}
	V(\pi) - V(\pi^\star)
	&= H \E_{s \sim \mu^{\star}}\sbr{\abr{Q^\pi_s, \pi_s - \pi_s^\star}}
	\\
	&\leq H \max_{\pi'\in \Pi_{\rm all}}\E_{s \sim \mu^{\star}}\sbr{
		\abr{Q^\pi_s, \pi_s - \pi'_s}}
	\\
	&\overset{(*)}{\leq} H \norm{\frac{\mu^{\star}}{\mu^\pi}}_\infty \max_{\pi'\in \Pi_{\rm all}}
		\E_{s \sim \mu^\pi}\sbr{\abr{Q^\pi_s, \pi_s - \pi'_s}}
	\\
	&\leq H \norm{\frac{\mu^{\star}}{\mu^\pi}}_\infty \max_{\pi^+\in \Pi}
		\E_{s \sim \mu^\pi}\sbr{\abr{Q^\pi_s, \pi_s - \pi^+_s}}
		+ \epsilon H\norm{\frac{\mu^{\star}}{\mu^\pi}}_\infty
	\\
	&= \norm{\frac{\mu^{\star}}{\mu^\pi}}_\infty \max_{\pi^+\in \Pi}
		\abr{\nabla V^\pi, \pi - \pi^+}
		+ \epsilon H\norm{\frac{\mu^{\star}}{\mu^\pi}}_\infty
	\\
	&\overset{(**)}{\leq} H \norm{\frac{\mu^{\star}}{\rho_0}}_\infty \max_{z\in \Pi}
		\abr{\nabla V^\pi, \pi - z}
		+ \epsilon H^2\norm{\frac{\mu^{\star}}{\rho_0}}_\infty
	.
\end{align*}
    To explain the transitions above, $(*)$ follows by the fact that within the complete policy class we may choose $\pi'$ to be greedy w.r.t. $Q^\pi$, which means $\abr{Q^\pi_s, \pi_s - \pi'_s} \geq 0$ for all $s\in \cS$. The last transition $(**)$ follows from the fact that:
    \begin{align*}
        \mu^\pi(s) 
        &= \frac1H \sum_{t=0}^\infty \Pr(s_t = s \mid \rho_0, \pi)
        = \frac1H\rho_0(s) + \sum_{t=1}^\infty \Pr(s_t = s \mid \rho_0, \pi)
        \geq 
        \frac1H \rho_0(s).
        \qedhere
    \end{align*}
\end{proof}

\begin{lemma}\label{lem:Qvalue_diff}
	For any policy $\pi\colon \cS \to \Delta(\cA)$, $s, a \in \cS \times \cA$:
	\begin{align*}
	Q_{s, a}^{\tilde \pi} - Q_{s, a}^\pi
	&= \gamma H\E_{s' \sim \mu^\pi_{\P_{s, a}}} 
	\abr{Q^{\tilde\pi}_{s'}, \tilde \pi_{s'} - \pi_{s'}}.
    \end{align*}
\end{lemma}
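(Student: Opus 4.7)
The claim is essentially a one-step unrolling of the value difference lemma applied to a modified start state distribution, so the proof should be a short direct computation rather than requiring a new technique.

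My plan is the following. First, I would use the Bellman-style decomposition of the $Q$-function into immediate reward plus discounted next-state value to cancel the reward term:
\begin{align*}
Q^{\tilde \pi}_{s, a} - Q^{\pi}_{s, a}
&= \br{r(s, a) + \gamma \E_{s' \sim \P_{s, a}} V_{s'}(\tilde \pi)}
  - \br{r(s, a) + \gamma \E_{s' \sim \P_{s, a}} V_{s'}(\pi)} \\
&= \gamma \br{V_{\P_{s,a}}(\tilde \pi) - V_{\P_{s,a}}(\pi)}.
\end{align*}
Here I use that, by definition of $V_\rho$ in the preliminaries, $V_{\P_{s,a}}(\pi) = \E_{s' \sim \P_{s,a}} V_{s'}(\pi)$.

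Next, I would invoke the value difference lemma (\cref{lem:value_diff}) with start distribution $\rho = \P_{s, a}$, which gives
\begin{align*}
V_{\P_{s,a}}(\tilde \pi) - V_{\P_{s,a}}(\pi)
= H \, \E_{s' \sim \mu^\pi_{\P_{s,a}}} \abr{Q^{\tilde \pi}_{s'}, \tilde \pi_{s'} - \pi_{s'}}.
\end{align*}
Substituting this into the previous display yields exactly the claimed identity. Since both steps are direct applications of earlier results (the definition of the $Q$-function and \cref{lem:value_diff}), there is no real obstacle; the only thing to be careful about is to apply the value difference lemma with the correct start state distribution $\P_{s,a}$ rather than $\rho_0$, so that the occupancy measure that appears is $\mu^\pi_{\P_{s,a}}$ as in the statement.
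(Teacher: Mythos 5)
Your proposal is correct and follows essentially the same route as the paper: Bellman decomposition of the $Q$-function to cancel the reward, then the value difference lemma. The only cosmetic difference is that the paper applies \cref{lem:value_diff} at each fixed next state $s'$ and then interchanges sums to identify $\E_{s'\sim\P_{s,a}}\mu^\pi_{s'}$ with $\mu^\pi_{\P_{s,a}}$, whereas you invoke the lemma directly with start distribution $\rho=\P_{s,a}$, which is legitimate since the lemma is stated for arbitrary $\rho$.
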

\begin{proof}
	By \cref{lem:value_diff},
	we have:
	\begin{align*}
	Q_{s, a}^{\tilde \pi} - Q_{s, a}^\pi
	&= \gamma \E_{s' \sim \P_{s, a}}\sbr{V^{\tilde \pi}(s') - V^\pi(s')}
	\\
	&= \gamma H\E_{s' \sim \P_{s, a}}
            \sbr{\E_{s''\sim \mu^\pi_{s'}}\abr{Q^{\tilde\pi}_{s''}, \tilde \pi_{s''} - \pi_{s''}}}
	\\
	&= \gamma H\sum_{s'} \P(s' |s, a)
		\sum_{s''} \mu^\pi_{s'}(s'')
	\abr{Q^{\tilde\pi}_{s''}, \tilde \pi_{s''} - \pi_{s''}}
	\\
	&= \gamma H\sum_{s''} \sum_{s'} \P(s' |s, a)
            \mu^\pi_{s'}(s'')
	\abr{Q^{\tilde\pi}_{s''}, \tilde \pi_{s''} - \pi_{s''}}
	\\
	&= \gamma H\sum_{s''} \mu^\pi_{\P_{s, a}}(s'')
	\abr{Q^{\tilde\pi}_{s''}, \tilde \pi_{s''} - \pi_{s''}}
        \\
	&= \gamma H\E_{s'' \sim \mu^\pi_{\P_{s, a}}} 
	\abr{Q^{\tilde\pi}_{s''}, \tilde \pi_{s''} - \pi_{s''}}.
    \qedhere
    \end{align*}
\end{proof}
\begin{lemma}\label{lem:negent_smooth}
	Let $h\colon \R^A \to \R$ be the negative entropy regularizer $h(p)\eqq \sum_i p_i\log p_i$, and assume $\Delta_\epsilon (\cA)\subset \Delta(\cA)$ is such that $p_i \geq \epsilon$ for all $p\in \Delta_\epsilon (\cA)$. Then $h$ has $1/\epsilon$-Lipschitz gradient w.r.t.~$\norm{\cdot}_1$ over $\Delta_\epsilon (\cA)$.
\end{lemma}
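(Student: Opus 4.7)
The plan is to verify the Lipschitz gradient condition directly from the closed-form expression for $\nabla h$. Specifically, since $h(p)=\sum_i p_i \log p_i$, we have $(\nabla h(p))_i = \log p_i + 1$, so for any $p, q \in \Delta_\epsilon(\cA)$,
\[
  \bigl(\nabla h(p) - \nabla h(q)\bigr)_i = \log p_i - \log q_i.
\]
The dual norm of $\norm{\cdot}_1$ is $\norm{\cdot}_\infty$, so I need to show $\norm{\nabla h(p)-\nabla h(q)}_\infty \leq (1/\epsilon)\norm{p-q}_1$.

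The key step is a coordinate-wise application of the mean value theorem to $\log$: for each $i$, there exists $t_i$ between $p_i$ and $q_i$ such that
\[
  |\log p_i - \log q_i| = \frac{1}{t_i}|p_i - q_i| \leq \frac{1}{\epsilon}|p_i - q_i|,
\]
where the last inequality uses $t_i \geq \min\{p_i,q_i\} \geq \epsilon$ since both $p, q \in \Delta_\epsilon(\cA)$.

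Taking the maximum over $i$ gives $\norm{\nabla h(p)-\nabla h(q)}_\infty \leq (1/\epsilon)\norm{p-q}_\infty \leq (1/\epsilon)\norm{p-q}_1$, which is precisely the desired Lipschitz gradient estimate w.r.t.~$\norm{\cdot}_1$. There is no real obstacle here; the statement is essentially a restriction of $1/x$ to $[\epsilon, 1]$ combined with the identity $\norm{\cdot}_\infty \leq \norm{\cdot}_1$.
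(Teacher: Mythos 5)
Your proposal is correct and follows essentially the same argument as the paper's proof: identify the dual norm of $\norm{\cdot}_1$ as $\norm{\cdot}_\infty$, apply the mean value theorem coordinate-wise to $\log$ with the lower bound $\epsilon$ on the intermediate point, and conclude via $\norm{p-q}_\infty \leq \norm{p-q}_1$. No differences worth noting.
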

\begin{proof}
Let $p, \tilde p\in \Delta_\epsilon(\cA)$, and note,
$$
	\norm{\nabla h(p) - \nabla h(\tilde p)}^*_1
	=
	\norm{\nabla h(p) - \nabla h(\tilde p)}_\infty.
$$
Let $i\in \cA$, and observe that by the mean value theorem, for some $\alpha\in [p_i, \tilde p_i]$,
$$
	\av{\log(p_i) - \log(\tilde p_i)}
	= \av{\frac{\partial \log(x)}{\partial x}}_{x=\alpha}\av{p_i - \tilde p_i}
	= \frac{1}{\alpha}\av{p_i - \tilde p_i}
	\leq \frac{1}{\epsilon}\av{p_i - \tilde p_i}
	\leq \frac{1}{\epsilon}\norm{p - \tilde p}_1,
$$
since $p_i, \tilde p_i \geq \epsilon$.
\end{proof}

\subsection{Proof of \cref{lem:value_local_smoothness}}
\label{sec:proof:lem:value_local_smoothness}
\begin{proof}[\unskip\nopunct]
    We have, by \cref{lem:value_diff,lem:value_pg},
    \begin{align}
	\av{V^{\tilde \pi} - V^{\pi} - \abr{\nabla V^\pi, \tilde \pi - \pi}}
	&= \av{H\E_{s \sim \mu^\pi}\abr{Q^{\tilde \pi}_s, \tilde \pi_s - \pi_s}
	- H\E_{s \sim \mu^\pi}\abr{Q^{\pi}_s, \tilde \pi_s - \pi_s}}
	\nonumber \\
        &= H\av{\E_{s \sim \mu^\pi}\abr{Q^{\tilde \pi}_s - Q^\pi_s, \tilde \pi_s - \pi_s}}
	\nonumber.
    \end{align}
    Applying \cref{lem:Qvalue_diff} yields,
    \begin{align*}
    &\frac{1}{\gamma H^2}\av{V^{\tilde \pi} - V^{\pi} - \abr{\nabla V^\pi, \tilde \pi - \pi}}
    \\
    &=\av{
     \E_{s\sim \mu^\pi}\sbr{\sum_a \br{
    	\E_{s' \sim \mu^\pi_{\P_{s, a}}} 
    	\abr{Q^{\tilde\pi}_{s'}, \tilde \pi_{s'} - \pi_{s'}}
        }
    \br{\tilde \pi_{sa} - \pi_{sa}}}
    }
    \\
    &= \av{\sum_s \mu^\pi(s) \sum_{a}
    		\br{\tilde \pi_{sa} - \pi_{sa}}
    		\br{\sum_{s'}\mu^\pi_{\P_{sa}}(s')\abr{Q^{\tilde \pi}_{s'}, \tilde \pi_{s'}- \pi_{s'}}}
    }
    \\
    &= \av{\sum_{s,a} \sqrt{\mu^\pi(s)}
    \br{\tilde \pi_{sa} - \pi_{sa}}
    \br{\sqrt{\mu^\pi(s)}\sum_{s'}\mu^\pi_{\P_{sa}}(s')\abr{Q^{\tilde \pi}_{s'}, \tilde \pi_{s'}- \pi_{s'}}}
    } \\
    &\leq 
    \sqrt{\sum_{s,a} \mu^\pi(s)
    \br{\tilde \pi_{sa} - \pi_{sa}}^2}
    \sqrt{\sum_{s,a}\mu^\pi(s)
        \br{\sum_{s'}\mu^\pi_{\P_{sa}}(s')\abr{Q^{\tilde \pi}_{s'}, \tilde \pi_{s'}- \pi_{s'}}}^2}
    \tag{Cauchy-Schwarz}
    \\
    &\leq
    \sqrt{\sum_{s,a} \mu^\pi(s)
    \br{\tilde \pi_{sa} - \pi_{sa}}^2}
    \sqrt{\sum_{s, a}\mu^\pi(s)
        \sum_{s'}\mu^\pi_{\P_{sa}}(s')\abr{Q^{\tilde \pi}_{s'}, \tilde \pi_{s'}- \pi_{s'}}^2}
    \tag{Jensen}
    \\
    &=
    \sqrt{\sum_{s} \mu^\pi(s)
    \norm{\tilde \pi_{s} - \pi_{s}}_2^2}
    \sqrt{\sum_{s'}
        \br{\sum_{s, a}\frac{1}{\pi_{sa}}\mu^\pi(s)\pi_{sa}\mu^\pi_{\P_{sa}}(s')}
        \abr{Q^{\tilde \pi}_{s'}, \tilde \pi_{s'}- \pi_{s'}}^2}
    \\
    &\leq
    \frac{1}{\sqrt \epsilon}
    \sqrt{\sum_{s} \mu^\pi(s)
    \norm{\tilde \pi_{s} - \pi_{s}}_2^2}
    \sqrt{\sum_{s'}
        \br{\sum_{s, a}\mu^\pi(s)\pi_{sa}\mu^\pi_{\P_{sa}}(s')}
        \abr{Q^{\tilde \pi}_{s'}, \tilde \pi_{s'}- \pi_{s'}}^2}
    ,
    \end{align*}
    for $\epsilon \eqq \min_{s, a}\cbr{\pi_{s a}}$.
    Now, by the law of total probability (applied on the discounted probability measure $\mu^\pi$):
    \begin{align*}
	\sum_{s,a}\mu^\pi(s)\pi_{sa}\mu^\pi_{\P_{sa}}(s')
	&= \sum_{s,a}\mu^\pi(s \mid s_0 \sim \rho_0) \pi(a|s)\mu^\pi(s' \mid s_0' \sim \P_{sa})
	\\
	&= \sum_{s,a}\mu^\pi(s, a \mid s_0 \sim \rho_0)\mu^\pi(s' \mid s_0' \sim \P_{sa})
	\\
	&= \mu^\pi(s' \mid s_0 \sim \rho_0)
	\\
	&= \mu^\pi(s').
    \end{align*}
    Combining with our previous inequality, we obtain
    \begin{align*}
    \av{V^{\tilde \pi} - V^{\pi} - \abr{\nabla V^\pi, \tilde \pi - \pi}}
    &\leq\frac{\gamma H^2}{\sqrt \epsilon}
    \sqrt{\sum_{s} \mu^\pi(s)
    \norm{\tilde \pi_{s} - \pi_{s}}_2^2}
    \sqrt{\sum_{s'}\mu^\pi(s')
        \abr{Q^{\tilde \pi}_{s'}, \tilde \pi_{s'}- \pi_{s'}}^2}
    \\
    &=
    \frac{\gamma H^2}{\sqrt \epsilon}
    \norm{\tilde \pi - \pi}_{L^2(\mu^\pi), 2}
    \sqrt{\sum_{s'}\mu^\pi(s')
        \abr{Q^{\tilde \pi}_{s'}, \tilde \pi_{s'}- \pi_{s'}}^2}.
    \end{align*}
    Further,
    \begin{align*}
        \sqrt{\sum_{s'}\mu^\pi(s')
        \abr{Q^{\tilde \pi}_{s'}, \tilde \pi_{s'}- \pi_{s'}}^2}
        \leq 
        \sqrt{\sum_{s'}\mu^\pi(s')
        \norm{Q^{\tilde \pi}_{s'}}_\infty^2
        \norm{\tilde \pi_{s'}- \pi_{s'}}_1^2}
        \leq H
        \norm{\tilde \pi- \pi}_{L^2(\mu^\pi), 1},
    \end{align*}
    and 
    \begin{align*}
        \sqrt{\sum_{s'}\mu^\pi(s')
        \abr{Q^{\tilde \pi}_{s'}, \tilde \pi_{s'}- \pi_{s'}}^2}
        \leq 
        \sqrt{\sum_{s'}\mu^\pi(s')
        \norm{Q^{\tilde \pi}_{s'}}_2^2
        \norm{\tilde \pi_{s'}- \pi_{s'}}_2^2}
        \leq A H
        \norm{\tilde \pi- \pi}_{L^2(\mu^\pi), 2}.
    \end{align*}
    The first inequality above gives
    \begin{align*}
    \av{V^{\tilde \pi} - V^{\pi} - \abr{\nabla V^\pi, \tilde \pi - \pi}}
    \leq \frac{\gamma H^3}{\sqrt \epsilon}
    \norm{\tilde \pi - \pi}_{L^2(\mu^\pi), 2}
    \norm{\tilde \pi - \pi}_{L^2(\mu^\pi), 1}
    \leq \frac{\gamma H^3}{\sqrt \epsilon}
    \norm{\tilde \pi - \pi}_{L^2(\mu^\pi), 1}^2,
    \end{align*}
    which proves the first claim, and the second one 
    \begin{align*}
    \av{V^{\tilde \pi} - V^{\pi} - \abr{\nabla V^\pi, \tilde \pi - \pi}}
    \leq \frac{\gamma A H^3}{\sqrt \epsilon}
    \norm{\tilde \pi - \pi}_{L^2(\mu^\pi), 2}
    \norm{\tilde \pi - \pi}_{L^2(\mu^\pi), 2}
    = \frac{\gamma A H^3}{\sqrt \epsilon}
    \norm{\tilde \pi - \pi}_{L^2(\mu^\pi), 2}^2,
    \end{align*}
    which proves the second and completes the proof.
\end{proof}

\subsection{Proof of \cref{thm:pmd_main}}
\label{sec:proof:thm:pmd_main}
The theorem makes use of the following.

\begin{lemma}\label{lem:epsgreedy_vgd}
	Assume $\Pi$ is $(C_\star, \epsvgd)$-VGD w.r.t. $\cM$, and consider the $\epsilon$-greedy exploratory version of $\Pi$, $\Pi^\epsilon \eqq \cbr{(1-\epsilon)\pi + \epsilon u \mid \pi \in \Pi}$, where $u_{s,a} \equiv 1/A$. Then $\Pi^\epsilon$ is $(C_\star, \delta)$-VGD with $\delta \eqq \epsvgd + 12 C_\star H^2 A \epsilon $. Concretely, for any $\pi^\epsilon \in \Pi^\epsilon$, we have:
	\begin{align*}
	C_\star \max_{\tilde \pi^\epsilon \in \Pi^\epsilon}\abr{\nabla V(\pi^\epsilon), \tilde \pi^\epsilon - \pi^\epsilon}
	\geq V\br{\pi^\epsilon} - V^\star(\Pi^\epsilon)
		- \epsvgd - 12 \epsilon C_\star H^2 A
	.
\end{align*}
\end{lemma}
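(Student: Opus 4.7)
The plan is to reduce the target statement to the $(C_\star,\epsvgd)$-VGD condition already assumed on $\Pi$, by exploiting the affine structure of the $\epsilon$-greedy embedding $\pi \mapsto (1-\epsilon)\pi + \epsilon u$. Given $\pi^\epsilon \in \Pi^\epsilon$, I write $\pi^\epsilon = (1-\epsilon)\pi + \epsilon u$ with $\pi \in \Pi$, and for any $\tilde \pi \in \Pi$ I lift it to $\tilde\pi^\epsilon \eqq (1-\epsilon)\tilde\pi + \epsilon u \in \Pi^\epsilon$. Every quantity on either side of the target VGD-for-$\Pi^\epsilon$ inequality will then be related to its non-exploratory counterpart indexed by $(\pi, \tilde\pi, \Pi)$ up to $O(\epsilon)$ continuity errors in $V$ and $\nabla V$.

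\textbf{Value-gap side.} By the value difference lemma (\cref{lem:value_diff}) applied between $\pi^\epsilon$ and $\pi$, together with the bound $\norm{\pi^\epsilon_s - \pi_s}_1 = \epsilon\norm{u_s - \pi_s}_1 \leq 2\epsilon$, one obtains $|V(\pi^\epsilon) - V(\pi)| \leq 2H^2\epsilon$. Likewise, taking $\pi^\star \in \argmin_{\pi\in\Pi}V(\pi)$ and lifting to $\pi^{\star,\epsilon} = (1-\epsilon)\pi^\star + \epsilon u \in \Pi^\epsilon$ yields $V^\star(\Pi^\epsilon) \leq V(\pi^{\star,\epsilon}) \leq V^\star(\Pi) + 2H^2\epsilon$. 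Combining, $V(\pi^\epsilon) - V^\star(\Pi^\epsilon) \leq V(\pi) - V^\star(\Pi) + 4H^2\epsilon$. The VGD condition on $\Pi$ applied at $\pi$ then upper bounds the right-hand side by $C_\star \max_{\tilde\pi\in\Pi}\abr{\nabla V(\pi), \pi - \tilde\pi} + \epsvgd + 4H^2\epsilon$.

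\textbf{Max-side.} For any $\tilde\pi\in\Pi$ with lift $\tilde\pi^\epsilon\in\Pi^\epsilon$, the identity $\pi^\epsilon - \tilde\pi^\epsilon = (1-\epsilon)(\pi - \tilde\pi)$ gives $\abr{\nabla V(\pi), \pi - \tilde\pi} = (1-\epsilon)^{-1}\abr{\nabla V(\pi), \pi^\epsilon - \tilde\pi^\epsilon}$. Writing $\nabla V(\pi) = \nabla V(\pi^\epsilon) + (\nabla V(\pi) - \nabla V(\pi^\epsilon))$ decomposes this as the ``target'' term $(1-\epsilon)^{-1}\abr{\nabla V(\pi^\epsilon), \pi^\epsilon - \tilde\pi^\epsilon}$ plus a perturbation. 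Using the policy-gradient identity $\nabla V(\pi)_{s,a} = H\mu^\pi(s)Q^\pi_{s,a}$ (\cref{lem:value_pg}), the perturbation splits further into a piece controlled by $\mu^\pi - \mu^{\pi^\epsilon}$ and one controlled by $Q^\pi - Q^{\pi^\epsilon}$; a standard simulation-style argument based on the Bellman equation for the discounted occupancy measure yields $\norm{\mu^\pi - \mu^{\pi^\epsilon}}_1 \leq 2H\epsilon$, while \cref{lem:Qvalue_diff} combined with $\norm{\pi_s - \pi^\epsilon_s}_1 \leq 2\epsilon$ gives $\norm{Q^\pi_s - Q^{\pi^\epsilon}_s}_\infty \leq 2H^2\epsilon$. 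With the right norm pairing these combine into a perturbation bound of order $H^2 A\epsilon$. Absorbing the $(1-\epsilon)^{-1}$ factor into an additional $O(\epsilon C_\star H^2)$ correction (using $\epsilon \leq 1/2$ so that $(1-\epsilon)^{-1} \leq 1+2\epsilon$, together with the uniform bound $\abr{\nabla V(\pi^\epsilon), \pi^\epsilon - \tilde\pi^\epsilon} \leq 2H^2$) and maximizing over $\tilde\pi\in\Pi$ delivers the claim with $\delta = \epsvgd + 12 C_\star H^2 A\epsilon$.

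\textbf{Main obstacle.} The delicate step is the continuity estimate for $\pi \mapsto \nabla V(\pi)$: a naive $\ell_\infty/\ell_1$ pairing on the gradient difference loses an extra factor of $H$, and achieving the claimed $H^2 A$ dependence requires choosing the norm pairing so as to balance the $\mu$-difference and $Q$-difference contributions against the diameter of the simplex. The multiplicative $(1-\epsilon)^{-1}$ rescaling is harmless for small $\epsilon$ but must be tracked carefully so that it contributes only an additive error, not a multiplicative blow-up of the leading $C_\star$ factor.
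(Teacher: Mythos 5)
Your proposal is correct and follows essentially the same route as the paper's proof: bound the value and gradient perturbations induced by the $\epsilon$-greedy mixing (via the value-difference, occupancy-measure, and $Q$-difference lemmas), use the affine identity $\tilde\pi^\epsilon - \pi^\epsilon = (1-\epsilon)(\tilde\pi - \pi)$ to transfer the VGD inequality on $\Pi$ at $\pi$ over to $\Pi^\epsilon$ at $\pi^\epsilon$, and collect the $O(\epsilon)$ errors. One small slip: to conclude $V(\pi^\epsilon) - V^\star(\Pi^\epsilon) \le V(\pi) - V^\star(\Pi) + O(H^2\epsilon)$ you need the \emph{lower} bound $V^\star(\Pi^\epsilon) \ge V^\star(\Pi) - 2H^2\epsilon$ rather than the upper bound you state, but this follows from the same per-policy estimate applied to every $\pi \in \Pi$, so the argument goes through.
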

We now prove our corollary and return to prove the above lemma later in \cref{sec:lem:epsgreedy_vgd}.
\begin{proof}[Proof of \cref{thm:pmd_main}]
    By \cref{lem:epsgreedy_vgd}, we have that $\Pi^\epsexpl$ is $(C_\star, \delta)$-VGD with $\delta=\epsvgd + 12 \epsexpl C_\star H^2 A$.
    Therefore, under the conditions of \cref{thm:opt_main} and the value difference \cref{lem:value_diff},
    \begin{align*}
		V(\pi^{K+1}) - V^\star(\Pi)
		&\leq V(\pi^{K+1}) - V^\star(\Pi^\epsexpl) 
			+ \av{V^\star(\Pi^\epsexpl) - V^\star(\Pi)}
		\\
		&= O\br{
            \frac{C_\star^2 L^2 c_1^2}{\eta K}
            + \br{C_\star D + c_1 L^2} H \sqrt \epscritM
            + C_\star \epsactM
            + c_1 L \eta^{-1/2}\sqrt \epsactM
            + \delta
            },
	\end{align*}
    where $c_1 \eqq D + \eta H M$.
    Next we apply \cref{lem:pmd_main} in the both cases considered, using the fact that for all $\pi\in \Pi^\epsexpl$, we have $\min_{s, a} \cbr{\pi_{s, a}} \geq \epsexpl/A$.
    In the euclidean case, we argue the following:
    \begin{enumerate}
        \item $R$ is $1$-strongly convex and has $1$-Lipschitz gradient w.r.t.~$\norm{\cdot}_2$.
        \item $\forall s, \norm{\pi_s - \tilde \pi_s}_2\leq D = 2$, $\norm{Q_s}_2\leq M=\sqrt{A H}$.
        \item The value function is 
        $\br{\beta \eqq \frac{A^{3/2} H^3}{\sqrt \epsexpl}}$-locally smooth w.r.t.~$\pi \mapsto \norm{\cdot}_{L^2(\mu^\pi), 2}$.
    \end{enumerate}
    Hence, $c_1=O(1)$, and \cref{lem:pmd_main} gives:
    \begin{align*}
        V(\pi^{K+1}) - V^\star(\Pi)
        &\lesssim \frac{C_\star^2}{\eta K}
        + C_\star\br{H \sqrt{\epscritM} + \epsactM}
        + \eta^{-1/2}\sqrt \epsactM
        + \delta
        \\
        &=
        \frac{2 A^{3/2}H^3 C_\star^2}{\sqrt{\epsexpl} K}
        + C_\star\br{H\sqrt{\epscritM} + \epsactM}
        + \frac{\sqrt{2 A^{3/2}H^3}}{\epsexpl^{1/4}}
        \sqrt \epsactM
        + \delta.
    \end{align*}
    Setting $\epsexpl = K^{-2/3}$, we obtain
    \begin{align*}
        V(\pi^{K+1}) - V^\star(\Pi)
        = O\br{
        \frac{C_\star^2 A^{3/2} H^3}{K^{2/3}}
        + C_\star\br{H\sqrt{\epscritM} + \epsactM}
        + A H^2 K^{1/6} \sqrt{\epsactM}
        + \epsvgd}.
    \end{align*}
    In the negative-entropy case, we have the following.
    \begin{enumerate}
        \item $R$ is $1$-strongly convex and has a $\br{A/\epsexpl}$-Lipschitz gradient w.r.t.~$\norm{\cdot}_1$ (by Pinsker's inequality and \cref{lem:negent_smooth}).
        \item $\forall s, \norm{\pi_s - \tilde \pi_s}_1\leq D = 2$, $\norm{Q_s}_1\leq M=H$.
        \item The value function is 
        $\br{\beta \eqq \frac{A^{1/2} H^3}{\sqrt \epsexpl}}$-locally smooth w.r.t.~$\pi \mapsto \norm{\cdot}_{L^2(\mu^\pi), 1}$.
    \end{enumerate}
    Hence, $c_1=O(1)$, and \cref{lem:pmd_main} gives:
    \begin{align*}
        V(\pi^{K+1}) - V^\star(\Pi)
        &\lesssim \frac{C_\star^2 A^2}{\epsexpl^2 \eta K}
        + \br{C_\star + \frac{A^2}{\epsexpl^2}}H\sqrt{\epscritM} 
        + C_\star\epsactM
        + \frac{A}{\epsexpl\sqrt\eta}\sqrt \epsactM
        + \delta
        \\
        &=
        \frac{2 A^{5/2}H^3 C_\star^2}{\epsexpl^{5/2} K}
        + \br{C_\star + \frac{A^2}{\epsexpl^2}}H\sqrt{\epscritM} 
        + C_\star\epsactM
        + \frac{A^{3/2}H^3}{\epsexpl^{5/4}}\sqrt \epsactM
        + \delta.
    \end{align*}
    We now set $\epsexpl = K^{-2/7} A^{2/5}$ in order to balance the terms,
    \begin{align*}
        \frac{2 A^{5/2}H^3 C_\star^2}{\epsexpl^{5/2} K}
        + C_\star H^2 A \epsexpl,
    \end{align*}
    which yields,
    \begin{align*}
        &V(\pi^{K+1}) - V^\star(\Pi)
        \\
        &=O\br{
        \frac{C_\star^2 A^{3/2} H^3 }{K^{2/7}}
        + \br{C_\star + A^2 K^{4/7}}H\sqrt{\epscritM} 
        + C_\star\epsactM
        + A^{3/2}H^3 K^{5/14}\sqrt \epsactM
        + \epsvgd},
    \end{align*}
    and completes the proof.
\end{proof}

\subsection{Proof of \cref{lem:epsgreedy_vgd}}
\label{sec:lem:epsgreedy_vgd}
\begin{lemma}\label{lem:om_valuediff}
	For any MDP $\cM=(\cS, \cA, \P, \l, \gamma, \rho_0)$ and two policies $\pi, \tilde \pi\colon \cS \to \Delta(\cA)$, we have:
	\begin{align*}
\norm{\mu^{\tilde \pi} - \mu^\pi}_1
	\leq H \norm{\tilde \pi - \pi}_{L^1(\mu^\pi), 1}.
	\end{align*}
\end{lemma}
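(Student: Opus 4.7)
My plan is to express the difference of occupancy measures via a resolvent-style identity and then reduce the problem to bounding $\|\mu^\pi(P^{\tilde\pi}-P^\pi)\|_1$. For a policy $\sigma$, write $P^\sigma$ for the state-to-state transition kernel induced by $\sigma$, that is, $P^\sigma(s'\mid s) = \sum_a \sigma_{s,a}\P(s'\mid s,a)$. Viewing state measures as row vectors, the definition of the occupancy measure immediately gives $\mu^\sigma = (1-\gamma)\rho_0(I-\gamma P^\sigma)^{-1} = (1-\gamma)\rho_0\sum_{t=0}^\infty \gamma^t (P^\sigma)^t$.

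\textbf{Step 1: Resolvent identity.} Setting $A = I-\gamma P^{\tilde\pi}$ and $B = I-\gamma P^\pi$, the identity $A^{-1}-B^{-1} = B^{-1}(B-A)A^{-1}$, combined with $B-A = \gamma(P^{\tilde\pi}-P^\pi)$ and $(1-\gamma)\rho_0(I-\gamma P^\pi)^{-1} = \mu^\pi$, yields
\begin{align*}
    \mu^{\tilde\pi} - \mu^\pi
    = \gamma\, \mu^\pi \bigl(P^{\tilde\pi}-P^\pi\bigr)(I-\gamma P^{\tilde\pi})^{-1}.
\end{align*}

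\textbf{Step 2: Contractivity under $P^{\tilde\pi}$.} Since $P^{\tilde\pi}$ is row-stochastic, for any signed measure $\nu$ we have $\|\nu P^{\tilde\pi}\|_1 \leq \|\nu\|_1$, and hence $\|\nu(I-\gamma P^{\tilde\pi})^{-1}\|_1 \leq \sum_{t=0}^\infty \gamma^t \|\nu\|_1 = H\|\nu\|_1$. Applying this with $\nu = \mu^\pi(P^{\tilde\pi}-P^\pi)$ gives $\|\mu^{\tilde\pi}-\mu^\pi\|_1 \leq \gamma H \|\mu^\pi(P^{\tilde\pi}-P^\pi)\|_1$.

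\textbf{Step 3: Reduction to the policy difference.} Expanding coordinate-wise, $[\mu^\pi(P^{\tilde\pi}-P^\pi)](s') = \sum_{s,a}\mu^\pi(s)(\tilde\pi_{s,a}-\pi_{s,a})\P(s'\mid s,a)$, and the triangle inequality combined with $\sum_{s'}\P(s'\mid s,a) = 1$ gives
\begin{align*}
    \|\mu^\pi(P^{\tilde\pi}-P^\pi)\|_1
    \leq \sum_{s,a}\mu^\pi(s)\av{\tilde\pi_{s,a}-\pi_{s,a}}
    = \E_{s\sim\mu^\pi}\|\tilde\pi_s - \pi_s\|_1
    = \|\tilde\pi - \pi\|_{L^1(\mu^\pi),1}.
\end{align*}
Combining with Step 2 and using $\gamma < 1$ concludes the proof.

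\textbf{Main obstacle.} There is no serious obstacle: the only subtlety is choosing the ordering of $A$ and $B$ in the resolvent identity so that the resulting expression is weighted by $\mu^\pi$ (the policy we measure the policy difference against) rather than $\mu^{\tilde\pi}$. The other ordering yields an equally valid identity but bounds the difference using $\|\tilde\pi-\pi\|_{L^1(\mu^{\tilde\pi}),1}$, which is not what we want.
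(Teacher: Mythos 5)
Your proof is correct, and it reaches the paper's key identity by a genuinely different route. The paper derives the exact same decomposition
\begin{align*}
\mu^{\tilde \pi}(x) - \mu^\pi(x)
= \gamma H\, \E_{s\sim \mu^\pi}\Bigl[\textstyle\sum_a \mu_{\P_{sa}}^{\tilde \pi}(x)\bigl(\tilde \pi_{sa} -\pi_{sa}\bigr)\Bigr]
\end{align*}
by introducing, for each target state $x$, an auxiliary MDP with indicator reward $r_x(s,a)=\I\cbr{s=x}$, computing its $Q$-function explicitly, and invoking the value difference lemma (\cref{lem:value_diff}); the final bound then follows from the triangle inequality and $\sum_x \mu^{\tilde\pi}_{\P_{sa}}(x)=1$, exactly as your Steps 2--3 do in matrix language (note that $\bigl[(I-\gamma P^{\tilde\pi})^{-1}\bigr](s',x) = H\mu^{\tilde\pi}_{s'}(x)$, so your resolvent expression and the paper's expectation are literally the same object). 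What your approach buys is economy: the resolvent identity $A^{-1}-B^{-1}=B^{-1}(B-A)A^{-1}$ replaces the auxiliary-MDP construction and the per-state $Q$-function computation with three lines of linear algebra, and it makes transparent why the ordering of $A$ and $B$ determines which occupancy measure weights the policy difference --- the point you correctly flag as the only subtlety. What the paper's route buys is that it stays entirely within the MDP formalism already set up (value difference, occupancy measures from arbitrary start distributions) and reuses \cref{lem:value_diff} rather than re-deriving its content, which keeps the appendix self-consistent; it also avoids any appeal to invertibility of $I-\gamma P^\sigma$, though for $\gamma<1$ and finite $\cS$ this is immediate. Both arguments yield the slightly sharper constant $\gamma H$ before relaxing to $H$.
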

\begin{proof}
	Consider the MDP $\cM_x = (\cS, \cA, \P, r_x, \gamma, \rho_0)$; i.e., the same MDP $\cM$ but with reward function defined by $r_x(s, a) \eqq \I\cbr{s = x}$.
	Let $V_{\cdot;r_x}, Q_{\cdot, \cdot; r_x}$ denote its value and action-value functions, respectively.
	We have 
\begin{align*}
	Q^{\tilde \pi}_{s,a;r_x} 
	&= \E\sbr{\sum_{t=0}^\infty \gamma^t \I\cbr{s_t = x} \mid s_0=s, a_0=a, \tilde \pi}
	\\
	&= \sum_{t=0}^\infty \gamma^t \Pr\br{s_t = x \mid s_0=s, a_0=a, \tilde \pi}
	\\
	&= \I\cbr{s=x} + \sum_{t=1}^\infty \gamma^t \Pr\br{s_t = x \mid s_0=s, a_0=a, \tilde \pi}
	\\
	&= \I\cbr{s=x} + \gamma \sum_{t=1}^\infty \gamma^{t-1} \Pr\br{s_t = x \mid s_1 \sim \P_{sa}, \tilde \pi}
	\\
	&= \I\cbr{s=x} + \gamma \mu_{\P_{sa}}^{\tilde \pi}(x).
\end{align*}
Hence,
\begin{align*}
	\mu^{\tilde \pi}(x) - \mu^\pi(x)
	&= V^{\tilde \pi}_{\rho_0; r_x} - V_{\rho_0; r_x}^\pi
	\\
	&= H \E_{s \sim \mu^\pi}
		\abr{Q_{s;r_x}^{\tilde \pi}, \tilde \pi_s -\pi_s}
	\tag{\cref{lem:value_diff}}
	\\
	&= H \E_{s \sim \mu^\pi}\sbr{
		\sum_a \br{\I\cbr{s=x} + \gamma \mu_{\P_{sa}}^{\tilde \pi}(x)}
		\br{\tilde \pi_{sa} -\pi_{sa}}}
	\\
	&= H \E_{s \sim \mu^\pi}\sbr{
	\sum_a \I\cbr{s=x} 
		\br{\tilde \pi_{sa} -\pi_{sa}}
		+
		\gamma\sum_a \mu_{\P_{sa}}^{\tilde \pi}(x)
		\br{\tilde \pi_{sa} -\pi_{sa}}}
	\\
	&= \gamma H\E_{s\sim \mu^\pi}\sbr{\sum_a \mu_{\P_{sa}}^{\tilde \pi}(x)
		\br{\tilde \pi_{sa} -\pi_{sa}}}.
\end{align*}
Therefore, 
\begin{align*}
	\sum_x \av{\mu^{\tilde \pi}(x) - \mu^\pi(x)}
	&=
	\gamma H \sum_x \av{\E_{s\sim \mu^\pi}\sbr{\sum_a \mu_{\P_{sa}}^{\tilde \pi}(x)
		\br{\tilde \pi_{sa} -\pi_{sa}}}}
	\\
	&\leq 
	\gamma H \sum_x 
		\E_{s\sim \mu^\pi}\sbr{\sum_a \mu_{\P_{sa}}^{\tilde \pi}(x)
		\av{\tilde \pi_{sa} -\pi_{sa}}}
	\\
	&=
	\gamma H  
		\E_{s\sim \mu^\pi}\sbr{\sum_a \br{\sum_x\mu_{\P_{sa}}^{\tilde \pi}(x)}
		\av{\tilde \pi_{sa} -\pi_{sa}}}
	\\
	&=
	\gamma H  
		\E_{s\sim \mu^\pi}\sbr{\sum_a
		\av{\tilde \pi_{sa} -\pi_{sa}}}
	\\
	&=
	\gamma H \norm{\tilde \pi -\pi}_{L^1(\mu^\pi), 1},
\end{align*}
and the proof is complete.
\end{proof}

\begin{proof}[Proof of \cref{lem:epsgreedy_vgd}]
Let $\pi^\epsilon \in \Pi^\epsilon$, and set $\pi\in \Pi$ to be the non-exploratory version of $\pi^\epsilon$. We have, by \cref{lem:value_diff}:
\begin{align}\label{eq:appxvgd_valuediff}
	V^\pi - V^{\pi^\epsilon}
	= \E_{s\sim\mu^\pi}\abr{Q^{\pi^\epsilon}_s, \pi_s - \pi^\epsilon_s}
	= \epsilon \E_{s\sim\mu^\pi}\abr{Q^{\pi^\epsilon}_s, \pi_s - u}
	\leq 2 \epsilon H.
\end{align}
In addition,
\begin{align*}
	\norm{\nabla V\br{\pi^\epsilon} - \nabla V(\pi)}_1
	&=
	\sum_s \norm{\mu^{\pi^\epsilon}(s) Q^{\pi^\epsilon}_s
	- \mu^{\pi}(s) Q^{\pi}_s}_1
	\\
	&\leq 
	\sum_s \norm{Q^{\pi^\epsilon}_s}_1
		\av{\mu^{\pi^\epsilon}(s) - \mu^\pi(s)} 
	+ \sum_s \mu^{\pi}(s) \norm{Q^{\pi}_s - Q^{\pi^\epsilon}_s}_1
	\\
	&\leq 
	A H \norm{\mu^{\pi^\epsilon} - \mu^\pi}_1
	+ \sum_s \mu^{\pi}(s) \norm{Q^{\pi}_s - Q^{\pi^\epsilon}_s}_1.
\end{align*}
To bound the first term, apply \cref{lem:om_valuediff}:
\begin{align*}
AH\norm{\mu^{\pi^\epsilon} - \mu^\pi}_1
	\leq A H^2 \norm{\pi^\epsilon - \pi}_{L^1(\mu^\pi), 1}
	\leq \epsilon A H^2 \norm{\pi - u}_{L^1(\mu^\pi), 1}
	\leq 2 \epsilon A H^2.
\end{align*}
To bound the second term, we have for any $\tilde \pi$:
\begin{align*}
	\sum_s \mu^{\pi}(s) \norm{Q^{\pi}_s - Q^{\tilde \pi}_s}_1
	&\leq
	H^2\sum_s \mu^\pi(s)\sum_a\sum_{s'} \mu^\pi_{\P_{s, a}}(s')
	\norm{\tilde \pi_{s'} - \pi_{s}}_1
	\\
	&=
	H^2 A\sum_{s'} \sum_{s,a} \mu^\pi(s)\frac1A\mu^\pi_{\P_{s, a}}(s')
	\norm{\tilde \pi_{s'} - \pi_{s}}_1
	\\
	&=
	H^2 A \norm{\tilde \pi - \pi}_{L^1(\nu), 1},
\end{align*}
where $\nu\in \R^\cS$ is defined by 
$$\nu(s') = \sum_{s,a} \mu^\pi(s)\frac1A\mu^\pi_{\P_{s, a}}(s').$$
By the law of total probability, $\nu\in \Delta(\cS)$ is in fact a state probability measure. Hence, we obtain 
\begin{align*}
	\sum_s \mu^{\pi}(s) \norm{Q^{\pi}_s - Q^{\pi^\epsilon}_s}_1
	&\leq
	H^2 A \norm{\pi^\epsilon - \pi}_{L^1(\nu), 1}
	=
	\epsilon H^2 A \norm{\pi - u}_{L^1(\nu), 1}
	\leq 
	2 \epsilon H^2 A.
\end{align*}
The bounds on both terms, combined with the previous display now yields
\begin{align}\label{eq:appxvgd_graddiff}
	\norm{\nabla V\br{\pi^\epsilon} - \nabla V(\pi)}_1
	&\leq 
	4\epsilon A H^2.
\end{align}
We now turn to apply \cref{eq:appxvgd_valuediff,eq:appxvgd_graddiff} to establish the claimed VGD condition.
Let $\pi^\epsilon\in \Pi^\epsilon$ be an arbitrary $\epsilon$-greedy policy and $\pi \in \Pi$ the non-exploratory version of $\pi^\epsilon$. The assumption that $\Pi$ is $(C_\star, \epsvgd)$-VGD implies
\begin{align*}
	\max_{\tilde \pi \in \Pi}\abr{\nabla V(\pi), \tilde \pi - \pi}
	\geq \frac1{C_\star}\br{V(\pi) - V^\star(\Pi) - \epsvgd}.
\end{align*}
Let $\tilde \pi\in \Pi$ be the policy maximizing the LHS, and $\tilde \pi^\epsilon=(1-\epsilon)\tilde \pi + \epsilon u\in \Pi^\epsilon$ its corresponding greedy exploration policy. We have,
\begin{align*}
	\abr{\nabla V(\pi^\epsilon), \tilde \pi^\epsilon - \pi^\epsilon}	
	&=(1-\epsilon)
		\abr{\nabla V(\pi^\epsilon), \tilde \pi - \pi}	
	\\
	&=
	(1-\epsilon)
		\abr{\nabla V(\pi), \tilde \pi - \pi}
	+
	(1-\epsilon)
		\abr{\nabla V(\pi^\epsilon) - \nabla V(\pi), \tilde \pi - \pi}
	\\
	&\geq
	\frac{1-\epsilon}{C_\star}\br{V(\pi) - V^\star(\Pi) - \epsvgd}
	+
	(1-\epsilon)
		\abr{\nabla V(\pi^\epsilon) - \nabla V(\pi), \tilde \pi - \pi}
	\\
	&\geq
	\frac{1}{C_\star}\br{V(\pi) - V^\star(\Pi) - \epsvgd}
	-
	2\norm{\nabla V(\pi^\epsilon) - \nabla V(\pi)}_1
	\\
	&\geq
	\frac{1}{C_\star}\br{V(\pi) - V^\star(\Pi) - \epsvgd}
	-
	8\epsilon H^2 A
	\tag{\cref{eq:appxvgd_graddiff}}
	\\
	&\geq
	\frac1{C_\star}
	\br{V(\pi^\epsilon) - V^\star(\Pi) - \epsvgd - \av{V(\pi^\epsilon) - V(\pi)}}
	-
	8\epsilon H^2 A
	\\
	&\geq
	\frac1{C_\star}
	\br{V(\pi^\epsilon) - V^\star(\Pi) - \epsvgd - 2\epsilon H}
	-
	8\epsilon H^2 A
	\tag{\cref{eq:appxvgd_valuediff}}
	\\
	&\geq
	\frac1{C_\star}
	\br{V(\pi^\epsilon) - V^\star(\Pi^\epsilon) - \epsvgd - 4\epsilon H}
	-
	8\epsilon H^2 A.
	\tag{\cref{eq:appxvgd_valuediff}}
\end{align*}
(Indeed, we pay for the difference $V^\star(\Pi^\epsilon)-V^\star(\Pi)$ here, only to pay it again in the other direction later, but it is cleaner this way and results in only an extra constant numerical factor.)
Therefore, 
\begin{align*}
	C_\star \max_{\hat \pi^\epsilon \in \Pi^\epsilon}\abr{\nabla V(\pi^\epsilon), \hat \pi^\epsilon - \pi^\epsilon}
	&\geq V(\pi^\epsilon) - V^\star(\Pi^\epsilon) - \epsvgd - 12 \epsilon C_\star H^2 A
	,
\end{align*}
which completes the proof.
\end{proof}

\section{Constrained non-convex optimization for locally smooth objectives: Analysis}
\label{sec:opt_analysis}
In this section, we provide the full technical details for \cref{sec:opt_main}. Recall that we consider the constrained optimization problem:
\begin{align}
	\min_{x\in \cX} f(x),
\end{align}
where the decision set $\cX\subseteq \R^d$ is convex and endowed with a local norm $x \mapsto \norm{\cdot}_x$ (see \cref{def:local_norm}), and access to the objective is granted through an approximate first order oracle, as defined in \cref{assm:opt_grad_oracle}.
We assume $f\colon \cX \to \R$ is differentiable and defined over an open domain $\dom f\subseteq \R^d$ that contains $\cX$.
We consider an approximate version of the algorithm described in \cref{eq:alg_opt_omd}, hence for the sake of rigor, we introduce some additional notation.
Given any convex regularizer $h\colon \R^d \to \R$, we define a Bregman proximal point update with step-size $\eta > 0$ by:
\begin{align}
	T_{\eta}(x; h) \eqq 
        \argmin_{y\in \cX} 
    		\cbr{\abr{\hgrad{f}{x}, y} + \frac1\eta B_{h}(y, x)},
\end{align}
and the set of $\epsopt$-approximate solutions by:
\begin{align}\label{def:bregman_prox_approx}
	T_{\eta}^{\epsopt}(x; h) 
        \eqq 
	\cbr{x^+ \in \cX \mid \forall z\in \cX: 
		\abr{\hgrad{f}{x} + \frac1\eta\nabla B_h(x^+, x), z - x^+} \geq -\epsopt}.
\end{align}
Now, the approximate version of our algorithm is given by:
\begin{align}\label{alg:opt_omd}
    k=1, \ldots, K: \quad 
    		x_{k+1} &\in T_{\eta}^{\epsopt}(x_k; R_{x_k}).
\end{align}
We recall our main theorem below.
\begin{theorem*}[restatement of \cref{thm:opt_main}]
    Suppose that $f$ is $(C_\star, \epsvgd)$-VGD as per \cref{def:gradient_dominance}, and that $f^\star \eqq \min_{x\in \cX}f(x) > -\infty$. 
    Assume further that:
    \begin{enumerate}[label=(\roman*)]
        \item The local regularizer $R_x$ is $1$-strongly convex and has an $L$-Lipschitz gradient w.r.t.~$\norm{\cdot}_x$ for all $x\in \cX$.
        
        \item For all $x\in \cX$, 
        $\max_{u,v\in \cX}\norm{u - v}_x\leq D$, and
        $\norm{\nabla f(x)}^*_x \leq M$.
        
        \item $f$ is $\beta$-locally smooth w.r.t.~$x \mapsto \norm{\cdot}_{x}$. 
    \end{enumerate}
    Then, for the algorithm described in \cref{alg:opt_omd} we have following guarantee when $\eta\leq 1/(2\beta)$:
	\begin{align*}
		f(x_{K+1}) - f^\star 
		&= O\br{\frac{C_\star^2 L^2 c_1^2}{\eta K}
            + \br{C_\star D + c_1 L^2}\epsgrad
        + C_\star \epsopt 
        + c_1 L \eta^{-\frac12}\sqrt{\epsopt}
            + \epsvgd
            }
        \end{align*}
	where $c_1 \eqq D + \eta M$.
\end{theorem*}
Evidently, since the objective is not convex, standard mirror descent analyses are inadequate, and our analysis takes the proximal point update view of \cref{alg:opt_omd}. While there are numerous prior works that investigate non-euclidean proximal point methods for both convex and non-convex objective functions
(e.g., \citealp{tseng2010approximation,ghadimi2016mini,bauschke2017descent,lu2018relatively,zhang2018convergence,fatkhullin2024taming}; see also \citealp{beck2017first})
, non of them fit into the specific setting we study here. The notable differences being the use of \emph{local} smoothness (\cref{def:local_smoothness}), and the goal of seeking convergence in function values for a non-convex objective by exploiting variational gradient dominance. 

Our approach may be best described as one that adapts the work of \citet{xiao2022convergence} to the non-euclidean (and, ``local'') setup, but without relying on the objective having a Lipschitz gradient (note that we do not claim our definition of local smoothness implies a Lipschitz gradient condition). Since \citet{xiao2022convergence} relies on global smoothness of the objective w.r.t.~the euclidean norm (as was established by \citealp{agarwal2021theory}), their bounds inevitably scale with the size of the state-space $S$, which we want to avoid.
Given any convex regularizer $h\colon \R^d \to \R$, we define Bregman gradient mapping by:
\begin{align}\label{def:breg_gm}
    G_\eta(x, x^+; h)
    \eqq 
    \frac1\eta\br{\nabla h(x) - \nabla h(x^+)},
\end{align}
where $x^+\in \R^d$ should be interpreted as an approximate proximal point update step, i.e., $x^+ \in T_\eta^{\epsopt}(x; h)$.

\subsection{Bregman prox: Descent and Stationarity}
In this section we provide basic results relating to proximal point descent and stationarity conditions.
Our first lemma is (roughly) a non-euclidean version of a similar lemma given in \cite{nesterov2013gradient} for the euclidean case.
\begin{lemma}[Bregman proximal step descent]
\label{lem:prox_descent}
        Let $\norm{\cdot}$ be a norm, and 
	suppose $x\in \cX$ is such that
    \begin{align*}
        \forall y\in \cX: \av{f(y) - f(x) - \abr{\nabla f(x), y - x}}
        \leq 
        \frac\beta2\norm{y - x}^2.
    \end{align*}
    Assume further that:
    \begin{enumerate}
    	\item $0 < \eta \leq 1/(2\beta)$,
    	\item $h\colon \R^d \to \R$ is $1$-strongly convex and has an $L_h$-Lipschitz gradient, w.r.t. $\norm{\cdot}$.
    	\item $\norm{\hgrad{f}{x} - \grad{f}{x}}_*\leq \epsgrad$.
    \end{enumerate}
	Then, for $x^+ \in T_\eta^\epsopt(x; h)$ we have that:
	\begin{align*}
		f(x^+) \leq f(x) 
		-\frac{\eta}{ 2 L_h^2}\norm{G_\eta(x, x^+; h)}_*^2
	        +\eta \epsgrad \norm{G_\eta(x, x^+; h)}_* + \epsopt.	
	\end{align*}
\end{lemma}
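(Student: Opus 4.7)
The plan is to chain together three ingredients: the approximate first-order optimality condition for $x^+$, the local smoothness inequality at $x$, and the regularity of $h$. First, I would unpack the definition $G \eqq G_\eta(x, x^+; h) = \tfrac{1}{\eta}(\nabla h(x) - \nabla h(x^+))$ and observe that the approximate optimality condition \eqref{def:bregman_prox_approx} for $x^+$, evaluated at $z = x$, yields
\begin{align*}
    \abr{\hgrad{f}{x}, x^+ - x}
    \leq \abr{G, x^+ - x} + \epsopt,
\end{align*}
since $\tfrac{1}{\eta}\nabla B_h(x^+, x) = -G$ by definition of $G$.

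Next, I would combine this with local smoothness (the assumption of the lemma) applied at $y = x^+$, and then swap $\nabla f(x)$ for its estimate $\hgrad{f}{x}$ using the gradient-oracle bound, incurring an additive error $\epsgrad\,\norm{x^+ - x}$:
\begin{align*}
    f(x^+)
    &\leq f(x) + \abr{\nabla f(x), x^+-x} + \tfrac{\beta}{2}\norm{x^+-x}^2 \\
    &\leq f(x) + \abr{G, x^+ - x} + \tfrac{\beta}{2}\norm{x^+-x}^2 + \epsopt + \epsgrad\,\norm{x^+-x}.
\end{align*}

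Then I would bring in the regularity of $h$. By $1$-strong convexity of $h$, $\abr{\nabla h(x^+)-\nabla h(x), x^+ - x} \geq \norm{x^+ - x}^2$, which rewritten in terms of $G$ gives $\abr{G, x^+-x} \leq -\tfrac{1}{\eta}\norm{x^+-x}^2$. Plugging this in and using $\eta \leq 1/(2\beta) \leq 1/\beta$ (so $\tfrac{\beta}{2}-\tfrac{1}{\eta} \leq -\tfrac{1}{2\eta}$), I obtain
\begin{align*}
    f(x^+) \leq f(x) - \tfrac{1}{2\eta}\norm{x^+ - x}^2 + \epsgrad\,\norm{x^+-x} + \epsopt.
\end{align*}
Finally, by $L_h$-Lipschitzness of $\nabla h$, $\norm{x^+-x} \geq \tfrac{1}{L_h}\norm{\nabla h(x^+)-\nabla h(x)}_* = \tfrac{\eta}{L_h}\norm{G}_*$, and by strong convexity also $\norm{x^+-x} \leq \eta\norm{G}_*$. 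Using the former to lower-bound the negative quadratic term and the latter to upper-bound the gradient-error term yields the claimed inequality.

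The argument is essentially a standard non-Euclidean proximal-descent calculation; the only delicate point is using the two different bounds on $\norm{x^+ - x}$ (lower via Lipschitzness of $\nabla h$, upper via strong convexity) in the right places so that the descent is expressed in the dual norm $\norm{G}_*$ while the error terms remain linear in $\norm{G}_*$. Verifying that the constant in front of $\norm{x^+-x}^2$ indeed becomes negative under the stated step-size restriction $\eta \leq 1/(2\beta)$ is the only place the smoothness parameter $\beta$ is used.
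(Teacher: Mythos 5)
Your proposal is correct and follows essentially the same route as the paper's proof: both apply the approximate optimality condition at $z=x$, invoke local smoothness, absorb the $\tfrac{\beta}{2}\norm{x^+-x}^2$ term using strong convexity of $h$ together with the step-size restriction, and then convert between $\norm{x^+-x}$ and $\eta\norm{G}_*$ via the Lipschitz gradient of $h$ (for the descent term) and strong convexity (for the error term). The only difference is cosmetic bookkeeping --- the paper expresses the key inner product as $-\tfrac1\eta(B_h(x^+,x)+B_h(x,x^+))$ and lower-bounds the Bregman divergences, whereas you work directly with $\norm{x^+-x}^2$ via the monotonicity consequence of strong convexity; both yield the identical constant $\tfrac{\eta}{2L_h^2}$.
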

\begin{proof}
Observe,
\begin{align*}
	f(x^+) 
	&\leq 
	f(x) + \abr{\nabla f(x), x^+ - x}
		+ \frac{\beta}{2}\norm{x^+ - x}^2
	\\
	&= 
	f(x) + \abr{\hgrad{f}{x}, x^+ - x}
		+ \frac{\beta}{2}\norm{x^+ - x}^2
		+ \abr{\grad{f}{x} - \hgrad{f}{x}, x^+ - x}
	\\
	&\leq 
	f(x) + \abr{\hgrad{f}{x}, x^+ - x}
		+ \frac{\beta}{2}\norm{x^+ - x}^2
		+ \epsgrad\norm{x^+ - x}
        \\
	&\leq 
	f(x) + \abr{\hgrad{f}{x}, x^+ - x}
		+ \frac{\beta}{2}\norm{x^+ - x}^2
		+ \eta\epsgrad\norm{G_\eta(x, x^+; h)}_*
        \tag{\cref{lem:breg_to_gm}}
        .
\end{align*}
Further, since $x^+\in T_\eta^\epsopt(x; h)$, for any $z\in \cX$, 
$$
	\abr{\hgrad{f}{x}, x^+ - z} 
	\leq 
	\abr{\frac1\eta\br{\nabla h(x^+) - \nabla h(x)}, z - x^+}
	+\epsopt.
$$
Hence,
\begin{align*}
	&f(x) + \abr{\hgrad{f}{x}, x^+ - x}
		+ \frac{\beta}{2}\norm{x^+ - x}^2
	\\
	&\leq
	f(x) + \frac1\eta\abr{\nabla h(x) -\nabla h(x^+), x^+ - x}
		+ \frac{\beta}{2}\norm{x^+ - x}^2
		+\epsopt
	\\
	&=
	f(x) 
        - \frac1\eta\br{B_h(x^+, x) + B_h(x, x^+)}
	 	+ \frac{\beta}{2}\norm{x^+ - x}^2
	 	+ \epsopt
	\\
	&\leq
	f(x) - \frac1\eta\br{B_h(x^+, x) + B_h(x, x^+)}
		+ \beta B_h(x^+, x)
		+ \epsopt
	\\
	&\leq
	f(x) - \frac1{2\eta}
        \br{B_h(x^+, x) + B_h(x, x^+)} + \epsopt,
\end{align*}
where the last line inequality follows from $\eta\leq 1/(2\beta)$.
Combining with the previous derivation, we now have
\begin{align}\label{eq:prox_descent_main}
	f(x^+)
	\leq 
		f(x) 
		- \frac1{2\eta}
        \br{B_h(x^+, x) + B_h(x, x^+)}
        + \epsopt
		+\eta \epsgrad \norm{G_\eta(x, x^+; h)}_*.
\end{align}
Finally, the assumption that $h$ has an $L_h$-Lipschitz gradient implies that 
\begin{align*}
	\eta^2\norm{G_\eta(x, x^+; h)}_*^2
	=
	\norm{\nabla h(x^+) - \nabla h(x)}_*^2
	\leq 
	L_h^2 \norm{x^+ - x}^2
	\leq 2L_h^2 B_h(x^+, x),
\end{align*}
and similarly $\eta^2\norm{G_\eta(x, x^+; h)}_*^2\leq 2L_h^2 B_h(x, x^+)$.
Hence,
\begin{align*}
	- \frac1{2\eta}
        \br{B_h(x^+, x) + B_h(x, x^+)}
        \leq 	
        - \frac{\eta}{ 2L_h^2}\norm{G_\eta(x, x^+; h)}_*^2,
\end{align*}
which completes the proof after combining with \cref{eq:prox_descent_main}.
\end{proof}

Our second lemma bounds the error in optimality conditions at any point $x\in \cX$ w.r.t.~the gradient mapping dual norm. We remark that here we do not assume a Lipschitz gradient condition holds for the objective function, as commonly done in similar arguments (e.g., \citealp{nesterov2013gradient,xiao2022convergence}).
\begin{lemma}\label{lem:prox_optcond}
	Let $\norm{\cdot}$ be a norm, and $x\in \cX$. Assume that:
	\begin{enumerate}
		\item $h\colon \R^d \to \R$ is $1$-strongly convex and has an $L_h$-Lipschitz gradient, w.r.t. $\norm{\cdot}$.
    	\item $\norm{\hgrad{f}{x} - \grad{f}{x}}_*\leq \epsgrad$,
    	\item $D>0$ upper bounds the diameter of $\cX$: $\max_{z,y\in \cX}\norm{z - y}\leq D$
    	\item $M>0$ upper bounds the gradient dual norm at $x$: $\norm{\nabla f(x)}_*\leq M$.
    \end{enumerate}
	Then, if $x^+ \in  T_\eta^\epsopt(x; h)$, it holds that:
	\begin{align*}
		\forall y\in \cX:
		\abr{\nabla f(x), x - y} 
		\leq 
		(D + \eta M)\norm{G_{\eta}(x, x^+; h)}_* + \epsgrad D + \epsopt.
	\end{align*}

\end{lemma}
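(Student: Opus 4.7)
The plan is to split the quantity $\abr{\nabla f(x), x - y}$ into two telescoped pieces using $x^+$ as a pivot, namely
\[
\abr{\nabla f(x), x - y} = \abr{\nabla f(x), x - x^+} + \abr{\nabla f(x), x^+ - y},
\]
and bound each term separately. The term $\abr{\nabla f(x), x^+ - y}$ will be handled via the approximate optimality condition defining $T_\eta^{\epsopt}(x; h)$ together with the gradient oracle error, while $\abr{\nabla f(x), x - x^+}$ will be bounded by $\norm{\nabla f(x)}_* \cdot \norm{x-x^+}$ and a conversion from $\norm{x - x^+}$ to the gradient mapping dual norm using strong convexity of $h$.

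First I would unpack the definition \cref{def:bregman_prox_approx}: since $\nabla_1 B_h(x^+, x) = \nabla h(x^+) - \nabla h(x) = -\eta G_\eta(x, x^+; h)$, the approximate optimality condition rearranges to
\[
\abr{\hgrad{f}{x}, x^+ - z} \leq \abr{G_\eta(x, x^+; h), x^+ - z} + \epsopt \quad \forall z \in \cX.
\]
Taking $z = y$, using $\abr{\nabla f(x) - \hgrad{f}{x}, x^+ - y} \leq \epsgrad \norm{x^+ - y} \leq \epsgrad D$ and $\abr{G_\eta(x, x^+; h), x^+ - y} \leq D \norm{G_\eta(x, x^+; h)}_*$ via the diameter bound, we get
\[
\abr{\nabla f(x), x^+ - y} \leq D\norm{G_\eta(x, x^+; h)}_* + \epsgrad D + \epsopt.
\]

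Second, for $\abr{\nabla f(x), x - x^+}$, I would apply H\"older to get $\abr{\nabla f(x), x - x^+} \leq M \norm{x - x^+}$, and then translate $\norm{x - x^+}$ into the gradient mapping. This uses $1$-strong convexity of $h$: from the monotonicity inequality
\[
\norm{x^+ - x}^2 \leq \abr{\nabla h(x^+) - \nabla h(x), x^+ - x} \leq \norm{\nabla h(x^+) - \nabla h(x)}_* \, \norm{x^+ - x},
\]
we obtain $\norm{x - x^+} \leq \eta \norm{G_\eta(x, x^+; h)}_*$ (this is presumably the content of the referenced helper \cref{lem:breg_to_gm}). Hence $\abr{\nabla f(x), x - x^+} \leq \eta M \norm{G_\eta(x, x^+; h)}_*$. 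Summing the two bounds yields the claim.

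There is essentially no hard step: everything is a one-line application of either the diameter bound, Cauchy--Schwarz/H\"older, the oracle error, or strong convexity of $h$. The only subtlety to be careful about is sign conventions when unwrapping $\nabla_1 B_h$ in the approximate optimality condition, and making sure that the $L_h$-Lipschitz assumption on $\nabla h$ is \emph{not} needed here (only $1$-strong convexity is used to bound $\norm{x - x^+}$ by $\eta \norm{G_\eta}_*$); the Lipschitz-gradient assumption is listed among the hypotheses but plays no role in this particular argument, consistent with the paper's remark that no Lipschitz gradient of $f$ is being invoked.
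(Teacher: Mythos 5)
Your proposal is correct and follows essentially the same route as the paper's proof: the same pivot decomposition through $x^+$, the same use of the approximate optimality condition plus the diameter and oracle-error bounds for the $\abr{\nabla f(x), x^+ - y}$ term, and the same reduction of $\norm{x - x^+}$ to $\eta\norm{G_\eta(x,x^+;h)}_*$ via \cref{lem:breg_to_gm} (which the paper proves through Bregman-divergence duality rather than your monotonicity argument, but both rest only on $1$-strong convexity). Your observation that the $L_h$-Lipschitz-gradient hypothesis is not actually used in this lemma is also accurate.
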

\begin{proof}
		By assumption, we have for all $y\in \cX$,
	\begin{align*}
		\abr{\hgrad{f}{x} - G_{\eta}(x, x^+;h), y - x^+} 
		&\geq -\epsopt
		\\
		\iff
		\abr{\nabla f(x), x^+ - y} 
		&\leq 
		\abr{G_{\eta}(x, x^+;h), y - x^+}
		+ 
		\abr{\nabla f(x) - \hgrad{f}{x}, x^+ - y}
		+ \epsopt
		\\
		&\leq 
		\norm{G_{\eta}(x, x^+;h)}_* D + \epsgrad D
		+ \epsopt
		.
	\end{align*}
	Further,
	\begin{align*}
		\abr{\nabla f(x), x - y}
		&= 
		\abr{\nabla f(x), x^+ - y}
		+
		\abr{\nabla f(x), x - x^+}
		\\
		&\leq
		\norm{G_{\eta}(x, x^+;h)}_* D + \epsgrad D + \epsopt
		+ \abr{\nabla f(x), x - x^+}
		\\
		&\leq
		\norm{G_{\eta}(x, x^+;h)}_* D + \epsgrad D + \epsopt
		+ M \norm{x - x^+}
		\\
		&\leq
		\norm{G_{\eta}(x, x^+;h)}_* D + \epsgrad D + \epsopt
		+ \eta M \norm{G_\eta(x, x^+;h)}_*
		\tag{\cref{lem:breg_to_gm}}
		\\
		&\leq
		\br{D + \eta M}\norm{G_{\eta}(x, x^+;h)}_*  + \epsgrad D + \epsopt
		,
	\end{align*}
    which completes the proof.
\end{proof}

    \begin{lemma}\label{lem:breg_to_gm}
        For any norm $\norm{\cdot}$,
        and any $x, x^+\in \cX$, we have
        $
    	\norm{x-x^+}
    	\leq 
    	\eta \norm{G_{\eta}(x, x^+;h)}_*.
    	$
    \end{lemma}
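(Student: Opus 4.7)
The proof reduces to the standard fact that a $1$-strongly convex regularizer has a ``reverse Lipschitz'' property for its gradient. Unpacking the definition $G_\eta(x,x^+;h) = \eta^{-1}(\nabla h(x) - \nabla h(x^+))$, the claim is equivalent to
\[
\norm{x - x^+} \;\le\; \norm{\nabla h(x) - \nabla h(x^+)}_*,
\]
so the $\eta$ cancels and no step-size arguments are needed. I would establish this as a short, self-contained inequality using strong convexity of $h$ (which is part of the standing assumption on $h$ wherever this lemma is invoked in the preceding proofs).

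\paragraph{Key steps.} First, I would handle the trivial case $x = x^+$ separately, where both sides vanish. For the remaining case, I would write the symmetric Bregman identity
\[
B_h(x, x^+) + B_h(x^+, x) \;=\; \langle \nabla h(x) - \nabla h(x^+), \, x - x^+ \rangle,
\]
which follows directly from the definition of $B_h$ in \cref{def:bregman_divergenvce}. Next, I would lower bound the left-hand side using $1$-strong convexity of $h$ w.r.t.\ $\norm{\cdot}$, which gives $B_h(x, x^+) + B_h(x^+, x) \ge \norm{x - x^+}^2$, and upper bound the right-hand side by duality (H\"older/generalized Cauchy--Schwarz):
\[
\langle \nabla h(x) - \nabla h(x^+), \, x - x^+ \rangle \;\le\; \norm{\nabla h(x) - \nabla h(x^+)}_* \cdot \norm{x - x^+}.
\]

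\paragraph{Conclusion.} Combining these bounds yields $\norm{x - x^+}^2 \le \norm{\nabla h(x) - \nabla h(x^+)}_* \cdot \norm{x - x^+}$, and dividing by $\norm{x - x^+} > 0$ gives the reverse-Lipschitz inequality. Substituting back $\eta G_\eta(x,x^+;h) = \nabla h(x) - \nabla h(x^+)$ yields the claim. There is no real obstacle here; the only care needed is to (i) isolate the degenerate case, and (ii) make explicit that we are invoking the $1$-strong convexity of $h$ carried over from the callers (\cref{lem:prox_descent,lem:prox_optcond}), since the lemma as stated does not repeat this hypothesis.
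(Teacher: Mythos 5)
Your proof is correct, but it takes a genuinely different route from the paper's. The paper proves the inequality by passing through the convex conjugate: it cites the identity $B_h(u,v)=B_{h^*}(\nabla h(u),\nabla h(v))$ together with the fact that $1$-strong convexity of $h$ w.r.t.\ $\norm{\cdot}$ is dual to $1$-smoothness of $h^*$ w.r.t.\ $\norm{\cdot}_*$, yielding the chain $\tfrac12\norm{u-v}^2\leq B_h(u,v)=B_{h^*}(\nabla h(u),\nabla h(v))\leq \tfrac12\norm{\nabla h(u)-\nabla h(v)}_*^2$, from which the claim follows by the definition of $G_\eta$. Your argument instead symmetrizes the Bregman divergence, $B_h(x,x^+)+B_h(x^+,x)=\abr{\nabla h(x)-\nabla h(x^+),\,x-x^+}$, lower-bounds the left side by $\norm{x-x^+}^2$ via strong convexity, upper-bounds the right side by generalized Cauchy--Schwarz, and divides. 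This is the standard "strongly convex gradients are expansive" argument; it is more elementary and fully self-contained, since it never invokes $h^*$ or the conjugate-Bregman identity (which requires some care about Legendre-type regularity to state cleanly), at the cost of being a few lines longer than the paper's one-display citation. Both proofs rely on the $1$-strong convexity of $h$, which the lemma statement omits but the callers (\cref{lem:prox_descent,lem:prox_optcond}) supply; you are right to flag that this hypothesis should be made explicit, and the same caveat applies to the paper's own proof.
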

    \begin{proof}
        For any $u,v$ it holds that (see e.g., \citealp{hiriart2004fundamentals}), 
    \begin{align*}
    	\frac12\norm{u-v}^2 \leq B_h(u, v)
    	= B_{h^*}(\nabla h(u), \nabla h(v))
    	\leq 
    	\frac12\norm{\nabla h(u) - \nabla h(v)}_*^2.
    \end{align*}
    The result now follows by the definition of $G_\eta(x, x^+; h)$.
\end{proof}

\subsection{Proof of \cref{thm:opt_main}}
We begin by establishing the objective satisfies a weak gradient mapping domination condition similar (but not identical, due to the differences mentioned above) to that considered in  \citet{xiao2022convergence}. 
\begin{definition}\label{def:weakgm_local}
	 We say that $f\colon \cX \to \R$ satisfies a weak gradient mapping domination condition w.r.t.~a local regularizer $R$ 
	 	if there exist $\delta, \omega > 0$ such that for all $x\in \cX$:
	 \begin{align*}
	 	\norm{G_{\eta}(x, x^+; h)}_{x}^*
	 	\geq \sqrt{2\omega}(f(x) - f^\star - \delta)
	 \end{align*}
\end{definition}
The lemma below establishes our objective function satisfies \cref{def:weakgm_local} with a suitable choice of parameters.
\begin{lemma}\label{lem:weakgm}
    Suppose that $f$ is $(C_\star, \epsvgd)$-VGD as per \cref{def:gradient_dominance}, and that $f^\star \eqq \min_{x\in \cX}f(x) > -\infty$. 
    Assume further that $R_x$ is $1$-strongly convex and has an $L$-Lipschitz gradient w.r.t. $\norm{\cdot}_x$ for all $x\in \cX$. Then, we have the following weak gradient mapping domination condition; 
    for all $x\in \cX, x^+\in T_{\eta}^{\epsopt}(x; R_x)$:
	\begin{align*}
		\norm{G_{\eta}(x, x^+; R_x)}_x^*
		\geq \sqrt{2\omega}\br{f(x) - f^\star - \delta},
	\end{align*}
	for $\omega\eqq \frac12\br{C_\star(D + \eta M)}^{-2}$, 
	$\delta\eqq \epsvgd + \epsopt C_\star + \epsgrad C_\star D$.
\end{lemma}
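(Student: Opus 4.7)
The plan is to combine the VGD assumption with \cref{lem:prox_optcond} in a direct two-step chain. Since $R_x$ satisfies the hypotheses of \cref{lem:prox_optcond} w.r.t.\ the norm $\norm{\cdot}_x$ (it is $1$-strongly convex with $L$-Lipschitz gradient, and the diameter $D$ and gradient bound $M$ are assumed in the theorem's hypotheses, transferred to this lemma via the same bounds), and since $x^+ \in T_{\eta}^{\epsopt}(x; R_x)$, \cref{lem:prox_optcond} directly yields, for every $y\in \cX$,
\begin{align*}
  \abr{\nabla f(x), x - y}
  \leq (D + \eta M)\norm{G_{\eta}(x, x^+; R_x)}_x^* + \epsgrad D + \epsopt.
\end{align*}

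The second step is to take the maximum over $y=\tilde x \in \cX$ on the left, and insert this into the VGD inequality. Since $f$ is $(C_\star, \epsvgd)$-VGD, we have
\begin{align*}
  f(x) - f^\star
  \leq C_\star \max_{\tilde x \in \cX}\abr{\nabla f(x), x - \tilde x} + \epsvgd
  \leq C_\star\br[b]{(D + \eta M)\norm{G_{\eta}(x, x^+; R_x)}_x^* + \epsgrad D + \epsopt} + \epsvgd.
\end{align*}
Rearranging gives
\begin{align*}
  \norm{G_{\eta}(x, x^+; R_x)}_x^*
  \geq \frac{1}{C_\star(D+\eta M)}\br[b]{f(x) - f^\star - \epsvgd - C_\star \epsopt - C_\star \epsgrad D},
\end{align*}
which is exactly the claim with $\sqrt{2\omega} = 1/(C_\star(D+\eta M))$ and $\delta = \epsvgd + C_\star \epsopt + C_\star \epsgrad D$.

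There is no serious obstacle here; the lemma is essentially a bookkeeping step that repackages \cref{lem:prox_optcond} through the VGD condition. The only subtlety worth being careful about is making sure the norm in \cref{lem:prox_optcond} is instantiated as the local norm $\norm{\cdot}_x$ (matching the local regularizer $R_x$ and the local smoothness/diameter/gradient bounds assumed), so that the resulting lower bound on the gradient mapping is stated in the correct local dual norm $\norm{\cdot}_x^*$ needed for the subsequent descent argument via \cref{lem:prox_descent}.
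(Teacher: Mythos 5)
Your proof is correct and follows exactly the same route as the paper's: apply \cref{lem:prox_optcond} with the local norm $\norm{\cdot}_x$ and $h=R_x$, combine with the VGD inequality, and rearrange, noting that $\sqrt{2\omega}=1/(C_\star(D+\eta M))$. The bookkeeping of $\delta$ and $\omega$ matches the paper's statement precisely.
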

\begin{proof}
	Let $x\in \cX$, and
	apply \cref{lem:prox_optcond} with $\norm{\cdot}=\norm{\cdot}_x$ and $h=R_x$, to obtain:
	\begin{align*}
		\forall y\in \cX:
		\abr{\nabla f(x), x - y} 
		\leq (D+\eta M)\norm{G_{\eta}(x, x^+; R_x)}_x^*
            +\epsgrad D + \epsopt
		.
	\end{align*}
	Further, since $f$ is $(C_\star,\epsvgd)$-VGD, we have
	\begin{align*}
            \max_{y \in \cX}\abr{\nabla f(x), x - y}
        \geq \frac1{C_\star}\br{f(x) - f^\star - \epsvgd} .
        \end{align*}
	Combining both inequalities, the result follows.
\end{proof}
We are now ready to prove \cref{thm:opt_main}.
\begin{proof}[Proof of \cref{thm:opt_main}]
	In the sake of notational clarity, define:
	\begin{align}
		\cG_k \eqq \norm{G_\eta(x_k, x_{k+1}; R_{x_k})}^*_{x_k}.
	\end{align}
We begin by applying \cref{lem:prox_descent} for every $k \in [K]$ with 
$\norm{\cdot}=\norm{\cdot}_{x_k}$ and $h=R_{x_k}$, which implies,
\begin{align}\label{eq:opt_descent_main}
    f(x_{k+1}) - f(x_k)
    \leq  
    -\frac{\eta}{2L^2}\cG_k^2
        +\eta \epsgrad \cG_k
        +\epsopt.
\end{align}
Let us first assume that for all $k\in [K]$:
\begin{align}\label{eq:opt_small_grad_error}
    8L^2\epsgrad + \frac{4 L}{\sqrt\eta} \sqrt{\epsopt} \leq \cG_k.
\end{align}
Then \cref{eq:opt_descent_main} along with  \cref{lem:weakgm} gives
	\begin{align*}
		f(x_{k+1}) - f(x_k)
        \leq 
	-\frac{\eta}{4L_h^2}\cG_k^2
        \leq -\frac{\eta \omega}{ 4L^2}\br{f(x_k) - f^\star - \delta}^2,
	\end{align*}
	with $\omega\eqq \frac12\br{C_\star (D + \eta M)}^{-2}$ 
	and $\delta\eqq \epsvgd + \epsopt C_\star + \epsgrad C_\star D$.
    We proceed to define $E_k \eqq f(x_k) - f^\star$, and note that the above display implies $E_{k+1} \leq E_k$.
    Hence, we may assume that $E_k \geq 2\delta$ for all $k\in [K]$, otherwise the claim holds trivially.
    With this in mind, we now have,
	\begin{align*}
		E_{k+1} - E_k
		&\leq - \frac{\eta \omega}{4 L^2} (E_k - \delta)^2
            \leq 
            - \frac{\eta \omega}{16 L^2} E_k^2.
	\end{align*}
        Dividing both sides by $E_k E_{k+1}$ yields
	\begin{align*}
		\frac1{E_k} - \frac1{E_{k+1}}
		&\leq - \frac{\eta \omega}{16 L^2} \frac{E_k}{E_{k+1}}.
	\end{align*}
	Summing over $k=1, \ldots, K$ and telescoping the sum on the LHS, we obtain
	\begin{align*}
		\frac1{E_1} - \frac1{E_{K+1}}
		&\leq - \frac{\eta \omega}{16 L^2} \sum_{k=1}^K\frac{E_k}{E_{k+1}}
		\\
		\iff
		E_{K+1} - E_1
		&\leq - \frac{\eta \omega}{16 L^2} \br{E_{K+1}E_1}\sum_{k=1}^K\frac{E_k}{E_{k+1}}
		\leq - \frac{\eta \omega}{16 L^2} \br{E_{K+1}E_1}K,
	\end{align*}
	where the last inequality follows from the descent property $E_{k+1}\leq E_k$. Rearranging, we now have	
		\begin{align*}
			0\leq E_{K+1}
			&\leq E_1 \br{1- \frac{\eta \omega}{16 L^2} E_{K+1}K}
			\\
			\implies
			E_{K+1} &\leq \frac{16 L^2}{\eta \omega  K}
			=\frac{32 C_\star^2 L^2 \br{D + \eta M}^2}{\eta K},
		\end{align*}
        which completes the proof for the case that \cref{eq:opt_small_grad_error} holds for all $k\in [K]$.
        Assume now that this is not the case, and let $k_0\in [K]$ be the last iteration such that 
        \begin{align*}
            \cG_{k_0}
            < 8L^2\epsgrad + \frac{4 L}{\sqrt\eta} \sqrt{\epsopt}.
        \end{align*}
        Then by \cref{lem:prox_optcond},
        \begin{align*}
            E_{k_0} \leq 
            (D + \eta M)\cG_{k_0}
            + \epsgrad D + \epsopt 
            \leq 
            8(D + \eta M) \br{L^2 \epsgrad + L\sqrt{\epsopt/\eta}}
            + \epsgrad D + \epsopt,
        \end{align*}
        and therefore by \cref{eq:opt_descent_main},
        \begin{align*}
            E_{k_0+1} \leq E_{k_0} + \eta \epsgrad \cG_{k_0}
            = O\br{ (D + \eta M)\br{L^2 \epsgrad + L\sqrt{\epsopt/\eta}} }.
        \end{align*}
        Now, if $k_0 = K$ we are done. Otherwise, by the definition of $k_0$ we have that \cref{eq:opt_small_grad_error} holds for all $k\in [k_0+1, K]$, hence $E_{k+1} \leq E_k$ for all $k\geq k_0+1$. This implies that $E_{K+1} \leq E_{k_0 + 1}$, which completes the proof.
\end{proof}

\subsection{Convergence to stationary point without a VGD condition}
\label{sec:prox_convergence_stationary_point}
In this section, we include a proof that the proximal point algorithm we consider converges to a stationary point, also without assuming a VGD condition. The proof follows from standard arguments and is given for completeness; for simplicity, we provide analysis only for the error free case.
As an implication, we have that PMD converges to a stationary point in any MDP; this follows by combining the below theorem with \cref{lem:pmd_main} and \cref{lem:value_local_smoothness}, and proceeding with an argument similar to that of \cref{thm:pmd_main}.

\begin{theorem}\label{thm:opt_stationary_point}
    Suppose that $f^\star \eqq \min_{x\in \cX}f(x) > -\infty$, and assume:
    \begin{enumerate}[label=(\roman*)]
        \item The local regularizer $R_x$ is $1$-strongly convex and has an $L$-Lipschitz gradient w.r.t.~$\norm{\cdot}_x$ for all $x\in \cX$.
        
        \item For all $x\in \cX$, 
        $\max_{u,v\in \cX}\norm{u - v}_x\leq D$, and
        $\norm{\nabla f(x)}^*_x \leq M$.
        
        \item $f$ is $\beta$-locally smooth w.r.t.~$x \mapsto \norm{\cdot}_{x}$. 
    \end{enumerate}
	Consider an exact version of the proximal point algorithm \cref{eq:alg_opt_omd} with $\eta=1/(2\beta)$ where $\epsgrad=0$ and $x^{k+1}=T_\eta(x_k; R_{x_k})$ for all $k$.
	Then, after $K$ iterations, there exists $k^\star\in [K]$
	such that:
    \begin{align*}
    	\forall y\in \cX, \quad \abr{\nabla f(x_{k^\star}), y - x_{k^\star}}
    	\geq - \frac{2 (D+ \eta M)L\sqrt{\beta\br{f(x_1) - f(x^\star)}}}{\sqrt{K}},
    \end{align*}
\end{theorem}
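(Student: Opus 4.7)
The plan is to combine the per-iteration descent guarantee of \cref{lem:prox_descent} with the first-order optimality bound of \cref{lem:prox_optcond}, both specialized to the error-free setting ($\epsgrad = 0$, $\epsopt = 0$). Define
\begin{align*}
    \cG_k \eqq \norm{G_\eta(x_k, x_{k+1}; R_{x_k})}^*_{x_k}.
\end{align*}
Applying \cref{lem:prox_descent} at each iterate with $h = R_{x_k}$ and $\norm{\cdot} = \norm{\cdot}_{x_k}$ (the local smoothness and strong convexity hypotheses being exactly those of \cref{thm:opt_stationary_point}(i),(iii)), the first-order and optimization error terms vanish and I obtain the clean descent inequality
\begin{align*}
    f(x_{k+1}) \leq f(x_k) - \frac{\eta}{2L^2}\,\cG_k^2 .
\end{align*}

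Next, I telescope this inequality over $k = 1, \ldots, K$ and use $f(x_{K+1}) \geq f^\star$ to obtain
\begin{align*}
    \sum_{k=1}^K \cG_k^2 \leq \frac{2L^2\bigl(f(x_1) - f^\star\bigr)}{\eta} ,
\end{align*}
so that picking $k^\star \in \argmin_{k\in[K]} \cG_k$ yields $\cG_{k^\star} \leq L \sqrt{2(f(x_1) - f^\star)/(\eta K)}$. Finally, invoking \cref{lem:prox_optcond} at $x_{k^\star}$ (again with $\epsgrad = \epsopt = 0$) translates smallness of the gradient mapping into near-stationarity: for every $y \in \cX$,
\begin{align*}
    \abr{\nabla f(x_{k^\star}), y - x_{k^\star}} \geq -(D + \eta M)\,\cG_{k^\star} .
\end{align*}
Plugging in the bound on $\cG_{k^\star}$ and substituting $\eta = 1/(2\beta)$ (which converts $\sqrt{1/\eta}$ into $\sqrt{2\beta}$) gives the stated $(D+\eta M)L\sqrt{\beta(f(x_1)-f^\star)}/\sqrt{K}$ rate up to the claimed constants.

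I do not expect any real obstacle here: both of the required ingredients are already in place from \cref{sec:opt_analysis}, and the argument is a standard ``pigeonhole + telescoping'' convergence analysis of a proximal gradient method to a first-order stationary point, but carried out w.r.t.\ the iterate-dependent local norm $\norm{\cdot}_{x_k}$. The only mild subtlety is that the quantity controlled by descent ($\cG_k$) and the quantity appearing in optimality conditions are measured with respect to the \emph{same} local norm at $x_k$, which is exactly why \cref{lem:prox_descent,lem:prox_optcond} can be chained without losing factors; this alignment is automatic in our setup.
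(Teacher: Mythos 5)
Your proposal is correct and follows essentially the same route as the paper's own proof: the same clean descent inequality from \cref{lem:prox_descent} with zero errors, the same telescoping-plus-pigeonhole selection of $k^\star=\argmin_k \cG_k$, and the same invocation of \cref{lem:prox_optcond} to convert the small gradient mapping into approximate stationarity, with $\eta=1/(2\beta)$ yielding the stated constant. No gaps.
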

\begin{proof}
In the sake of notational clarity, define:
	\begin{align*}
		\cG_k \eqq \norm{G_\eta(x_k, x_{k+1}; R_{x_k})}^*_{x_k}.
	\end{align*}
We begin by applying \cref{lem:prox_descent} for every $k \in [K]$ with 
$\norm{\cdot}=\norm{\cdot}_{x_k}$ and $h=R_{x_k}$, which implies,
\begin{align}
    f(x_{k+1}) - f(x_k)
    \leq  
    -\frac{\eta}{2L^2}\cG_k^2.
\end{align}
	Now,
    \begin{align*}
        f(x_{K+1}) - f(x_1)
        = 
        \sum_{k=1}^K
        f(x_{k+1}) - f(x_k)
        \leq - \frac{\eta}{2 L^2}\sum_{k=1}^K \cG_k^2,
    \end{align*}
    thus, rearranging and bounding $f(x_{K+1}) \geq f(x^\star)$ gives
    \begin{align*}
        \frac1K\sum_{t=1}^T \cG_k^2
        \leq 
        \frac{2 L^2\br{f(x_1) - f(x^\star)}}{\eta K}.
    \end{align*}
    Hence, it must hold for $k^\star\eqq \argmin_{k}\cG_k^2$; 
    \begin{align*}
	\cG_{k^\star}^2
      \leq  \frac{2 L^2\br{f(x_1) - f(x^\star)}}{\eta K}.
    \end{align*}
    We now apply \cref{lem:prox_optcond} to conclude,
    \begin{align*}
    	\forall y\in \cX, \quad \abr{\nabla f(x_{k^\star}), x_{k^\star} - y}
    	\leq \frac{ (D+ \eta M)L\sqrt{2\br{f(x_1) - f(x^\star)}}}{\sqrt{\eta K}},
    \end{align*}
    which implies the required result.
    
\end{proof}

\section{Policy Classes with Dual Parametrizations}
\label{sec:dual_pc}
In general, solving the following OMD problem in some state $s\in \cS$,
\begin{align}\label{eq:dualpc_1}
	\pi^{k+1}_s \gets \argmin_{p\in \Delta(\cA)} \abr{Q^k_s, p} + \frac1\eta B_R(p, \pi^k_s)
\end{align}
is equivalent to the following two updates:
\begin{align*}
	\nabla R(\tilde \pi_s^{k+1}) &\gets \nabla R(\pi^k_s) - \eta Q^k_s
	\\
	\pi^{k+1}_s &= \Pi_{\Delta(\cA)}^R\br{\tilde \pi_s^{k+1}}.
\end{align*}
Let us denote the composition of the dual-to-primal mirror-map and the projection:
$$
	P_R(y) \eqq \Pi_{\Delta(\cA)}^R\br{\nabla R^*(y)},
$$
and note that
$$
	\pi_s^{k+1} = P_R(\nabla R(\tilde \pi_s^{k+1})).
$$

When we are in a non-tabular setup and have a non-complete policy class $\Pi\neq \Delta(\cA)^\cS$, we cannot update each state independently according to \cref{eq:dualpc_1}.
There are however a number of places we can "intervene" in the policy class representation to derive slightly different update procedures based on the dual variables. The PMD step in its general form is given by:
\begin{align}\label{eq:dualpc_2}
	\pi^{k+1} \gets \argmin_{\pi \in \Pi} \E_{s\sim\mu^k}\sbr{\abr{Q^k_s, \pi_s} + \frac1\eta B_R(\pi_s, \pi^k_s)}
\end{align}

Without making any assumptions regarding the parametric form of $\Pi$, we cannot decompose \cref{eq:dualpc_2} into meaningful dual space steps. We discuss next two types of policy class parameterizations and the update steps associated with them.

\subsection{Generic dual parameterizations}
This is the approach taken in \cite{alfano2023novel} (see also the followup \citealp{xiong2024dual}), and perhaps the most general one that allows for an explicit dual space update as well as leads to an approximate solution of \cref{eq:dualpc_2} that satisfies approximate optimality conditions in the complete-class setting. Consider a parametric function class $\cF_\Theta \eqq \cbr{f_\theta \in \R^{SA} \mid \theta \in \Theta}$, and the policy class:
$$
	\Pi(\cF) \eqq \cbr{ \pi^f \mid  f \in \cF_\Theta},
	\quad \text{ where } \pi^f_s \eqq P_R(f_s),\;\forall s\in \cS.
$$
Then, to solve \cref{eq:dualpc_2} we can proceed by:
\begin{align*}
	f^{k+1} &\gets \argmin_{f \in \cF}\E_{s\sim \mu^k}\sbr{
		\norm{f_s - \nabla R(\pi^k_s) - \eta Q^k_s}_2^2}
	\\
	\pi^{k+1} &\gets \text{ the policy defined by }	\pi^{k+1}_s = P_R(f^{k+1}_s)
	\tag{A}
\end{align*}
\subsection{The log-linear policy class}
This is a special case of the one discussed in the previous sub-section. In general, when we try to approximate the true solution of the unconstrained mirror descent step in a specific state:
$$
	f_s \approx \nabla R(\pi^k_s) - \eta Q_s^k,
$$
we need to overcome two sources of error; one from the previous policy dual variable and one from the $Q$ function. More specifically, in general we have $\nabla R(\pi^k) \notin \cF$ and $Q^k \notin \cF$. (For $\pi\in \R^{SA}$ we define $\nabla R(\pi)_s \eqq \nabla R(\pi_s)$.)
In the special case that our function class $\cF$ can represent $\nabla R(\pi)$ perfectly for all $\pi\in \Pi(\cF)$ and is closed to linear combinations, we can focus our attention on approximating the $Q$ function. Now, we may proceed by the following special case of $(A)$:
\begin{align*}
	\widehat Q^k &\gets \argmin_{\widehat Q \in \cF}\E_{s, a\sim \mu^k}\sbr{
		\br{\widehat Q_{s, a} - Q^k_{s, a}}^2}
	\\
	f^{k+1} &\gets \nabla R(\pi^k) - \eta \widehat Q^k
	\\
	\pi^{k+1} &\gets \text{ the policy defined by }	\pi^{k+1}_s = P_R(f^{k+1}_s).
	\tag{B}
\end{align*}
Let $\phi_{s, a} \in \R^p$ be given feature vectors, and let $\phi_{s} \eqq [\phi_{s, a_1} \cdots \phi_{s, a_A}]\in \R^{p \times A}$, and consider
the log-linear policy class:
\begin{align*}
	\Pi &\eqq \cbr{\pi^\theta \mid \theta \in \R^p}
	\\
	 \text{ where }\forall s\in \cS,\; \pi^\theta_s &\eqq P_R(\phi_s\T\theta)
	 = \frac{e^{\phi_s\T \theta}}{\sum_{a}e^{\phi_{s, a}\T \theta}}.
\end{align*}
Note that:
\begin{enumerate}
    \item This is the class $\Pi(\cF)$ for $\cF = \cbr{\theta \mapsto \br{(s, a) \mapsto \phi_{s,a}\T \theta}}$.
    \item This is precisely a case where $\cF$ can model $\nabla R(\pi)$ if $R$ is the negative entropy regularizer. 
\end{enumerate}
Here, we may proceed as follows:
\begin{align*}
	w^{k} &\gets \argmin_{w\in \R^p}
		\E_{s, a \sim \mu^k}\sbr{\br{\phi_{s, a}\T w - Q_{s,a}^k}^2}
	\\
	\theta^{k+1} &\gets \theta^k - \eta w^k
	\\
	\pi^{k+1} &\gets \text{ the log-linear policy defined by } \theta^{k+1}
\end{align*}
The above can be seen as a special case of $(B)$, by considering the induced updates in state-action space:
\begin{align*}
\widehat Q^{k} &=
	\argmin_{\widehat Q \in \cF}
		\E_{s, a \sim \mu^k}\sbr{\br{\widehat Q_{s, a}  - Q_{s,a}^k}^2}
		= (s, a) \mapsto \phi_{s, a}\T w^k
	\\
	f^{k+1}_s
	&= \nabla R(\pi^k_s) - \eta \widehat Q^k_s
	\\
	&= \log(e^{\phi_s\T\theta^k}) - \log(Z^k_s)\boldsymbol 1 - \eta \widehat Q^k_s
	\\
	&= \phi_s\T\theta^k - \eta \widehat Q^k_s - \log(Z^k_s)\boldsymbol 1
	\\
	\pi^{k+1} &\gets  \text{ the policy defined by }\pi^{k+1}_s 
	= P_R(f^{k+1}_s) 
	= \frac{e^{\phi_s\T \theta^{k+1}}}{\sum_{a}e^{\phi_{s, a}\T \theta^{k+1}}}.
\end{align*}
Note that in the above,
$$
	f^{k+1}_{s, a} = \phi_{s, a}\T \theta^k - \eta \phi_{s, a}\T w^k - \log (Z^k_s),
$$
and that $Z^k_s$ is the same for all actions in $s$, hence makes no difference after the projection step..

\end{document}